\newcommand{\algoOurs}{\textsc{UserAlign}}
\newcommand{\algoOursLoss}{\ensuremath{\algoOurs{}_{\textsc{loss}}}}
\newtheorem{theorem}{Theorem}
\newtheorem{lemma}{Lemma}
\newtheorem{proposition}{Proposition}
\DeclareMathOperator*{\argmin}{arg\,min}
\DeclareMathOperator*{\argmax}{arg\,max}
\newcommand\norm[1]{\left\lVert#1\right\rVert}
\newcommand{\bcc}[1]{\left\{{#1}\right\}}
\newcommand{\brr}[1]{\left({#1}\right)}
\newcommand{\bss}[1]{\left[{#1}\right]}
\newcommand{\abs}[1]{\left\vert#1\right\vert}
\newcommand{\Prob}[1]{\mathbb{P}\bss{{#1}}}
\newcommand{\LINECOMMENT}[1]{\STATE {\color{brown}\ttfamily\small \(\triangleright\) #1}}
\newcommand*\circled[1]{\tikz[baseline=(char.base)]{
            \node[shape=circle,draw=gray,inner sep=1.2pt, font=\footnotesize\bfseries, fill=gray, text=white] (char) {#1};}
    }
\title{Inference-Time Personalized Alignment \\ with a Few User Preference Queries}}
    \title{Inference-Time Personalized Alignment \\ with a Few User Preference Queries \\ \textnormal{Supplementary Material}}
\author{%
  Victor-Alexandru P{\u a}durean\ \\
  MPI-SWS\\
  \texttt{vpadurea@mpi-sws.org} \\
  \And
  Parameswaran Kamalaruban \\
  Featurespace Innovation Lab, Visa \\ 
  \texttt{kaparame@visa.com} \\
  \And
  Nachiket Kotalwar \\
  Carnegie Mellon University \\
  \texttt{nkotalwa@cs.cmu.edu} \\
  \And
  Alkis Gotovos \\
  MPI-SWS \\
  \texttt{agkotovo@mpi-sws.org} \\
  \And
  Adish Singla \\
  MPI-SWS \\
  \texttt{adishs@mpi-sws.org} \\
}
\begin{document}

\maketitle

\iftoggle{MainSuppContent}{
\begin{abstract}
We study the problem of aligning a generative model's response with a user's preferences. Recent works have proposed several different formulations for personalized alignment; however, they either require a large amount of user preference queries or require that the preference be explicitly specified as a text input. In this paper, we propose a novel inference-time personalized alignment method, \algoOurs{}, that elicits the user's preferences with a few queries as pairwise response comparisons. In particular, \algoOurs{} builds on the theoretical framework of best-arm identification in logistic bandits and selects a personalized response from a fixed pool of the model's generated responses. The key idea is to consider the user's feedback consistent and noise-free, and incorporate it into the theoretical framework to identify the best response quickly. Experimental results across several tasks,  involving personalized text and image generation, showcase the effectiveness of \algoOurs{} in achieving personalized alignment.
\end{abstract}

%
\section{Introduction}\label{sec:intro}

Generative models have demonstrated remarkable capabilities across language and vision tasks, yet aligning their outputs with human preferences remains a central challenge~\cite{DBLP:journals/corr/abs-2307-12966,DBLP:journals/tmlr/CasperDSGSRFKLF23}. While population-level alignment methods such as Reinforcement Learning from Human Feedback (RLHF)~\cite{DBLP:conf/nips/ChristianoLBMLA17,DBLP:journals/corr/abs-1909-08593,DBLP:conf/nips/Ouyang0JAWMZASR22} and Direct Preference Optimization (DPO)~\cite{DBLP:conf/nips/RafailovSMMEF23,DBLP:journals/corr/abs-2402-01306,DBLP:conf/nips/0001X024} have made significant strides, practical applications often demand personalization. Users exhibit highly individual tastes and requirements, from stylistic writing preferences to visual aesthetics to lifestyle preferences, which generic alignment cannot fully capture. Consequently, our key research question is: \emph{How can we align a generative model’s response to a specific user on the fly, where the user’s preferences need to be elicited with limited interaction?}

Recent efforts toward personalized alignment have explored both training-time and inference-time strategies, each with drawbacks when query budgets are small. Training-time personalization approaches fine-tune models on user-specific data but typically rely on extensive preference annotations~\cite{DBLP:journals/corr/abs-2405-00254,DBLP:journals/corr/abs-2402-05133}. Inference-time methods offer more flexibility by adapting model outputs at deployment; however, many require users to articulate their preferences as explicit text prompts~\cite{DBLP:conf/iclr/LinRLDSCB024,DBLP:journals/corr/abs-2402-06147}, which can be cognitively demanding and imprecise for complex tastes. Even theoretically grounded active learning and bandit-based methods for modeling latent reward functions demand a lot of pairwise comparisons to converge to a reliable preference estimate~\cite{DBLP:conf/icml/YueJ09,DBLP:conf/rss/SadighDSS17}, making them impractical for real-world, low-interaction settings.

%
\tcbset{
    question/.style={
        colback=gray!2,
        colframe=gray,
        top=0.2mm,
        bottom=0.2mm,
        left=0mm,
        right=0mm
    }
}

\begin{figure*}
    \centering
    \begin{subfigure}{1\textwidth}
        \input{figs/1_introduction/example_2/example_2_question}
        \label{fig:user-query}
    \end{subfigure}

    \vspace{-1mm}

    \begin{subfigure}{1\textwidth}
        \input{figs/1_introduction/example_2/example_2_generations}
        \label{fig:candidates}
    \end{subfigure}

    \vspace{-1mm}

    \begin{minipage}{0.78\textwidth}
        \begin{subfigure}{\textwidth}
            \noindent
\begin{tcolorbox}[colback=gray!2, colframe=gray, top=0.2mm, bottom=0.2mm, left=0.2mm, right=0.2mm]
\hspace{-1mm} \circled{3} \algoOurs{} eliciting user preferences via pairwise comparisons
\begin{center}
    \setlength{\fboxsep}{0pt}
    \setlength{\fboxrule}{2.5pt} 

    \begin{minipage}{0.30\textwidth}
    \centering
    \begin{tabular}{@{}c@{\hspace{0.5em}}c@{}}
        \parbox[c]{1em}{\centering\textcolor{black}{\small \textit{t=}$1$}} & 
        \parbox[c]{\dimexpr0.9\linewidth}{%
        \includegraphics[width=0.38\textwidth]{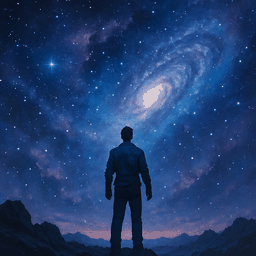}%
        \hspace{0.02\textwidth}%
        \fcolorbox{orange!90!white}{white}{\includegraphics[width=0.38\textwidth]{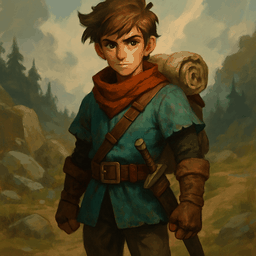}}%
        }
    \end{tabular}
    \end{minipage}
    \hfill
    \begin{minipage}{0.30\textwidth}
    \centering
    \begin{tabular}{@{}c@{\hspace{0.5em}}c@{}}
        \parbox[c]{1em}{\centering\textcolor{black}{\small \textit{t=}$2$}} & 
        \parbox[c]{\dimexpr0.9\linewidth}{%
        \fcolorbox{orange!90!white}{white}{\includegraphics[width=0.38\textwidth]{figs/1_introduction/example_2/images/none_13_gameimage512d_Q1.png}}%
        \hspace{0.02\textwidth}%
        \includegraphics[width=0.38\textwidth]{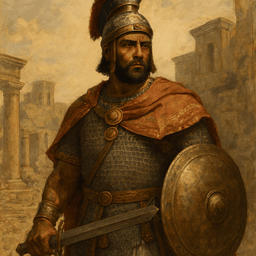}%
        }
    \end{tabular}
    \end{minipage}
    \hfill
    \begin{minipage}{0.30\textwidth}
    \centering
    \begin{tabular}{@{}c@{\hspace{0.5em}}c@{}}
        \parbox[c]{1em}{\centering\textcolor{black}{\small \textit{t=}$3$}} & 
        \parbox[c]{\dimexpr0.9\linewidth}{%
        \fcolorbox{orange!90!white}{white}{\includegraphics[width=0.38\textwidth]{figs/1_introduction/example_2/images/none_13_gameimage512d_Q1.png}}%
        \hspace{0.02\textwidth}%
        \includegraphics[width=0.38\textwidth]{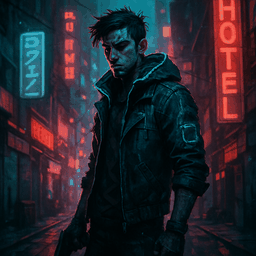}%
        }
    \end{tabular}
    \end{minipage}
    
    \vspace{2mm}
    \begin{minipage}{0.30\textwidth}
    \centering
    \begin{tabular}{@{}c@{\hspace{0.5em}}c@{}}
        \parbox[c]{1em}{\centering\textcolor{black}{\small \textit{t=}$4$}} & 
        \parbox[c]{\dimexpr0.9\linewidth}{%
        \fcolorbox{orange!90!white}{white}{\includegraphics[width=0.38\textwidth]{figs/1_introduction/example_2/images/none_13_gameimage512d_Q1.png}}%
        \hspace{0.02\textwidth}%
        \includegraphics[width=0.38\textwidth]{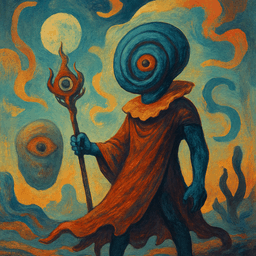}%
        }
    \end{tabular}
    \end{minipage}
    \hfill
    \begin{minipage}{0.30\textwidth}
    \centering
    \begin{tabular}{@{}c@{\hspace{0.5em}}c@{}}
        \parbox[c]{1em}{\centering\textcolor{black}{\small \textit{t=}$5$}} & 
        \parbox[c]{\dimexpr0.9\linewidth}{%
        \includegraphics[width=0.38\textwidth]{figs/1_introduction/example_2/images/none_13_gameimage512d_Q1.png}%
        \hspace{0.02\textwidth}%
        \fcolorbox{orange!90!white}{white}{\includegraphics[width=0.38\textwidth]{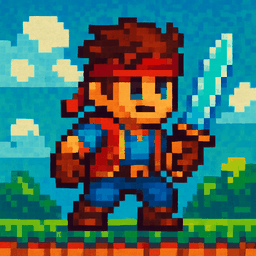}}%
        }
    \end{tabular}
    \end{minipage}
    \hfill
    \begin{minipage}{0.30\textwidth}
    \centering
    \begin{tabular}{@{}c@{\hspace{0.5em}}c@{}}
        \parbox[c]{1em}{\centering\textcolor{black}{\small \textit{t=}$6$}} & 
        \parbox[c]{\dimexpr0.9\linewidth}{%
        \fcolorbox{orange!90!white}{white}{\includegraphics[width=0.38\textwidth]{figs/1_introduction/example_2/images/RetroGamer_1_gameimage512d_Q1.png}}%
        \hspace{0.02\textwidth}%
        \includegraphics[width=0.38\textwidth]{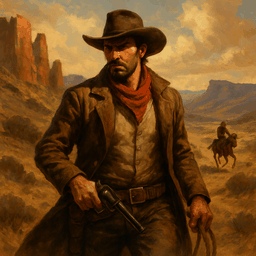}%
        }
    \end{tabular}
    \end{minipage}
    
    \vspace{2mm}
    \begin{minipage}{0.30\textwidth}
    \centering
    \begin{tabular}{@{}c@{\hspace{0.5em}}c@{}}
        \parbox[c]{1em}{\centering\textcolor{black}{\small \textit{t=}$7$}} & 
        \parbox[c]{\dimexpr0.9\linewidth}{%
        \fcolorbox{orange!90!white}{white}{\includegraphics[width=0.38\textwidth]{figs/1_introduction/example_2/images/RetroGamer_1_gameimage512d_Q1.png}}%
        \hspace{0.02\textwidth}%
        \includegraphics[width=0.38\textwidth]{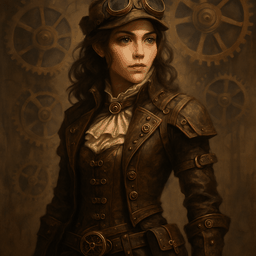}%
        }
    \end{tabular}
    \end{minipage}
    \hfill
    \begin{minipage}{0.30\textwidth}
    \centering
    \begin{tabular}{@{}c@{\hspace{0.5em}}c@{}}
        \parbox[c]{1em}{\centering\textcolor{black}{\small \textit{t=}$8$}} & 
        \parbox[c]{\dimexpr0.9\linewidth}{%
        \fcolorbox{orange!90!white}{white}{\includegraphics[width=0.38\textwidth]{figs/1_introduction/example_2/images/RetroGamer_1_gameimage512d_Q1.png}}%
        \hspace{0.02\textwidth}%
        \includegraphics[width=0.38\textwidth]{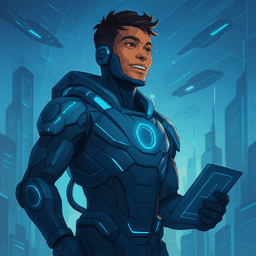}%
        }
    \end{tabular}
    \end{minipage}
    \hfill
    \begin{minipage}{0.30\textwidth}
    \centering
    \begin{tabular}{@{}c@{\hspace{0.5em}}c@{}}
        \parbox[c]{1em}{\centering\textcolor{black}{\small \textit{t=}$9$}} & 
        \parbox[c]{\dimexpr0.9\linewidth}{%
        \fcolorbox{orange!90!white}{white}{\includegraphics[width=0.38\textwidth]{figs/1_introduction/example_2/images/RetroGamer_1_gameimage512d_Q1.png}}%
        \hspace{0.02\textwidth}%
        \includegraphics[width=0.38\textwidth]{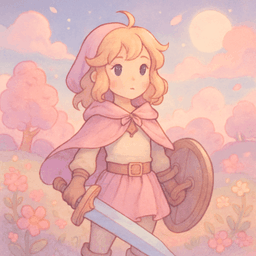}%
        }
    \end{tabular}
    \end{minipage}

\end{center}
\end{tcolorbox}
            \label{fig:refinement}
        \end{subfigure}
    \end{minipage}%
    \hfill
    \begin{minipage}{0.21\textwidth}
    \vspace{-2mm}
     \begin{tcolorbox}[colback=gray!2, colframe=gray, top=0.2mm, bottom=11mm, left=0mm, right=0mm]
            \circled{4} Output
            \\
            \\
            \\    
            \\
            \includegraphics[width=1\textwidth]{figs/1_introduction/example_2/images/RetroGamer_1_gameimage512d_Q1.png} 
          
          \end{tcolorbox}
    \end{minipage}

    \caption{\looseness-1An illustrative example showcasing inference-time personalized alignment methodology. Starting with a user question (Stage 1), the system generates a pool of responses (Stage 2). Then, \algoOurs{} iteratively collects user preferences (Stage 3) via pairwise comparisons to determine the most suitable response---user's preferred responses are {\color{orange}{highlighted}}. At the end, the final response is selected (Stage 4). The user in this example is simulated by the GPT-4o-mini model conditioned on the persona description ``\emph{NostalgicExplorer: A 36-year-old who grew up with classic platformers and adventure games. Loves timeless heroes with a sense of wonder and a hint of retro charm''}.
    }
    \vspace{-4.8mm}
    \label{fig.introduction}
\end{figure*}

In this paper, we introduce \algoOurs{}, a novel inference-time personalized alignment method that efficiently elicits user preferences through only a few pairwise comparisons among a fixed pool of candidate responses. Building on the best-arm identification framework in logistic bandits~\cite{DBLP:conf/nips/FilippiCGS10,DBLP:conf/nips/Abbasi-YadkoriPS11,DBLP:conf/aistats/AbeilleFC21,xu2018fully,lee2024improved}, \algoOurs{} maintains a loss-based confidence region over the user’s latent preference model and aggressively shrinks this region by treating each comparison as consistent and noise-free. By leveraging version-space elimination via intersecting halfspaces defined by the observed duels~\cite{DBLP:journals/ml/Littlestone87,DBLP:books/daglib/0033642}, our method rapidly isolates the best response without extensive querying.
Figure~\ref{fig.introduction} provides an illustrative example showcasing \algoOurs{}'s interaction with the user on an image generation task.
Our main results and contributions are summarized below:
\begin{itemize}[leftmargin=8pt]
\item We formulate the problem of inference-time personalized alignment with a particular focus on practical settings where the user's preferences need to be elicited with limited interaction. (Section~\ref{sec.problemsetup})
\item We develop a novel method, \algoOurs{}, theoretically grounded in the logistic bandits framework, and achieve fast alignment by modeling the user's feedback as consistent and noise-free. (Section~\ref{sec.method})
\item We demonstrate that \algoOurs{} can achieve fast personalized alignment with a few preference queries in personalized text and image generation, evaluated on both simulated and real users. We release our implementation and datasets to support further research.\footnote{Github repo: \href{https://github.com/machine-teaching-group/neurips2025-useralign}{https://github.com/machine-teaching-group/neurips2025-useralign}.} (Sections~\ref{sec.experiments}~and~\ref{sec.userstudy})
\end{itemize}

%
%


\section{Related Work}\label{sec:related-work}
\vspace{-3mm}
%
\begin{table*}[t!]
    \centering
    \caption{Related work on preference alignment of generative models; see Section~\ref{sec:related-work} for details.}
    \label{fig.related_work}
    \begin{subfigure}{1\textwidth}
        \centering
        \caption{\textbf{Preference alignment methods.} General preference alignment methods use aggregated population-level preference data, while personalized methods rely on user-level data (with user identity).}
        \scalebox{0.75}{
        \setlength\tabcolsep{12pt}
        \renewcommand{\arraystretch}{1.1}
            \begin{tabular}{llll}
                \toprule
                & & \textbf{Training-time} & \textbf{Inference-time} \\
                \midrule
                \multirow{4}{*}{\textbf{General Preferences}} 
                & \textbf{Offline} & RLHF~\cite{DBLP:conf/nips/ChristianoLBMLA17,DBLP:journals/corr/abs-1909-08593,DBLP:conf/nips/Ouyang0JAWMZASR22} & BoN~\cite{DBLP:journals/corr/abs-2009-01325,DBLP:journals/corr/abs-2112-09332} \\
                &        & DPO~\cite{DBLP:conf/nips/RafailovSMMEF23,DBLP:journals/corr/abs-2402-01306,DBLP:conf/nips/0001X024} & Reward guided decoding~\cite{DBLP:conf/emnlp/DengR23,DBLP:conf/iclr/KhanovBL24,DBLP:journals/corr/abs-2410-16033} \\
                \cmidrule{2-4}
                & \textbf{Online} & Active RLHF~\cite{das2024active,DBLP:journals/corr/abs-2402-09401} & Active BoN~\cite{DBLP:journals/corr/abs-2402-13210,DBLP:journals/corr/abs-2411-01493} \\
                &        & Active DPO~\cite{DBLP:journals/corr/abs-2402-09401,DBLP:journals/corr/abs-2411-01493,DBLP:conf/icml/MuldrewHZB24} & Active Bayesian PM~\cite{DBLP:conf/rss/SadighDSS17,DBLP:conf/icml/DwaracherlaAHR24,DBLP:conf/nips/MeloTAG24} \\
                \midrule
                \multirow{8}{*}{\textbf{Personal Preferences}} 
                & \textbf{Offline} & Personalized RLHF~\cite{DBLP:journals/corr/abs-2405-00254} & URIAL~\cite{DBLP:conf/iclr/LinRLDSCB024}, DeAL~\cite{DBLP:journals/corr/abs-2402-06147} \\
                &        & Personalized DPO~\cite{DBLP:journals/corr/abs-2402-05133} & PAD~\cite{chen2025pad}, OPAD~\cite{zhu2025onthefly}, Amulet~\cite{zhang2025amulet} \\
                &        & VPL~\cite{DBLP:conf/nips/PoddarWIGJ24} & Personalized Soups~\cite{DBLP:journals/corr/abs-2310-11564}, MOD~\cite{DBLP:conf/nips/ShiCHLHSD24} \\
                &        & & LoRE~\cite{bose2025lore}, PAL~\cite{chen2025pal} \\
                \cmidrule{2-4}
                & \textbf{Online} &  & PBO~\cite{DBLP:conf/icml/GonzalezDDL17}, APL~\cite{DBLP:journals/corr/abs-2411-00524} \\
                &        &  & Active BoN~\cite{DBLP:journals/corr/abs-2402-13210,DBLP:journals/corr/abs-2411-01493} \\
                &        &  & Active Bayesian pref. model~\cite{DBLP:conf/rss/SadighDSS17,DBLP:conf/icml/DwaracherlaAHR24,DBLP:conf/nips/MeloTAG24} \\
                &        &  & \colorbox{green!20}{\algoOurs{}} \\
                \bottomrule
            \end{tabular}
        }
        \label{fig.related_work.table1}        
    \end{subfigure}
    \\
    \vspace{3mm}
    \begin{subfigure}{1\textwidth}
        \centering
        \caption{\looseness-1\textbf{Inference-time personalized preference alignment methods.} User preference input can take the form of explicit text (e.g., preference specifications or prompt-response examples), pairwise comparisons of base model outputs, or weights over predefined objectives. Offline methods impose less user load than online methods, as they do not require active user interaction. Warmup options include training a personalized preference model with multi-user pairwise data, or training an ensemble of generative models for different objectives. Inference-time operations include in-context learning, active preference learning from user comparisons, logit adjustment using on-the-fly reward functions with text-based preference input, logit adjustment via learned user preference weights with a pre-trained preference model, or logit adjustment using an ensemble of pre-trained generative models and user-defined weights. Online methods typically account for uncertainty in user preference modeling. Aligned responses can be generated via guided decoding or selection from a pool of pre-generated outputs.}        
        \scalebox{0.75}{
        \setlength\tabcolsep{4pt}
        \renewcommand{\arraystretch}{0.95}
            \begin{tabular}{lllllll}
                \toprule
                & \textbf{\begin{tabular}[c]{@{}l@{}}Preference\\ Input\end{tabular}} & \textbf{\begin{tabular}[c]{@{}l@{}}User\\ Load\end{tabular}} & \textbf{\begin{tabular}[c]{@{}l@{}}Warmup \\ Operation\end{tabular}} & \textbf{\begin{tabular}[c]{@{}l@{}}Test-Time\\ Operation\end{tabular}} & \textbf{\begin{tabular}[c]{@{}l@{}}Uncertainty \\ Quantification\end{tabular}} & \textbf{\begin{tabular}[c]{@{}l@{}}Response\\ Generation\end{tabular}} \\
                \midrule
                URIAL~\cite{DBLP:conf/iclr/LinRLDSCB024} & Text & Low & None & In-context learn  & No & Guiding \\
                DeAL~\cite{DBLP:journals/corr/abs-2402-06147} & Text & Low  & None & In-context learn & No & Guiding \\
                \midrule
                PAD~\cite{chen2025pad} & Text & Low & Train pref. model & On-the-fly pref. & No & Guiding \\
                OPAD~\cite{zhu2025onthefly} & Text & Low & None & On-the-fly pref. & No & Guiding \\
                Amulet~\cite{zhang2025amulet} & Text & Low & None & On-the-fly pref. & No & Guiding \\
                \midrule
                Personalized Soups~\cite{DBLP:journals/corr/abs-2310-11564} & Weight & Low & Train gen. models & Ensemble & No & Guiding \\
                MOD~\cite{DBLP:conf/nips/ShiCHLHSD24} & Weight & Low & Train gen. models & Ensemble & No & Guiding \\
                \midrule
                LoRE~\cite{bose2025lore} & Comparisons & Low & Train pref. model & Learn pref. weight & No & Guiding \\   
                PAL~\cite{chen2025pal} & Comparisons & Low & Train pref. model & Learn pref. weight & No & Guiding \\
                \midrule
                PBO~\cite{DBLP:conf/icml/GonzalezDDL17} & Comparisons & High & None & Active pref. model & Yes & Selection \\
                APL~\cite{DBLP:journals/corr/abs-2411-00524} & Comparisons & High & None & Active pref. model & Yes & Selection \\
                Active BoN~\cite{DBLP:journals/corr/abs-2402-13210,DBLP:journals/corr/abs-2411-01493} & Comparisons & High & None & Active pref. model & Yes & Selection \\
                Active Bayesian pref. model~\cite{DBLP:conf/rss/SadighDSS17} & Comparisons & High & None & Active pref. model & Yes & Selection \\
                \rowcolor{green!20}
                \algoOurs{} & Comparisons & Low & None & Active pref. model & Yes & Selection \\
                %
                \bottomrule
            \end{tabular}
        }
        \label{fig.related_work.table2}        
    \end{subfigure}
    \vspace{-2mm}
\end{table*}

\looseness-1A broad range of general preference alignment methods has emerged (see Table~\ref{fig.related_work.table1}). Offline approaches like Reinforcement Learning from Human Feedback (RLHF)~\cite{DBLP:conf/nips/ChristianoLBMLA17,DBLP:journals/corr/abs-1909-08593,DBLP:conf/nips/Ouyang0JAWMZASR22} and Direct Preference Optimization (DPO)~\cite{DBLP:conf/nips/RafailovSMMEF23,DBLP:journals/corr/abs-2402-01306,DBLP:conf/nips/0001X024} fine-tune models using aggregated preference data. At inference, methods like Best-of-N sampling (BoN)~\cite{DBLP:journals/corr/abs-2009-01325,DBLP:journals/corr/abs-2112-09332} and reward-guided decoding~\cite{DBLP:conf/emnlp/DengR23,DBLP:conf/iclr/KhanovBL24,DBLP:journals/corr/abs-2410-16033} steer outputs without retraining. To lower annotation cost, online variants such as Active RLHF~\cite{das2024active,DBLP:journals/corr/abs-2402-09401} and Active DPO~\cite{DBLP:journals/corr/abs-2402-09401,DBLP:journals/corr/abs-2411-01493,DBLP:conf/icml/MuldrewHZB24} adapt losses on-the-fly, while methods like Active BoN~\cite{DBLP:journals/corr/abs-2402-13210,DBLP:journals/corr/abs-2411-01493} and Active Bayesian preference modeling~\cite{DBLP:conf/rss/SadighDSS17,DBLP:conf/icml/DwaracherlaAHR24,DBLP:conf/nips/MeloTAG24} refine decoding via sequential pairwise feedback. These methods capture broad community norms but struggle to personalize outputs for individual users.

In contrast, personalized preference alignment methods leverage user-specific data to generate individualized outputs (see Table~\ref{fig.related_work.table2}). Offline methods~\cite{DBLP:journals/corr/abs-2405-00254,DBLP:journals/corr/abs-2402-05133,DBLP:conf/nips/PoddarWIGJ24} fine-tune models on per-user preference data. At inference, text-based logit adjustment methods like URIAL~\cite{DBLP:conf/iclr/LinRLDSCB024} and DeAL~\cite{DBLP:journals/corr/abs-2402-06147} embed few-shot examples, while PAD~\cite{chen2025pad}, OPAD~\cite{zhu2025onthefly}, and Amulet~\cite{zhang2025amulet} learn lightweight reward functions on-the-fly. Personalization is also supported by weight-based ensembling (Personalized Soups~\cite{DBLP:journals/corr/abs-2310-11564}, MOD~\cite{DBLP:conf/nips/ShiCHLHSD24}) and comparison-driven logit adjustment (LoRE~\cite{bose2025lore}, PAL~\cite{chen2025pal}). Online personalization algorithms, such as PBO~\cite{DBLP:conf/icml/GonzalezDDL17} and APL~\cite{DBLP:journals/corr/abs-2411-00524}, actively query users for pairwise comparisons to refine decoders in real time; Active BoN and Active Bayesian preference modeling naturally extend to this user-specific setting. Recent surveys provide broader overviews of personalized alignment~\cite{xie2025survey,DBLP:journals/corr/abs-2503-17003}.

\looseness-1Another line of related work is Bayesian active learning that treats a user’s latent reward as a random variable and selects queries that most reduce posterior uncertainty~\cite{DBLP:conf/rss/SadighDSS17,DBLP:conf/corl/BiyikS18}, enabling sample-efficient recovery of preference weights under mild assumptions. Similarly, the dueling bandit framework models preference learning as an online decision problem with pairwise feedback~\cite{DBLP:conf/icml/YueJ09,DBLP:conf/colt/YueBKJ09}, where algorithms using upper-confidence bounds or Thompson sampling achieve sublinear regret and convergence guarantees~\cite{DBLP:conf/icml/ZoghiWMR14,DBLP:conf/icml/SahaG22}. However, both approaches often require many user queries in practice.

\vspace{-1mm}
\section{Problem Formulation}\label{sec.problemsetup}
\vspace{-2mm}

\textbf{System-user interaction.} The envisioned system is powered by a generative model and interacts with a user $u$. The generative model is a stochastic mapping $\pi: \mathcal{X} \to \Delta(\mathcal{Y})$, where $\mathcal{X}$ is the input space and $\mathcal{Y}$ is the output space ($\Delta(\mathcal{Y})$ denotes the probability simplex over $\mathcal{Y}$). The user’s preferences are captured by an unknown latent preference model: for any input $x \in \mathcal{X}$ and a pair of responses $(y, y') \in \mathcal{Y}^2$, the probability that the user prefers $y$ over $y'$ (denoted $y \succ y'$) is given by the preference model $p_u [y \succ y' \mid x]$. Given an input $x \in \mathcal{X}$ from the user, the system first generates a candidate pool of responses $\mathcal{Y}_\text{cand}$ using the generative model $\pi$, and then seeks to output a preference-aligned response $\widehat{y}$ from this pool. To this end, the system must learn or infer the user's preference model for $x$ through interaction. This interaction consists of querying the user to express a preference over a pair $(y, y') \in \mathcal{Y}_\text{cand}^2$ from the pool. Throughout, the system maintains and updates a dataset of preferences $\mathcal{D} = \{(x, y, y', r)\}$, where $r = 1$ if the user prefers $y$ over $y'$ for $x$, and $r = 0$ otherwise. The complete interaction process is described in Algorithm~\ref{alg:setup}.

\textbf{Objective.} Let $\widetilde{y} \in \mathcal{Y}$ be a baseline response for $x$ generated by the model $\pi$ with zero sampling temperature. This response is produced prior to any user interaction. For input $x$, we define the \emph{win-rate} of any response $y \in \mathcal{Y}$ against the baseline $\widetilde{y}$ as $p_u [y \succ \widetilde{y} \mid x]$. The system aims to output a response that approximately maximizes this win-rate while using as few preference queries as possible. Specifically, given the candidate pool $\mathcal{Y}_\text{cand}$ for $x$, the goal is to find the response
$y^\star = \argmax_{y \in \mathcal{Y}_\text{cand}} p_u [y \succ \widetilde{y} \mid x]$ with minimal interaction.

\begin{algorithm*}[t]
    \caption{System-User Interaction}
    \begin{algorithmic}[1]
        \State User $u$ provides an input prompt $x \in \mathcal{X}$ to the system. 
        \State System uses the generative model $\pi$ to generate a pool of responses $\mathcal{Y}_\text{cand}$ for $x$. 
        \While{\emph{stopping criteria not met}}
            \Statex \qquad System selects a pair of responses $(y, y') \in \mathcal{Y}_\text{cand}^2$. 
            \Statex \qquad System asks the user to provide a preference over the pair $(y, y')$.
            \Statex \qquad System sets $r = 1$ if the user prefers $y$ over $y'$, and $r = 0$ otherwise.
            \Statex \qquad System updates the preference dataset $\mathcal{D}$ with the preference tuple $(x, y, y', r)$.
        \EndWhile{}
        \State System selects a final response $\widehat{y} \in \mathcal{Y}_\text{cand}$ for the user.  
    \end{algorithmic}
    \label{alg:setup}
\end{algorithm*} 

%

\vspace{-1mm}
\section{Methodology}\label{sec.method}
\vspace{-2mm}

In this section, we present our algorithm, \algoOurs{}, for generating user preference-aligned responses using best-arm identification methods from logistic bandits literature. The full procedure is outlined in Algorithm~\ref{alg:general-pad}, with its subroutine \textsc{Solve} detailed in Algorithm~\ref{alg:solve-pad}.

\textbf{Preliminaries.} For any input $x \in \mathcal{X}$ and a pair of responses $(y, y') \in \mathcal{Y}^2$, the Bradley-Terry-Luce (BTL) preference model is defined as $p_\textnormal{BTL}\bss{y \succ y' ~\big|~ x, \theta} ~:=~ \mu (\langle {\theta}, {\phi(x,y) - \phi(x, y')} \rangle)$, where $\theta \in \Theta \subset \mathbb{R}^d$ is a weight vector, $\phi: \mathcal{X} \times \mathcal{Y} \to \mathbb{R}^d$ is a feature mapping, and $\mu(z) = 1 / (1 + e^{-z})$ denotes the logistic function. To develop our algorithm, we consider a user preference model $p_u$ following the BTL model with some unknown $\theta^\star \in \Theta$. We also adopt the following standard assumption~\cite{DBLP:conf/icml/FauryACF20}: $\norm{\phi(x, y) - \phi(x, y')}_2 \leq 1, \forall x \in \mathcal{X}, y, y' \in \mathcal{Y}$ and $\norm{\theta^\star}_2 \leq S$ with known $S > 0$. We define $\kappa^\star_{\mathcal{X}, \mathcal{Y}} := \max_{x \in \mathcal{X}} \max_{y, y' \in \mathcal{Y}} \frac{1}{\dot{\mu}(\langle \theta^\star, \phi(x, y) - \phi(x, y') \rangle)}$ and $\Delta_{\mathcal{Y}_\text{cand}} := \min_{y, y' \in \mathcal{Y}_\text{cand}; y \neq y'} \langle \theta^\star, \phi(x, y) - \phi(x, y') \rangle$. Under this setup, the response maximizing the win-rate can be equivalently written as: 
\begin{equation}
\label{eq:optimal-aligned-response}
y^\star ~=~ \argmax_{y \in \mathcal{Y}_\text{cand}} \langle \theta^\star, \phi(x, y) \rangle .
\end{equation}

\subsection{Theoretical Framework for \algoOurs{}}
\label{sec:theoretical-analysis}

\looseness-1Here, we describe \algoOurs{} without line~4 in the \textsc{Solve} subroutine, referring to this variant as \algoOursLoss{}. Starting with an empty preference dataset $\mathcal{D}_0 = \bcc{}$, we actively populate it with preference tuples $(x, y^{(1)}, y^{(2)}, r)$, where $r = 1$ if user $u$ prefers $y^{(1)}$ over $y^{(2)}$, and $r = 0$ otherwise. Let $\mathcal{D}_{t} = \{(x_\tau, y_\tau^{(1)}, y_\tau^{(2)}, r_\tau)\}_{\tau = 0}^{t-1}$ be the dataset at step $t$. We select a pair of responses $(y_t^{(1)}, y_t^{(2)})$ for which we request user feedback and then add the resulting tuple $(x, y_t^{(1)}, y_t^{(2)}, r_t)$ to $\mathcal{D}_{t}$. 

\begin{algorithm*}[t]
    \caption{\algoOurs{}: Eliciting User Preferences}
    \begin{algorithmic}[1]
        \State \textbf{Input:} generative model $\pi: \mathcal{X} \to \mathcal{Y}$, feature mapping $\phi: \mathcal{X} \times \mathcal{Y} \to \mathbb{R}^d$, stopping threshold $\epsilon$, confidence level $\delta$, problem dimension $d$, and norm bound $S$
        \State User $u$ provides an input prompt $x \in \mathcal{X}$ to the system.
        \State Generate a diverse set of responses $\mathcal{Y}_\text{cand}$ for $x$ by sampling from $\pi$.
        \State Initialize the preference dataset $\mathcal{D}_0 \gets \bcc{}$.
        \For{$t = 0, 1, 2, \dots$}
        \Statex \ \ \quad \LINECOMMENT{Obtain a representative preference model parameter and confidence set.}
        \State Obtain $(\widehat{\theta}_t, \Theta_t) \gets \textsc{Solve}(\mathcal{D}_{t}, d, S, t, \delta)$
        \Statex \ \ \quad \LINECOMMENT{Obtain responses for comparison.}
        \State Select the first response $y_t^{(1)} \gets \argmax_{y \in \mathcal{Y}_\text{cand}} \langle \widehat{\theta}_t, \phi(x, y) \rangle$.
        \State Select the second response $(y_t^{(2)}, \widetilde{\theta}_t) \gets \argmax_{(y', \theta) \in \mathcal{Y}_\text{cand} \times \Theta_t} \langle \theta, \phi(x, y') - \phi(x, y_t^{(1)}) \rangle$.
        \Statex \ \ \quad \LINECOMMENT{Check the stopping condition.}
        \State Compute stopping criteria $B(t) = \langle \widetilde{\theta}_t, \phi(x, y_t^{(2)}) - \phi(x, y_t^{(1)}) \rangle$.
        \If{$B(t) \leq \epsilon$}
         \State \textbf{Output:} Response $y_t^{(1)}$ to the user $u$.
        \EndIf
        \Statex \ \ \quad \LINECOMMENT{Obtain user feedback and update the preference dataset.}
        \State Ask the user $u$ to provide preference over responses $y_t^{(1)}$ and $y_t^{(2)}$.
        \State Observe $r_t \sim p_u [y_t^{(1)} \succ y_t^{(2)} \mid x]$ and update $\mathcal{D}_{t+1} \gets \mathcal{D}_{t} \cup \{(x, y_t^{(1)}, y_t^{(2)}, r_t)\}$.
        \EndFor{}
    \end{algorithmic}
    \label{alg:general-pad}
\end{algorithm*} 

\begin{algorithm*}[t]
    \caption{\algoOurs{}: \textsc{Solve} Subroutine}
    \begin{algorithmic}[1]
        \State \textbf{Input:} preference dataset $\mathcal{D}_t$,  dimension $d$, norm bound $S$, step $t$, and confidence level $\delta$
        %
        \State Compute the MLE $\widehat{\theta}_t$ by solving the optimization problem in Eq.~\eqref{eq:mle-estimate}.
        \State Construct the loss-based confidence set $\Theta_t$ as in Eq.~\eqref{eq:confidence-set}.
        \Statex \LINECOMMENT{Obtain the practical confidence set (see Section~\ref{subsec:practical-conf-set}).}
        \State {\color{teal} Update $\Theta_t \gets \Theta_t \cap \mathcal{H}_t$ w.r.t. the consistency set in Eq.~\eqref{eq:half-set};}
        \Statex {\color{teal} when updated $\Theta_t$ is empty, set $\Theta_t = \{ \widehat{\theta}_t \}$}.
        \State \textbf{Output:} preference parameter $\widehat{\theta}_t$ and confidence set $\Theta_t$
    \end{algorithmic}
    \label{alg:solve-pad}
\end{algorithm*} 

%
\textbf{\textsc{Solve}.} For the preference dataset $\mathcal{D}_{t}$, we define the negative log-likelihood function as follows: $\mathcal{L}_t(\theta) := \sum_{\tau=0}^{t-1} \ell(\theta; (x_\tau, y_\tau^{(1)}, y_\tau^{(2)}, r_\tau))$, where $\ell(\theta; (x, y^{(1)}, y^{(2)}, r)) := - r \cdot \log \mu(\langle {\theta}, z \rangle) - (1- r) \cdot \log (1 - \mu(\langle {\theta}, z \rangle))$ is the logistic loss function, with $z = {\phi(x, y^{(1)}) - \phi(x, y^{(2)})}$.
Finally, we obtain the norm-constrained, unregularized maximum likelihood estimator (MLE) of the unknown parameter $\theta^\star$ via solving the following optimization problem:
\begin{equation}
\label{eq:mle-estimate}
\widehat{\theta}_t ~:=~ \argmin_{\norm{\theta}_2 \leq S} \mathcal{L}_t(\theta) .
\end{equation}
We construct the loss-based confidence set as follows:
\begin{equation}
\label{eq:confidence-set}
\Theta_t ~:=~ \bcc{\theta \in \mathbb{R}^d: \norm{\theta}_2 \leq S \text{ and } \mathcal{L}_t(\theta) \leq \mathcal{L}_t(\widehat{\theta}_t) + \beta_t} ,
\end{equation}
where $\beta_t = 10 d \log \brr{\frac{S t}{4 d} + e} + 2 ((e - 2) + S) \log \frac{1}{\delta}$~\cite{lee2024improved}. Since $\mathcal{L}_t$ is convex, the confidence set $\Theta_t$ is also convex. The \textsc{Solve} procedure is presented in Algorithm~\ref{alg:solve-pad}.
\looseness-1After obtaining $\widehat{\theta}_t$ and $\Theta_t$, we select the first response $y_t^{(1)}$ as the one that maximizes the win-rate objective in Eq.~\eqref{eq:optimal-aligned-response} under $\widehat{\theta}_t$. Next, we choose the second response $y_t^{(2)}$ to maximize the regret of $y_t^{(1)}$ within $\Theta_t$. If this maximum regret (i.e., $B(t)$ in line~9 of Algorithm~\ref{alg:general-pad}) is below the given $\epsilon$ threshold, we output $y_t^{(1)}$.
%


\begin{figure*}[t]
  \centering

    \begin{minipage}[t]{0.99\textwidth}
      \centering
      \begin{subfigure}[t]{0.92\textwidth}
        \centering
        \includegraphics[trim=0.5mm 0.5mm 0.5mm 0.5mm, clip, width=0.95\textwidth]{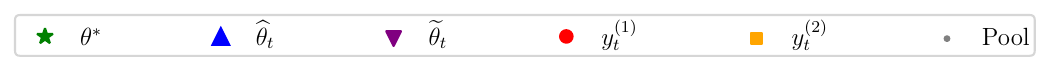}
      \end{subfigure}

    \centering
      \begin{subfigure}[t]{0.195\textwidth}
        \includegraphics[trim=3.5mm 3.5mm 3.5mm 3.5mm, clip, width=0.88\linewidth]{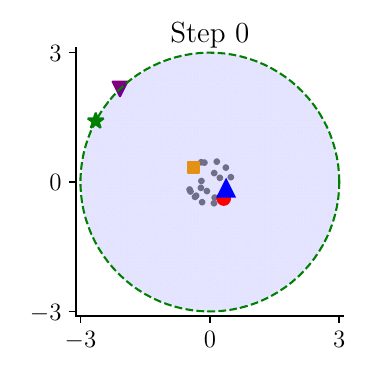}
      \end{subfigure}
      \begin{subfigure}[t]{0.195\textwidth}
        \includegraphics[trim=3.5mm 3.5mm 3.5mm 3.5mm, clip, width=0.88\linewidth]{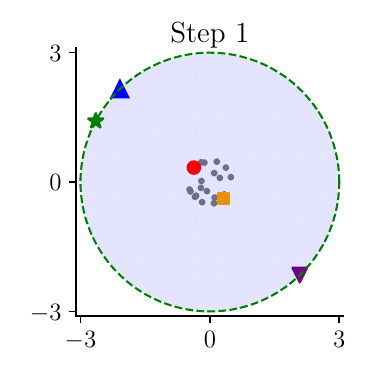}
      \end{subfigure}
      \begin{subfigure}[t]{0.195\textwidth}
        \centering
        \vspace*{-12.5mm}  
        \textbf{\textellipsis \ \ \textellipsis \ \ \textellipsis}
      \end{subfigure}
      \begin{subfigure}[t]{0.195\textwidth}
        \includegraphics[trim=3.5mm 3.5mm 3.5mm 3.5mm, clip, width=0.88\linewidth]{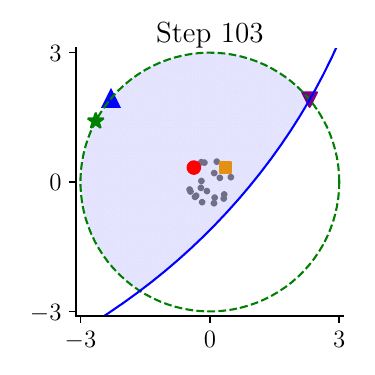}
      \end{subfigure}
      \begin{subfigure}[t]{0.195\textwidth}
        \includegraphics[trim=3.5mm 3.5mm 3.5mm 3.5mm, clip, width=0.88\linewidth]{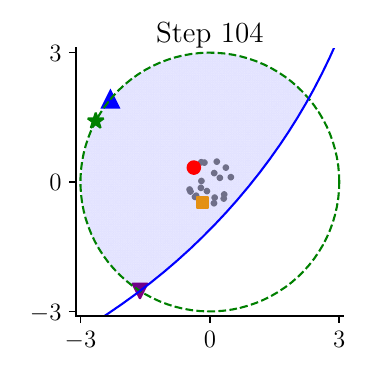}
      \end{subfigure}      
      \vspace{-6mm}
      \caption*{(a) Loss-based confidence set $\Theta_t$ in Eq.~\eqref{eq:confidence-set} shrinks slowly due to incremental log-loss updates.}
    \end{minipage}
    \vspace{3mm}
    
    \begin{minipage}[t]{0.99\textwidth}
    \centering
      \begin{subfigure}[t]{0.195\textwidth}
        \includegraphics[trim=3.5mm 3.5mm 3.5mm 3.5mm, clip, width=0.88\linewidth]{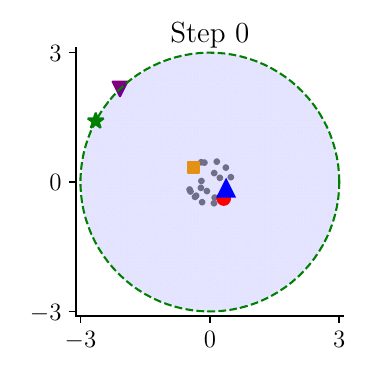}
      \end{subfigure}
      \begin{subfigure}[t]{0.195\textwidth}
        \includegraphics[trim=3.5mm 3.5mm 3.5mm 3.5mm, clip, width=0.88\linewidth]{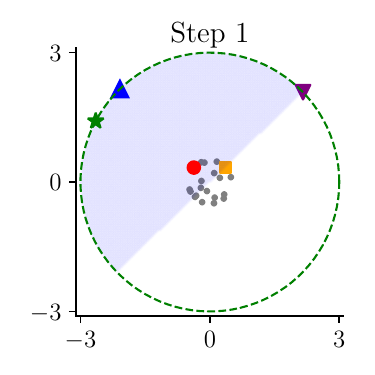}
      \end{subfigure}
      \begin{subfigure}[t]{0.195\textwidth}
        \includegraphics[trim=3.5mm 3.5mm 3.5mm 3.5mm, clip, width=0.88\linewidth]{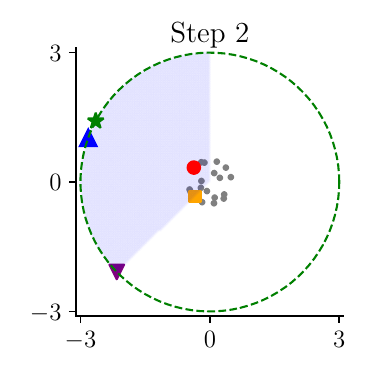}
      \end{subfigure}
      \begin{subfigure}[t]{0.195\textwidth}
        \includegraphics[trim=3.5mm 3.5mm 3.5mm 3.5mm, clip, width=0.88\linewidth]{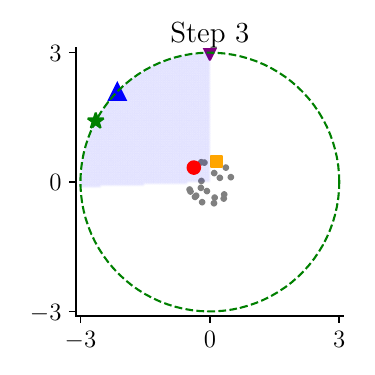}
      \end{subfigure}
      \begin{subfigure}[t]{0.195\textwidth}
        \includegraphics[trim=3.5mm 3.5mm 3.5mm 3.5mm, clip, width=0.88\linewidth]{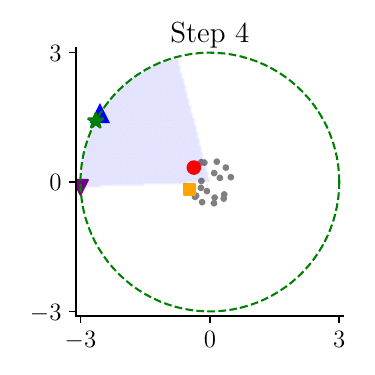}
      \end{subfigure}
      \vspace{-6mm}
      \caption*{\looseness-1(b) Practical confidence set $\Theta_t \cap \mathcal{H}_t$ shrinks aggressively due to version-space elimination.}
    \end{minipage}

    \vspace{3mm}

    \begin{minipage}[t]{0.99\textwidth}
      \begin{subfigure}[t]{\textwidth}
        \centering
        \includegraphics[trim=0.5mm 0.5mm 0.5mm 0.5mm, clip, width=0.95\textwidth]{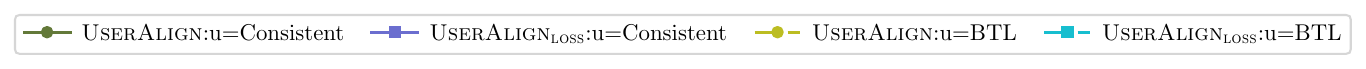}
       \vspace{-5mm}
      \end{subfigure}

    \centering
      \begin{subfigure}[t]{0.31\textwidth}
        \includegraphics[height=2.2cm]{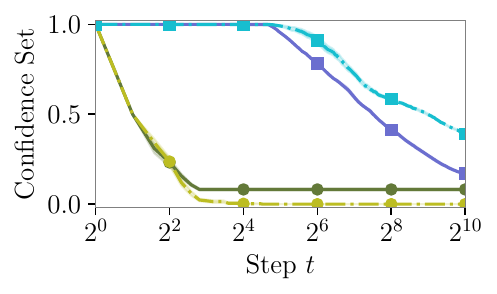}
      \end{subfigure}
      \ \
      \begin{subfigure}[t]{0.31\textwidth}
        \includegraphics[height=2.2cm]{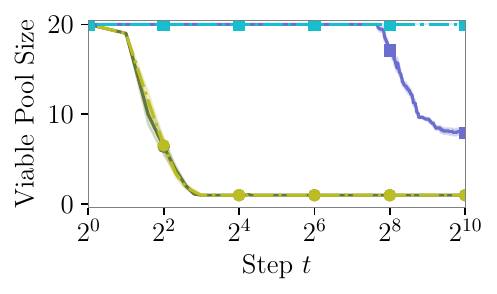}
      \end{subfigure}
      \ \
      \begin{subfigure}[t]{0.31\textwidth}
        \includegraphics[height=2.2cm]{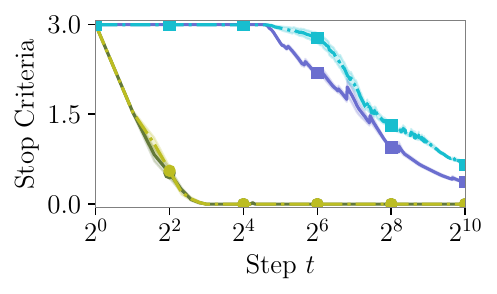}
      \end{subfigure}
      \vspace{-2.5mm}
      \caption*{(c) Normalized area of confidence set, viable response pool size, and stopping criteria over interaction steps. The practical confidence set shrinks significantly faster.}
    \end{minipage}
  \caption{Geometric convergence in a two-dimensional synthetic domain. We consider a 2D preference space where each candidate response is represented by a point randomly sampled from the ball of radius $0.5$. For each run, the ground-truth user preference $\theta^\star$ is sampled uniformly from the circle with radius $3$. The plots compare the rate at which the loss-based and practical confidence sets shrink over successive pairwise comparisons.}
  \label{fig.synth_2d}
  \vspace{-1mm}
\end{figure*}

\textbf{Theoretical Analysis.} The following proposition shows that the loss-based confidence set $\Theta_t$ contains the true parameter $\theta^\star$ with high probability~\cite{lee2024improved}. Proofs are provided in the supplementary material.
\begin{proposition}
\label{prop:conf}
For the confidence set $\Theta_t$ defined in Eq.~\eqref{eq:confidence-set}, we have: $\Prob{\forall t \geq 0, \theta^\star \in \Theta_t} ~\geq~ 1 -\delta$.
\end{proposition}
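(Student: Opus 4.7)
The plan is to reduce the claim to a time-uniform upper bound on the log-likelihood ratio between $\theta^\star$ and the MLE, and then invoke the self-concordance-based concentration argument of \cite{lee2024improved} whose calibration gives the exact $\beta_t$ stated in Eq.~\eqref{eq:confidence-set}. Concretely, since $\theta^\star$ is feasible for the MLE program (as $\|\theta^\star\|_2 \leq S$), we have $\mathcal{L}_t(\widehat{\theta}_t) \leq \mathcal{L}_t(\theta^\star)$, so membership of $\theta^\star$ in $\Theta_t$ is equivalent to
\begin{equation*}
\mathcal{L}_t(\theta^\star) - \mathcal{L}_t(\widehat{\theta}_t) \leq \beta_t .
\end{equation*}
Thus it suffices to prove a high-probability, time-uniform upper bound of $\beta_t$ on the left-hand side.

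Next, I would set up the martingale structure. Let $z_\tau = \phi(x_\tau, y_\tau^{(1)}) - \phi(x_\tau, y_\tau^{(2)})$ and $\eta_\tau = r_\tau - \mu(\langle \theta^\star, z_\tau\rangle)$. By the BTL assumption $(\eta_\tau)$ is a bounded martingale-difference sequence with respect to the interaction history $\mathcal{F}_{\tau-1}$, with conditional variance $\dot{\mu}(\langle \theta^\star, z_\tau\rangle)$. A Taylor expansion of the per-step logistic loss around $\theta^\star$, combined with the self-concordance property $|\ddot{\mu}(z)| \leq \dot{\mu}(z)$ of the logistic link, lets me write the log-likelihood ratio as a linear martingale term $-\sum_\tau \eta_\tau \langle \theta - \theta^\star, z_\tau\rangle$ plus a curvature term sandwiched between multiples of the empirical local norm $\sum_\tau \dot{\mu}(\langle \bar\theta_\tau, z_\tau\rangle) \langle \theta - \theta^\star, z_\tau\rangle^2$ for an intermediate $\bar\theta_\tau$.

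I would then apply the self-normalized, time-uniform concentration inequality of \cite{lee2024improved} (method of mixtures with a Gaussian prior on the ball of radius $S$) to the linear martingale term. Combining this tail bound with the matching self-concordance-based lower bound on the curvature term, and absorbing the constants from the norm bound $S$ and the feature-difference bound $\|z_\tau\|_2 \leq 1$, yields the time-uniform inequality with the advertised $\beta_t = 10 d \log\bigl(\tfrac{S t}{4d} + e\bigr) + 2((e-2) + S)\log\tfrac{1}{\delta}$. The mixture argument delivers uniform-in-$t$ control in one stroke, so no extra union bound over $t$ is needed. Since the event $\{\mathcal{L}_t(\theta^\star) - \mathcal{L}_t(\widehat{\theta}_t) \leq \beta_t\ \forall t\}$ has probability at least $1-\delta$, and implies $\theta^\star \in \Theta_t$ for all $t$, the proposition follows.

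The main obstacle is the interplay between the self-concordance inequality and the self-normalized bound: a naive Azuma/Hoeffding-style bound would lose a factor of $1/\dot{\mu}(\cdot)$, i.e., a $\kappa^\star_{\mathcal{X},\mathcal{Y}}$ blow-up, which would destroy the purely dimensional scaling in $\beta_t$. Avoiding this $\kappa$-dependence requires invoking the precise self-concordant second-order inequality of the logistic loss and matching the local curvature inside the exponential supermartingale \emph{before} taking the mixture. This is exactly the technical ingredient supplied by \cite{lee2024improved}; the remaining work is to verify that the hypotheses of their Theorem (bounded feature differences $\|z_\tau\|_2 \leq 1$, norm-bounded $\theta^\star$, logistic link, and adapted filtration generated by the algorithm's selection of $(y_\tau^{(1)}, y_\tau^{(2)})$) are all met by the system-user interaction loop in Algorithm~\ref{alg:general-pad}.
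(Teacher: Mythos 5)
Your proposal is correct and takes essentially the same route as the paper: the paper's entire proof is a one-line invocation of Theorem~1 of \cite{lee2024improved}, which is precisely the time-uniform, self-concordance-based concentration result whose internals (reduction via $\mathcal{L}_t(\widehat{\theta}_t) \leq \mathcal{L}_t(\theta^\star)$, martingale decomposition, method of mixtures) you sketch. The extra detail you supply is a faithful account of how that cited theorem is established, not a different argument.
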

Note that for any $\theta'$, $\mathcal{L}_t(\theta) - \mathcal{L}_t(\theta') \leq \mathcal{L}_t(\theta) - \mathcal{L}_t(\widehat{\theta}_t) \leq \beta_t$. Therefore, even if only an approximate estimate of $\widehat{\theta}_t$ is obtained, the high-probability guarantee that $\theta^\star \in \Theta_t$ still holds. The following theorem shows that \algoOursLoss{} identifies an $\epsilon$-near-optimal response with high probability.
\begin{theorem}
\label{thm:pac}
Let $\tau$ be the stopping round of \algoOursLoss{}, and $y^\star$ is defined in Eq.~\eqref{eq:optimal-aligned-response}. Then, the response $y_\tau^{(1)}$ returned by \algoOursLoss{} satisfies $\mathbb{P}[\langle \theta^\star, \phi(x, y^\star) - \phi(x, y_\tau^{(1)}) \rangle \leq \epsilon] ~\geq~ 1 - \delta$.
\end{theorem}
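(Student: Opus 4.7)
The plan is to condition on the high-probability event guaranteed by Proposition~\ref{prop:conf} and then derive the $\epsilon$-optimality inequality directly from the definition of the stopping criterion, with no additional concentration or optimization arguments needed. The structure is: (i) validity of the confidence set, (ii) feasibility of the ``oracle'' comparator $(y^\star, \theta^\star)$ in the regret-maximization that defines $B(\tau)$, (iii) invocation of the stopping rule.

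First, I would invoke Proposition~\ref{prop:conf} to define the event $\mathcal{E} := \{\theta^\star \in \Theta_t \text{ for all } t \geq 0\}$, which holds with probability at least $1 - \delta$. All subsequent steps are deterministic statements valid on $\mathcal{E}$.

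Second, I would unpack the construction of the second response. By line~8 of Algorithm~\ref{alg:general-pad}, the pair $(y_\tau^{(2)}, \widetilde{\theta}_\tau)$ is the maximizer of $\langle \theta, \phi(x, y') - \phi(x, y_\tau^{(1)}) \rangle$ over $(y', \theta) \in \mathcal{Y}_\text{cand} \times \Theta_\tau$. On $\mathcal{E}$ we have $\theta^\star \in \Theta_\tau$, and $y^\star \in \mathcal{Y}_\text{cand}$ by definition of $y^\star$ in Eq.~\eqref{eq:optimal-aligned-response}, so the pair $(y^\star, \theta^\star)$ is feasible in this maximization. Therefore
\begin{equation*}
B(\tau) \;=\; \langle \widetilde{\theta}_\tau, \phi(x, y_\tau^{(2)}) - \phi(x, y_\tau^{(1)}) \rangle \;\geq\; \langle \theta^\star, \phi(x, y^\star) - \phi(x, y_\tau^{(1)}) \rangle.
\end{equation*}
Combining this with the stopping condition $B(\tau) \leq \epsilon$ yields $\langle \theta^\star, \phi(x, y^\star) - \phi(x, y_\tau^{(1)}) \rangle \leq \epsilon$ on $\mathcal{E}$, and the theorem follows since $\Pr(\mathcal{E}) \geq 1 - \delta$.

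Because the argument is essentially definitional once Proposition~\ref{prop:conf} is available, I do not anticipate a substantial technical obstacle; the only nontrivial step is the observation that admitting $\theta^\star \in \Theta_\tau$ is exactly what turns the pessimistic inner max into a valid upper bound on the true suboptimality gap of $y_\tau^{(1)}$. One minor subtlety I would flag explicitly is that the theorem implicitly conditions on $\tau < \infty$: the $\epsilon$-optimality guarantee holds whenever the algorithm halts, whereas a termination/sample-complexity bound, which would require quantifying how quickly $\Theta_t$ contracts, is a separate claim not required by the statement.
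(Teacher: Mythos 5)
Your proposal is correct and uses exactly the same two ingredients as the paper's proof: the high-probability event $\theta^\star \in \Theta_\tau$ from Proposition~\ref{prop:conf} makes $(y^\star, \theta^\star)$ feasible in the maximization defining $(y_\tau^{(2)}, \widetilde{\theta}_\tau)$, so $B(\tau)$ upper-bounds the true gap, which the stopping rule caps at $\epsilon$. The paper merely phrases the same chain of inequalities as a proof by contradiction, so the two arguments are substantively identical.
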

Below, we show that the stopping time of \algoOursLoss{} is bounded with high probability.
\begin{theorem}
\label{thm:stop-bound-2}
Let $\tau$ be the stopping round of \algoOursLoss{}. Define $K = \abs{\mathcal{Y}_\text{cand}}$, and $\Omega : = \frac{S^2 \kappa^\star_{\mathcal{X}, \mathcal{Y}}}{\max\{\epsilon, \Delta_{\mathcal{Y}_\text{cand}}\}^2} \brr{d + \log \frac{1}{\delta}}$. Then, with probability at least $1-\delta$, we have: $\tau \leq \mathcal{O}(\Omega K^2 \cdot \log (\Omega K^2))$.
\end{theorem}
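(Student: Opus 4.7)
The plan is to follow a standard best-arm identification argument for logistic bandits, specialized to our fixed-pool setting. I would condition throughout on the high-probability event $\mathcal{E}=\{\theta^\star\in \Theta_t,\ \forall t\geq 0\}$, which holds with probability at least $1-\delta$ by Proposition~\ref{prop:conf}. On $\mathcal{E}$, both $\theta^\star$ and the optimistic maximizer $\widetilde{\theta}_t$ lie in $\Theta_t$, so the stopping quantity $B(t)=\langle\widetilde{\theta}_t, z_t\rangle$ with $z_t:=\phi(x,y_t^{(2)})-\phi(x,y_t^{(1)})$ upper-bounds the true regret of $y_t^{(1)}$ versus any other response in $\mathcal{Y}_\text{cand}$. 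This correctness link is what ties early termination to a sample-complexity bound: once the algorithm fails to stop, $B(t)$ must be bounded below by $\epsilon$ (and, when $y_t^{(1)}\neq y^\star$, by $\Delta_{\mathcal{Y}_\text{cand}}$).

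Second, I would turn the loss-sublevel definition of $\Theta_t$ into a quadratic-norm bound via strong convexity of the logistic loss: every $\theta\in\Theta_t$ satisfies $\|\theta-\widehat{\theta}_t\|_{H_t}^2 = O(\beta_t)$, where $H_t = \lambda I + \sum_{\tau<t}\dot{\mu}(\langle\bar{\theta}_\tau, z_\tau\rangle)\,z_\tau z_\tau^\top$ is the curvature matrix associated with $\mathcal{L}_t$ (with $\bar{\theta}_\tau$ on the segment between $\theta$ and $\widehat{\theta}_t$). Combining this with the greedy choice of $y_t^{(1)}$, which gives $\langle\widehat{\theta}_t, z_t\rangle\leq 0$, and Cauchy--Schwarz in the $H_t$-norm yields
\[
B(t)\;\leq\;\langle\widetilde{\theta}_t-\widehat{\theta}_t, z_t\rangle\;\leq\; O\!\bigl(\sqrt{\beta_t}\,\|z_t\|_{H_t^{-1}}\bigr).
\]
Using $\dot{\mu}(\cdot)\geq 1/\kappa^\star_{\mathcal{X},\mathcal{Y}}$, I would then pass from $H_t$ to the unweighted design matrix $V_t=\lambda I + \sum_{\tau<t} z_\tau z_\tau^\top$, incurring one $\kappa^\star_{\mathcal{X},\mathcal{Y}}$ factor, which gives $B(t)^2 = O(\kappa^\star_{\mathcal{X},\mathcal{Y}}\,\beta_t\,\|z_t\|_{V_t^{-1}}^2)$.

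Third, if the algorithm has not stopped by round $t$, the above display together with $B(t)\geq \max(\epsilon,\Delta_{\mathcal{Y}_\text{cand}})$ yields the per-step information lower bound $\|z_t\|_{V_t^{-1}}^2 \gtrsim \max(\epsilon, \Delta_{\mathcal{Y}_\text{cand}})^2/(\kappa^\star_{\mathcal{X},\mathcal{Y}}\,\beta_t)$. Summing over $t=0,\dots,\tau-1$ and applying the elliptical potential lemma $\sum_t\|z_t\|_{V_t^{-1}}^2 = O(d\log \tau)$, one obtains an implicit bound of the form $\tau\cdot \max(\epsilon, \Delta_{\mathcal{Y}_\text{cand}})^2 \leq O(\kappa^\star_{\mathcal{X},\mathcal{Y}}\,\beta_\tau\, d\log \tau)$. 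Recalling $\beta_\tau = O(d\log \tau + S\log(1/\delta))$ and $S^2\geq S$, this inverts to $\tau=O(\Omega\log\Omega)$ via the standard transcendental inequality $x\leq a\log x+b\Rightarrow x=O((a+b)\log(a+b))$.

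The main obstacle is the $K^2=|\mathcal{Y}_\text{cand}|^2$ factor in the final bound. Unlike continuous linear BAI, where queries can be steered along the most-informative direction in $\mathbb{R}^d$, here both responses must come from a fixed discrete pool, so only $\binom{K}{2}$ distinct pair-directions can populate $V_t$. I expect the $K^2$ factor to enter through a pool-size-aware counting argument that bounds how many times each specific pair can be selected before its $V_t^{-1}$-norm contracts below the information threshold, effectively replacing the cleaner $d\log\tau$ potential of continuous BAI. Making this counting tight, while checking that the coarse substitution $\dot{\mu}\to 1/\kappa^\star_{\mathcal{X},\mathcal{Y}}$ does not inflate the final rate beyond what is claimed, will be the most delicate steps.
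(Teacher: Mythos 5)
Your skeleton matches the paper's up to the point where the bound is actually extracted: conditioning on $\{\theta^\star\in\Theta_t,\ \forall t\}$, converting the loss-sublevel set $\Theta_t$ into an ellipsoid $\lVert\theta-\widehat{\theta}_t\rVert_{H_t}\lesssim\sqrt{\beta_t}$ (the paper does this via the triangle inequality through $\theta^\star$ and Lemma~6 of \cite{lee2024improved}, which keeps the curvature anchored at $\theta^\star$ so that $\kappa^\star_{\mathcal{X},\mathcal{Y}}$ applies directly --- your intermediate-point Hessian would need a $\kappa$ over the whole ball $\lVert\theta\rVert_2\leq S$), using $\langle\widehat{\theta}_t,z_t\rangle\leq 0$ and Cauchy--Schwarz, and lower-bounding $B(t)$ by $\max\{\epsilon,\Delta_{\mathcal{Y}_\text{cand}}\}$ on non-stopping rounds. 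The divergence is in the potential argument, and it is not a cosmetic one. The elliptical potential lemma gives $\sum_t\lVert z_t\rVert^2_{V_t^{-1}}=O(d\log\tau)$ and hence a bound of the form $\tau\lesssim \kappa^\star_{\mathcal{X},\mathcal{Y}}\,\beta_\tau\, d\log\tau/\max\{\epsilon,\Delta_{\mathcal{Y}_\text{cand}}\}^2$, i.e., roughly $\Omega\cdot d\cdot\mathrm{polylog}$ with no $K^2$. That is a legitimate bound, but it is incomparable to the stated $O(\Omega K^2\log(\Omega K^2))$ (better when $d\gg K^2$, worse when $d\ll K^2$), so it does not prove the theorem as written.

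The missing ingredient is exactly the ``pool-size-aware counting argument'' you defer to at the end, and the paper supplies it as its key lemma (Lemma~\ref{lem-individual-bound}): writing $H_t(\theta^\star)$ as a sum over the $K^2$ distinct pair-directions weighted by their selection counts $\abs{\mathcal{E}_{y^{(1)},y^{(2)}}(t-1)}$, a Sherman--Morrison computation (keeping only the regularizer and the rank-one block of the currently selected pair) gives $\lVert z_t\rVert^2_{H_t^{-1}(\theta^\star)}\leq \kappa^\star_{\mathcal{X},\mathcal{Y}}/\abs{\mathcal{E}_{y_t^{(1)},y_t^{(2)}}(t-1)}$. Combined with the ellipsoid bound, this caps the number of times any fixed pair can be selected at $O(\Omega)$, and summing over the at most $K^2$ pairs yields the transcendental inequality $\tau\leq \Omega K^2\cdot O(\log\tau)+K^2$, which inverts to the claim. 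You correctly diagnosed that this is where the $K^2$ must come from, but the proposal stops at ``I expect'' rather than establishing the per-pair contraction $\lVert z\rVert^2_{H^{-1}}\leq\kappa^\star_{\mathcal{X},\mathcal{Y}}/m$ after $m$ selections; without that lemma (or an equivalent), the argument as written either proves a different bound or does not close. If you instead want to keep the elliptical-potential route, you would be proving a genuinely different theorem whose rate depends on $d$ in place of $K^2$, and you should say so explicitly rather than matching it to the stated one.
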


\subsection{Practical Confidence Set in \algoOurs{}}
\label{subsec:practical-conf-set}

Despite strong convergence guarantees, \algoOursLoss{} requires a larger number of user preference queries to identify the best (win-rate maximizing) response in practice. This is because the loss-based confidence set defined in Eq.~\eqref{eq:confidence-set} shrinks slowly due to incremental log-loss updates (see Figure~\ref{fig.synth_2d}). This behavior is expected under the stochastic BTL model of user preferences. To overcome this challenge, the key idea is to treat the user's preference feedback as consistent and noise-free\footnote{Our work focuses on short-term, task-specific interaction sessions, where 
users typically have clear and stable preferences. In such cases, assuming 
consistent and noise-free feedback is both practical and realistic.}, and incorporate it into the theoretical framework to identify the best response more efficiently. Specifically, by leveraging version-space elimination via intersecting halfspaces consistent with observed preference tuples~\cite{DBLP:journals/ml/Littlestone87,DBLP:books/daglib/0033642}, we can aggressively shrink the confidence set. Given the preference dataset $\mathcal{D}_{t} = \{(x_\tau, y_\tau^{(1)}, y_\tau^{(2)}, r_\tau)\}_{\tau = 0}^{t-1}$, we define the consistent half-spaces set as:
\begin{equation}
\label{eq:half-set}
\mathcal{H}_t ~:=~ \bcc{\theta \in \mathbb{R}^d: r_\tau \cdot \langle \theta, z_\tau \rangle - (1 - r_\tau) \cdot \langle \theta, z_\tau \rangle \geq 0 \text{ for all } \tau \in [t-1]} ,
\end{equation}
where $z_\tau = {\phi(x_\tau, y_\tau^{(1)}) - \phi(x_\tau, y_\tau^{(2)})}$. Using this consistent set of halfspaces, we refine the loss-based confidence set as $\Theta_t \gets \Theta_t \cap \mathcal{H}_t$, enabling rapid identification of a near-optimal response with fewer queries; in case $\Theta_t$ becomes empty, we set it as $\Theta_t = \{ \widehat{\theta}_t \}$. We refer to the resulting algorithm with the updated confidence set as our main method, \algoOurs{}. The computational efficiency of \algoOurs{} is discussed in Appendix~\ref{sec-app.comp-eff}. In the following section, we empirically demonstrate the effectiveness of \algoOurs{} in quickly selecting personalized responses.

%

\section{Experimental Evaluation}
\label{sec.experiments}
\vspace{-1mm}

To thoroughly evaluate our method, we consider a diverse set of domains (see Table~\ref{fig.domains}). We begin with \texttt{food2d}, enabling controlled experiments with users modeled through BTL in an interpretable 2D space. We then assess real-world domains (\texttt{food64d}, \texttt{travel64d}, \texttt{visual512d}) to test performance with complex GPT-simulated personas. Finally, we include \texttt{dsp64d} based on an existing large-scale benchmark to rigorously examine scalability and robustness.

\looseness-1\textbf{Domain \texttt{food2d}}. This domain defines $\Theta$ as a 2D space with dimensions `spiciness' and `veginess', each feature ranging from $-1$ (no spice/animal protein) to $1$ (high spice/plant-based protein). Building on this space, we define the set of questions, construct user models, generate response pools, and obtain feature representations. First, we construct $\mathcal{X}$ as $10$ food recommendation questions. Second, for each question, we sample $3$ user preferences $\theta^\star$ uniformly on the circle of radius $S = 3$ in $\mathbb{R}^2$ (i.e., $\norm{\theta^\star}_2 = 3$). We conduct experiments with two types of user behavior: $p_u$ modeled through BTL, and $p_u$ considered consistent and noise-free. Third, we generate a candidate pool $\mathcal{Y}_\text{cand}$ of $20$ responses per question  $x \in \mathcal{X}$ with GPT-4o~\cite{GPT4o}. Finally, we obtain $\phi(x, y)$ by asking GPT-4o to map a question-response pair to the 2D `spiciness'/`veginess' space, given the domain description as context.

\begin{table*}
    \caption{Domain specifications for our experiments involving personalized text and image generation.}
    \centering
    \scalebox{0.78}{
    \setlength\tabcolsep{3pt}
    \renewcommand{\arraystretch}{1.1}
        \begin{tabular}{lm{3.63cm}cccm{5.8cm}}
            \toprule

            \multicolumn{1}{l}{\textbf{Domain}} &
            \multicolumn{1}{l}{\textbf{Preference Space}} &
            \multicolumn{1}{c}{\textbf{\# Questions}} &
            \multicolumn{1}{c}{\textbf{\# Users}} &
            \multicolumn{1}{c}{\textbf{Response Pool}} &
            \multicolumn{1}{c}{\textbf{Example Question}} \\

            \midrule
            
            \texttt{food2d} & %
            2d (domain-specific)
            & \phantom{0}10 & 3 & 20 & What should I cook for dinner tonight? \\
            
            \midrule
            \texttt{food64d} & %
            64d (Potion \cite{minishlab2024model2vec})
            & \phantom{0}10 & 3 & 20 & What should I cook for dinner tonight? \\

            \texttt{travel64d} & %
            64d (Potion \cite{minishlab2024model2vec})
            & \phantom{0}10 & 3 & 20 & Which travelling destination would surprise me in the best possible way? \\
            
            \texttt{visual512d} & %
            512d (OpenCLIP \cite{ilharco_gabriel_2021_5143773, datacomp})
            & \phantom{0}10 & 3 & 40 & Generate concept art for a videogame hero that players would instantly connect with. \\
            
            \midrule
            \texttt{dsp64d} & %
            64d (Potion \cite{minishlab2024model2vec})
            & 100 & 3 & 20 & Describe the main character of Shakespeare's play Hamlet. \\
            \bottomrule
        \end{tabular}
    }

    \label{fig.domains}
    \vspace{-2mm}
\end{table*}

\looseness-1\textbf{Domains \texttt{food64d}, \texttt{travel64d}, and \texttt{visual512d}}. We define $\Theta$ as $\{\theta \in \mathbb{R}^d : \norm{\theta}_2 \leq S\}$, with dimensionality $d$ based on pretrained embeddings and  $S=3$. First, we define $\mathcal{X}$ as $10$ questions per domain (see Table~\ref{fig.domains}). Second, for each question, we construct $3$ brief persona descriptions representing user preferences, and simulate user behavior $p_u$ by prompting GPT-4o-mini~\cite{GPT4omini} ($\text{temperature}=0$) to select preferred responses conditioned on these descriptions (see Figure~\ref{fig.introduction}). Third, we construct $\mathcal{Y}_\text{cand}$ by generating $20$ responses per question with GPT-4o for text domains (\texttt{food64d}, \texttt{travel64d}) and $40$ images per question with GPT-Image-1~\cite{GPTImage} for \texttt{visual512d}. Finally, we obtain $\phi(x, y)$ using pretrained models, specifically a sentence transformer for the 64-dimensional text domains~\cite{minishlab2024model2vec} and an OpenCLIP variant for the 512-dimensional vision domain~\cite{ilharco_gabriel_2021_5143773, Radford2021LearningTV, datacomp}. The framework is modular, so specialized (handcrafted or learned) embeddings can replace the pretrained ones to capture finer nuances when needed.

\textbf{Domain \texttt{dsp64d}}. For this domain, we similarly define $\Theta$ with dimensionality based on pretrained embeddings and $S=3$. First, we define $\mathcal{X}$ as $100$ questions sampled from the Domain Specific Preference (DSP) dataset~\cite{DBLP:journals/corr/abs-2309-03126, zhu2025onthefly}. Second, for each question, we sample $3$ user descriptions from the dataset to simulate user behavior $p_u$, prompting GPT-4o-mini to select preferred responses given each description. Third, we construct the candidate pool $\mathcal{Y}_\text{cand}$ for each question using GPT-4o, following the same approach as above. Finally, we obtain $\phi(x, y)$ using the same sentence transformer encoder as for the other text domains, yielding 64-dimensional representations.

%


\begin{figure*}[t]
  \centering

    \begin{subfigure}[t]{\textwidth}
      \centering
      \includegraphics[trim=0.5mm 2mm 0.5mm 2mm, clip, width=0.82\textwidth]{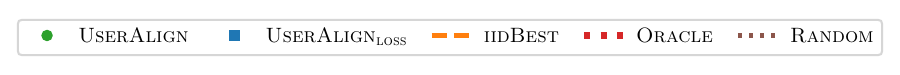}
    \end{subfigure}
    \vspace{-5mm}
    
    \begin{subfigure}[t]{0.48\textwidth}
      \includegraphics[height=3cm]{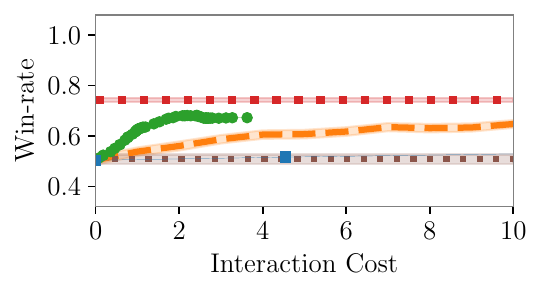}
      \vspace{-3mm}
      \caption{$u=\text{BTL}$: Win-rate vs. interaction cost}      
      \label{fig.results_foods2d.cost.btl}
    \end{subfigure}
    \ \ \ \ 
    \begin{subfigure}[t]{0.48\textwidth}
      \includegraphics[height=3cm]{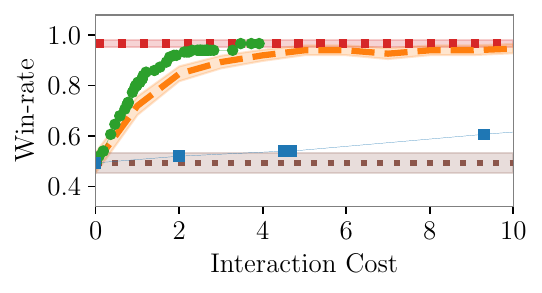}
      \vspace{-3mm}      
      \caption{$u=\text{Consistent}$: Win-rate vs. interaction cost}     
      \label{fig.results_foods2d.cost.det}
    \end{subfigure}

    \vspace{1.5mm}
    \begin{subfigure}[t]{\textwidth}
      \centering
      \includegraphics[trim=0.5mm 2mm 0.5mm 2mm, clip, width=0.82\textwidth]{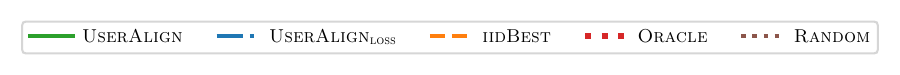}
    \end{subfigure}
    \vspace{-5mm}
    
    \begin{subfigure}[t]{0.48\textwidth}
      \includegraphics[height=3cm]{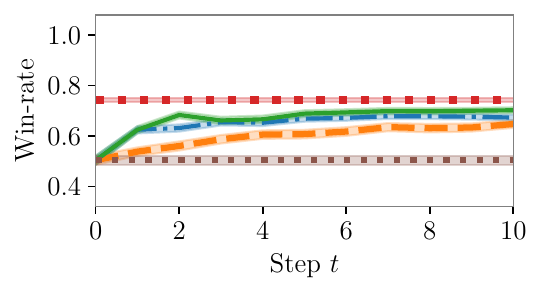}
      \vspace{-3mm}
      \caption{$u=\text{BTL}$: Win-rate w.r.t. increasing steps}
      \label{fig.results_foods2d.performance.btl}
    \end{subfigure}
    \ \ \ \ 
    \begin{subfigure}[t]{0.48\textwidth}
      \includegraphics[height=3cm]{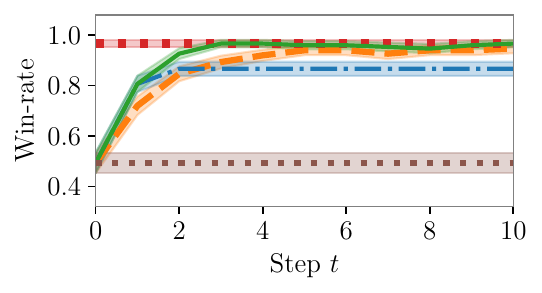}
      \vspace{-3mm}
      \caption{$u=\text{Consistent}$: Win-rate w.r.t. increasing steps} 
      \label{fig.results_foods2d.performance.det}
    \end{subfigure}
    
  \vspace{-1mm}
  \caption{\looseness-1Results on \texttt{food2d} across two types of user behavior. Top row shows win-rate vs. interaction cost trade-off; here \algoOurs{} and \algoOursLoss{} results show scatter plot corresponding to varying values of $\epsilon$. Bottom row shows win-rate per increasing interaction steps; here \algoOurs{} and \algoOursLoss{} are run for a given number of steps as mentioned in Footnote~\ref{footnote.algo_no_stopping}. In these plots, \textsc{Oracle} and \textsc{Random} are flat lines, and \textsc{iidBest} results are reported at different number of given steps.}
  \vspace{-3mm}
  \label{fig.results_foods2d}
\end{figure*}

\vspace{-1mm}
\subsection{Evaluation Setup}
\label{sec.experiments.setup}
\vspace{-1mm}
\textbf{Candidate pool generation.} For each question, we generate a diverse pool of candidate responses by following two sampling approaches. We obtain half of the candidate responses by sampling from GPT-4o or GPT-Image-1 at high temperature, conditioned only on the question, resulting in unbiased responses. We obtain the remaining half by prompting the generative model to first reason about possible diverse interests relevant to the question, then generate one response per interest, thereby obtaining greater diversity in the pool. We provide full details about pool generation in the supplementary material, and also report additional experiments using different candidate pools.

\looseness-1\textbf{Metrics.} We consider two metrics: (i) \textit{interaction cost} as the number of interaction steps taken by a given method before outputting a final response; (ii) \textit{win-rate} $p_u[y \succ \widetilde{y} \mid x]$ of the output response $y$ against baseline $\widetilde{y}$, where $\widetilde{y}$ is generated by the original model (GPT-4o or GPT-Image-1) with zero sampling temperature for the same question $x$ (see Section~\ref{sec.problemsetup}). The win-rate is computed using different types of simulated user preference models $p_u$, with $u=\text{BTL}$, $u=\text{Consistent}$, or $u=\text{GPT}$.

\vspace{-1mm}
\subsection{Evaluated Methods}
\vspace{-1mm}
\looseness-1\textbf{\textsc{Oracle} and \textsc{Random}.} These two baseline methods provide upper/lower performance bounds. \textsc{Oracle} selects the response with highest utility using explicit knowledge of the user. \textsc{Random} selects a response uniformly at random from the candidate pool. These methods don't have any interaction cost.

\looseness-1\textbf{\textsc{iidBest}.} This method collects user preferences over a fixed number of steps, each time randomly sampling a pair of responses. At the end, it computes the MLE $\widehat{\theta}$ of the user's preference as in Eq.~\eqref{eq:mle-estimate}, and selects the response maximizing the utility, i.e., $\widehat{y} = \argmax_{y \in \mathcal{Y}_\text{cand}} \langle \widehat{\theta}, \phi(x, y) \rangle$.

\looseness-1\textbf{\algoOurs{} and \algoOursLoss{}.} These two methods select pairs using Algorithm~\ref{alg:general-pad}. Our main method, \algoOurs{}, is based on the practical confidence set introduced in Section~\ref{subsec:practical-conf-set}. \algoOursLoss{} is based on the theoretical confidence set, introduced in Section~\ref{sec:theoretical-analysis}. Both methods are parameterized by $(\epsilon, \delta)$ which determines their stopping condition (see lines~1~and~10 in Algorithm~\ref{alg:general-pad}). We set $\delta=0.05$ for all the experiments. When reporting results, we will vary the value of $\epsilon$ in $[0, S]$ to get a trade-off between interaction cost and win-rate.\footnote{\looseness-1We will additionally look at dynamics when running these algorithms for a fixed number of steps without using $\epsilon$-based stopping; here, the algorithm will resort to i.i.d. sampling of $y^{(2)}$ if no viable response remains.\label{footnote.algo_no_stopping}}

\subsection{Evaluation Results}
\label{sec.experiments.results}
\vspace{-1mm}


\begin{figure*}[t]
  \centering

    \begin{subfigure}[t]{\textwidth}
      \centering
      \includegraphics[trim=0.5mm 2mm 0.5mm 2mm, clip, width=0.82\textwidth]{figs/5_experiments/rq2/legend_reward_vs_cost_plot_foods_2d.pdf}
    \end{subfigure}

    \begin{subfigure}[t]{0.48\textwidth}
      \includegraphics[height=3cm]{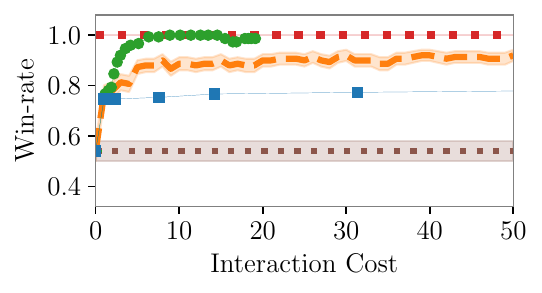}
      \vspace{-2.5mm}
      \caption{Domain \texttt{food64d}}
    \end{subfigure}
    \ \ \ \  
    \begin{subfigure}[t]{0.48\textwidth}
      \includegraphics[height=3cm]{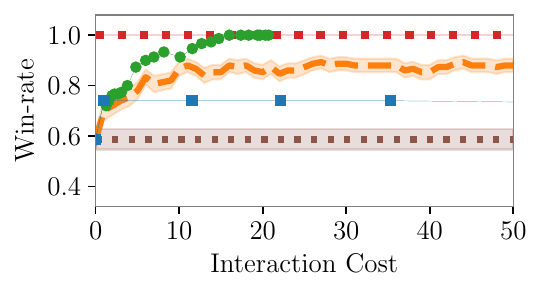}
      \vspace{-2.5mm}
      \subcaption{Domain \texttt{travel64d}}
    \end{subfigure}
    
    \begin{subfigure}[t]{0.48\textwidth}
      \includegraphics[height=3cm]{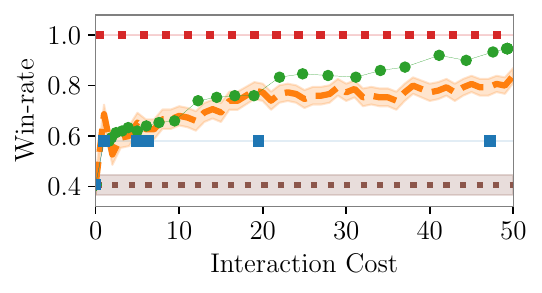}
      \vspace{-2.5mm}
      \subcaption{Domain \texttt{visual512d}}
    \end{subfigure}
    \ \ \ \  
    \begin{subfigure}[t]{0.48\textwidth}
      \includegraphics[height=3cm]{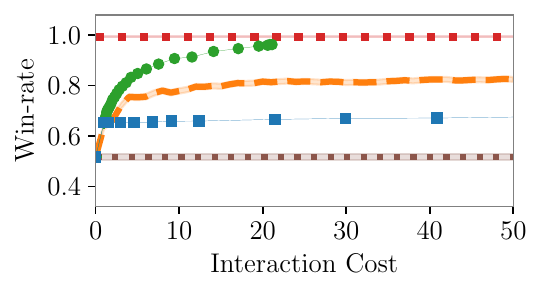}
      \vspace{-2.5mm}
      \caption{Domain \texttt{dsp64d}}
  \end{subfigure}

  \caption{\looseness-1Win-rate vs. interaction cost trade-off on domains with arbitrary preference spaces. Here \algoOurs{} and \algoOursLoss{} results show scatter plot corresponding to varying values of $\epsilon$. \textsc{Oracle} and \textsc{Random} are flat lines, and \textsc{iidBest} results are reported at different number of given steps.
  }
  \label{fig.four_domains_half}
  \vspace{-4.5mm}
\end{figure*}

All results are averaged over five random seeds, questions, and user personas (see Table~\ref{fig.domains}).

\textbf{Results w.r.t. different types of users in 2D.} Figure~\ref{fig.results_foods2d} shows the results on \texttt{food2d} across two types of user behaviors ($u=\text{BTL}$ and $u=\text{Consistent}$). \algoOurs{} achieves high win-rate at a low interaction cost, outperforming \algoOursLoss{}, \textsc{iidBest}, and \textsc{Random}. Moreover, Figures~\ref{fig.results_foods2d.cost.btl}~ and~\ref{fig.results_foods2d.cost.det} showcase that \algoOurs{} is effective in automatically deciding how many interaction steps are needed to achieve competitive win-rates based on its stopping criterion.

\textbf{Results in an arbitrary preference space.} Figure~\ref{fig.four_domains_half} shows the results on four domains with arbitrary preference spaces, where the user preferences are given by GPT-based simulated personas.  
\algoOurs{} achieves competitive win-rates at lower interaction costs, outperforming \algoOursLoss{}, \textsc{iidBest}, and \textsc{Random} across all domains. As an illustrative example, Figure~\ref{fig.introduction} highlights that \algoOurs{} picks informative comparison queries, and can identify a high-quality response quickly even in high-dimensional settings. Overall, these findings highlight the robustness and adaptability of our method across diverse domains and representation spaces.
%

\section{Evaluation with Human Users}
\label{sec.userstudy}

Next, to go beyond simulated user preference models considered above ($u=\mathrm{BTL}$, $u=\mathrm{Consistent}$, $u=\mathrm{GPT}$), we evaluate the methods with an additional user behavior type, $u=\mathrm{Human}$, with preferences coming from human users.

\textbf{Evaluation setup.}
We consider two domains, \texttt{food64d} and \texttt{visual512d}, due to the high cost involved in this evaluation.
We compare three methods (\algoOurs{}, \textsc{iidBest}, and \textsc{Random}) using the same candidate pool sizes as in the earlier experiments (\texttt{food64d} has $20$ responses per question and \texttt{visual512d} has $40$ images per question).
To ensure a fair comparison and keep cognitive load manageable, we fix the per-domain interaction budget to $10$ for \texttt{food64d} and $20$ for \texttt{visual512d}, and we report results at interaction steps $t \in \{5, 10\}$ for \texttt{food64d} and at $t \in \{10, 20\}$ for \texttt{visual512d}.
Personalization is varied with two conditions to test robustness to how preferences are specified. In the with-persona condition, participants see a concise persona and are asked to roleplay it (similar to Section~\ref{sec.experiments}). In the without-persona condition, no persona is shown and participants choose according to their own preferences, reflecting greater individual variability and a more challenging setting. Method identities are blinded to the users throughout.

\textbf{Web application and participation sessions.}
We provide an overview of the user study setup below, and full details are available in Appendix~\ref{sec-app:userstudy}. We developed a web application to expose the methods through an interactive interface. We recruited a total of $960$ participants on Amazon Mechanical Turk, split uniformly across the domains, methods, and personalization conditions. Before a participation session began, each participant was randomly assigned a domain (\texttt{food64d} or \texttt{visual512d}), a question from that domain's question set, an interaction method (\algoOurs{} or \textsc{iidBest}), and a personalization condition (with-persona or without-persona). Each session has three stages. Stage 1 presents the question and the assigned personalization instruction. Stage 2 consists of the fixed number of pairwise comparisons based on the interaction budget mentioned above. In Stage 3, participants compare three final candidates against the zero-temperature baseline for their assigned question: the method's selection at the later interaction step, the selection at the earlier interaction step, and a candidate chosen by the \textsc{Random} method. On average, a session lasted about $7.5$ minutes and participants received a compensation of $1.80~\text{USD}$ for each session.

\textbf{Results.}
In each domain, for each of \algoOurs{} and \textsc{iidBest}, we collected $n=120$ evaluation sessions per setting; for \textsc{Random}, which does not depend on interaction steps, we recorded $2n=240$ samples.  Table~\ref{tab:userstudy} reports win-rates across both domains and personalization conditions. Under matched interaction budgets, \algoOurs{} consistently outperforms \textsc{iidBest}, with larger gains at the later interaction step in each domain and higher absolute performance in the with-persona condition. These results highlight that the improvements observed for the simulated user behavior in Section~\ref{sec.experiments} also carry over when user preferences are provided by humans.

\begin{table*}[t]
    \caption{\looseness-1Results of the \texttt{food64d} and \texttt{visual512d} evaluation with human users under fixed interaction budgets. The table reports win-rate versus the zero-temperature baseline at two different interaction steps for two personalization conditions.}

    \label{tab:userstudy}
    \centering
    \scalebox{0.84}{
        \setlength\tabcolsep{5pt}
        \renewcommand{\arraystretch}{1.12}
        \begin{tabular}{lcccccccc}
            \toprule
            & \multicolumn{4}{c}{\texttt{food64d}} & \multicolumn{4}{c}{\texttt{visual512d}} \\
            \cmidrule(lr){2-5}\cmidrule(lr){6-9}
            \textbf{Method} & \multicolumn{2}{c}{\textbf{With-persona}} & \multicolumn{2}{c}{\textbf{Without-persona}} & \multicolumn{2}{c}{\textbf{With-persona}} & \multicolumn{2}{c}{\textbf{Without-persona}} \\
            \cmidrule(lr){2-3}\cmidrule(lr){4-5}\cmidrule(lr){6-7}\cmidrule(lr){8-9}
            & $t=5$ & $t=10$ & $t=5$ & $t=10$ & $t=10$ & $t=20$ & $t=10$ & $t=20$ \\
            \midrule
            \textsc{Random}   & $44.2$ ($3.2$) & $44.2$ ($3.2$) & $43.8$ ($3.2$) & $43.8$ ($3.2$) & $46.9$ ($3.2$) & $46.9$ ($3.2$) & $42.3$ ($3.2$) & $42.3$ ($3.2$) \\
            \textsc{IIDBest}  & $71.7$ ($4.1$) & $78.3$ ($3.8$) & $73.3$ ($4.0$) & $76.7$ ($3.9$) & $75.6$ ($3.9$) & $81.5$ ($3.6$) & $78.3$ ($3.8$) & $79.2$ ($3.7$) \\
            \algoOurs{}       & $82.5$ ($3.5$) & $89.2$ ($2.8$) & $79.2$ ($3.7$) & $85.8$ ($3.2$) & $81.7$ ($3.5$) & $90.0$ ($2.7$) & $79.8$ ($3.7$) & $82.4$ ($3.5$) \\
            \bottomrule
        \end{tabular}
    }
\end{table*}

\vspace{-1.5mm}
\section{Concluding Discussions}\label{sec.conclusion}
\vspace{-1mm}
We introduced \textsc{UserAlign}, a novel inference-time method for efficiently aligning generative model responses to user preferences via sequential pairwise comparisons. Through theoretical analysis and empirical evaluation across text and image generation domains, we demonstrated the efficacy of our method, which provides substantial speed-ups and cost reductions. %

\looseness-1Next, we discuss a few limitations of our work and outline a future plan to address them.  First, we considered pairwise comparisons as the feedback modality; it would be useful to explore richer forms of user feedback such as rankings or free-form inputs. Second, our method relies on a pre-generated response pool; it would be interesting to study how we can guide the generations adaptively during the preference elicitation process. Finally, our method treats a user's question independently; it would be useful to leverage historical user preferences data, e.g., from previous questions, to reduce interaction cost. %
As broader implications of our work, we note that adopting personalized alignment methods in AI systems raises ethical considerations, including transparency in how preferences are collected/used and guarding against potential bias or overfitting to noisy feedback. Future deployment of such methods in real-world would require proactive safeguards to ensure that personalization enhances user experience without compromising ethical aspects.

\begin{ack}
\looseness-1Nachiket Kotalwar did this work during an internship at the Max Planck Institute for Software Systems (MPI-SWS), Germany. Funded/Co-funded by the European Union (ERC, TOPS, 101039090). Views and opinions expressed are however those of the author(s) only and do not necessarily reflect those of the European Union or the European Research Council. Neither the European Union nor the granting authority can be held responsible for them.
\end{ack}

    \bibliographystyle{unsrt}
    \bibliography{main}
    \clearpage       
    \clearpage    
\appendix
{
    \allowdisplaybreaks
\section{Table of Contents}
\label{sec-app:toc}

In this section, we briefly describe the content provided in the paper's appendices.

\begin{itemize}
    \item Section~\ref{sec-app.food2d} provides further results for \texttt{food2d}, showcasing a worked example of personalized alignment and evaluations with GPT-based simulated users.
    \item Section~\ref{sec-app.tabular} presents win-rate vs. cost trade-off results in tabular form, then offers a brief head-to-head comparison of the methods.
    \item Section~\ref{sec-app.experiments} gives further details about the procedures for generating candidate response pools and reports experiments with unbiased response pools.
    \item \looseness-1Section~\ref{sec-app:userstudy} provides additional details about the web-application used for human evaluation.
    \item Section~\ref{sec-app.costs}  provides a breakdown of API usage, computational efficiency, and costs.
    \item Section~\ref{sec-app:proofs}  contains complete proofs for the theoretical results in the main paper.
\end{itemize}
    \clearpage
\section{Additional Results for \texttt{food2d} domain}
\label{sec-app.food2d}

\setlength{\parindent}{0pt}

\tcbset{
    outputbox/.style={
        colback=gray!2,
        colframe=gray,
        boxrule=0.5pt,
        arc=1mm,
        before skip=2pt,
        after skip=2pt,
        fontupper=\footnotesize,
        left=0mm, right=0mm, top=0mm, bottom=0mm
    },
    question/.style={
        colback=gray!2,
        colframe=gray,
        boxrule=0.5pt,
        arc=1mm,
        top=0.0mm,
        bottom=0.0mm,
        left=0mm,
        right=0mm,
        before skip=2pt,
        after skip=2pt,
        fontupper=\normalsize
    },
    generations/.style={
        colback=gray!2,
        colframe=gray,
        boxrule=0.5pt,
        arc=1mm,
        top=0.0mm,
        bottom=0.0mm,
        left=0mm,
        right=0mm,
        before skip=2pt,
        after skip=2pt,
        fontupper=\scriptsize
    },
    choicebox/.style={
        colback=gray!2,
        colframe=gray,
        boxrule=0.5pt,
        arc=1mm,
        top=0mm,
        bottom=0mm,
        left=0.7mm,
        right=0.7mm,
        before skip=2pt, after skip=2pt,
        fontupper=\normalsize
    },
    imagecell/.style={
        colback=white,
        colframe=white,
        boxrule=0pt,
        sharp corners=south,
        left=-3mm, right=0mm, top=-1.5mm, bottom=-5mm,
    },
    pairtext/.style={
        colback=white,
        colframe=gray!60,
        boxrule=0.4pt,
        arc=1mm,
        fontupper=\scriptsize,
        left=0.5mm, right=0.5mm,
        top=0.5mm, bottom=0.5mm
    },
    chosentext/.style={
        colback=green!10!white,
        colframe=green!60!black,
        boxrule=1.3pt,
        arc=1mm,
        fontupper=\scriptsize,
        left=0.5mm, right=0.5mm,
        top=0.5mm, bottom=0.5mm
    }
}

\begin{figure*}[htbp!]
    \begin{tcolorbox}[question]
    \circled{1} Input: What should I cook for dinner tonight?
    \end{tcolorbox}
    
    \begin{tcolorbox}[generations]
    \circled{2} {\normalsize System generates a diverse pool of responses:}
    \begin{enumerate}[leftmargin=4mm, labelsep=0.3em, itemsep=0ex]
        \item How about trying a creamy garlic chicken pasta with a side of roasted vegetables?
        \item How about trying a garlic butter shrimp pasta with a fresh green salad on the side?
        \item How about trying a creamy garlic Tuscan chicken with spinach and sun-dried tomatoes, served over pasta or rice?
        \item How about trying a delicious and easy-to-make chicken stir-fry with vegetables and rice?
        \item How about trying a delicious chicken stir-fry with vegetables and rice for dinner tonight?
        \item How about trying a creamy garlic chicken with sautéed spinach and mashed potatoes for a comforting and delicious meal?
        \item How about a comforting chicken stir-fry with vegetables and rice for a quick and delicious dinner?
        \item How about trying a creamy garlic parmesan chicken with roasted vegetables on the side?
        \item How about trying a creamy garlic parmesan chicken with roasted vegetables and a side of mashed potatoes?
        \item How about trying a creamy garlic parmesan chicken with roasted vegetables on the side?
        \item How about a spicy chickpea and vegetable curry with a medley of bell peppers, zucchini, and spinach, served over basmati rice?
        \item How about a comforting vegetable risotto with fresh herbs like basil and parsley, where the creamy arborio rice perfectly ...
        \item How about a spicy roasted vegetable stir-fry with a hint of chili, served over quinoa for a nutritious and flavorful plant-based meal?
        \item How about a comforting vegetable risotto with creamy Arborio rice, sautéed mushrooms, and tender asparagus, finished with a sprinkle ...
        \item How about a hearty roast chicken with garlic mashed potatoes and a side of steamed broccoli for a satisfying, protein-rich meal?
        \item How about a juicy ribeye steak with a side of garlic butter mushrooms for a rich, savory meal?
        \item How about sizzling chicken fajitas with a touch of jalapeño for that perfect hint of heat, served alongside a vibrant mix of ...
        \item How about a spicy chicken stir-fry with bell peppers and a hint of garlic, served over jasmine rice for a satisfying yet mildly fiery meal?
        \item Ignite your taste buds with a fiery Thai green curry loaded with fresh vegetables and a generous splash of chili heat.
        \item How about a comforting vegetable stir-fry with broccoli, bell peppers, and tofu, served over fluffy jasmine rice, seasoned with ...
    \end{enumerate}
    \end{tcolorbox}

    \begin{tcolorbox}[choicebox]
    \circled{3} {\algoOurs{} eliciting user preferences via pairwise comparisons:}
    \vspace{-2mm}

    \begin{center}
        \includegraphics[width=0.9\textwidth]{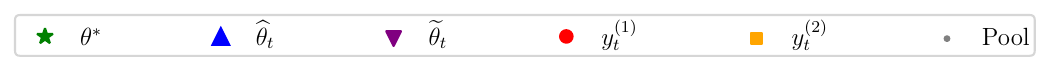}
    \end{center}
    
    \vspace{-3mm}
    
    \noindent
    \begin{minipage}{0.49\textwidth}
        \begin{tcolorbox}[imagecell]
            \centering
            \includegraphics[trim={0 0 0 3mm}, clip,width=0.6\textwidth]{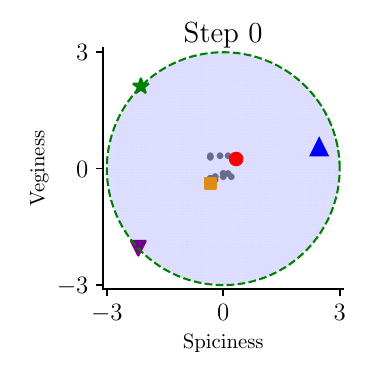}
        \end{tcolorbox}
        \begin{tcolorbox}[chosentext, top=1.7mm, bottom=1.7mm]
            Ignite your taste buds with a fiery Thai green curry loaded with fresh vegetables and a generous splash of chili heat.
        \end{tcolorbox}
        \vspace{-3mm}
        \begin{tcolorbox}[pairtext]
            How about a juicy ribeye steak with a side of garlic butter mushrooms for a rich, savory meal?
        \end{tcolorbox}
    \end{minipage}
    \hfill
    \begin{minipage}{0.49\textwidth}
        \begin{tcolorbox}[imagecell]
            \centering
                \includegraphics[trim={0 0 0 3mm}, clip,width=0.6\textwidth]{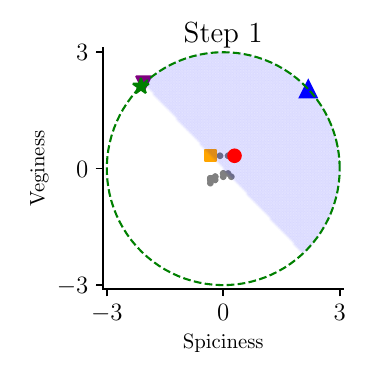}
        \end{tcolorbox}
        \begin{tcolorbox}[pairtext]
            How about a spicy chickpea and vegetable curry with a medley of bell peppers, zucchini, and spinach, served over basmati rice?
        \end{tcolorbox}
        \vspace{-3mm}
        \begin{tcolorbox}[chosentext]
            How about a comforting vegetable risotto with fresh herbs like basil and parsley, where the creamy arborio rice perfectly complements the flavors of sautéed mushrooms, peas, and asparagus?
        \end{tcolorbox}
    \end{minipage}
    
    \end{tcolorbox}

    \begin{tcolorbox}[outputbox]
        \circled{4} Output \\
        \noindent
        \begin{minipage}{0.27\textwidth}
            \begin{tcolorbox}[imagecell, boxrule=0pt, left=-2mm, right=-2mm, top=-1mm, bottom=-1mm]
                \centering
                \includegraphics[trim={0 3mm 0 3mm}, clip, width=0.98\textwidth]{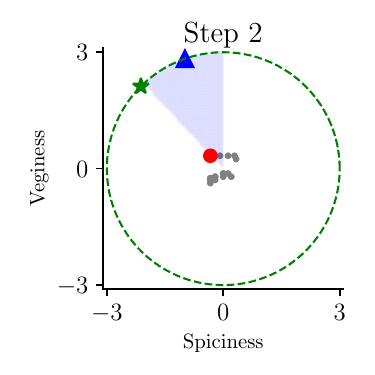}
            \end{tcolorbox}
        \end{minipage}%
        \hfill
        \begin{minipage}{0.67\textwidth}
            \begin{tcolorbox}[pairtext, fontupper=\footnotesize]
                How about a comforting vegetable risotto with fresh herbs like basil and parsley, where the creamy arborio rice perfectly complements the flavors of sautéed mushrooms, peas, and asparagus?
            \end{tcolorbox}
        \end{minipage}
    \end{tcolorbox}
    
    \caption{
    \looseness-1Illustrative example in the \texttt{foods2d} domain showing inference-time personalized alignment (see also \iftoggle{MainSuppContent}{Figures~\ref{fig.introduction}~and~\ref{fig.synth_2d})}{Figures~1~and~2)}. In Stage 3, $y_t^{(1)}$ appears above $y_t^{(2)}$ at each step (preferred {\color{green!60!black}{highlighted}}). The user is simulated by GPT-4o-mini, conditioned on the persona: ``\emph{A plant-based eater who avoids all heat and meat (spiciness: $-1.0$, veginess: $1.0$), preferring gentle, nourishing dishes with no spice.}''
    }
    \label{app-fig.illustration}
\end{figure*}

\subsection{Illustrative Example of Personalized Alignment}

Figure~\ref{app-fig.illustration} provides a concrete step-by-step illustration of inference-time personalized alignment in the \texttt{foods2d} domain. In addition to showing each stage of the interaction (i.e, the user question, candidate response generation, preference elicitation, and final response selection; also see \iftoggle{MainSuppContent}{Figure~\ref{fig.introduction}}{Figure~1}), the figure also visualizes how the version space is progressively reduced throughout the process (also see \iftoggle{MainSuppContent}{Figure~\ref{fig.synth_2d}}{Figure~2}). This integrated example highlights both the workflow of \algoOurs{} and the mechanism by which it narrows the set of plausible user preferences.

\subsection{Empirical Results with GPT-based Users}

Figure~\ref{app-fig.results_foods2d_gpt} shows results on \texttt{food2d} with GPT-based simulated users (also see \iftoggle{MainSuppContent}{Figure~\ref{fig.results_foods2d}}{Figure~3}). \algoOurs{} consistently achieves high win-rate at a low interaction cost, confirming that \algoOurs{} remains effective in the 2D setting even when user preferences are simulated by a large language model.


\begin{figure*}[h!]
  \centering

  \begin{subfigure}[t]{\textwidth}
    \centering
    \includegraphics[trim=0.5mm 2mm 0.5mm 2mm, clip, width=0.99\textwidth]{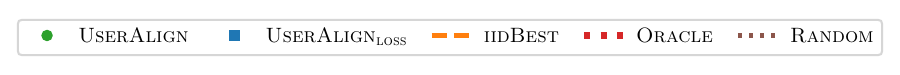}
  \end{subfigure}
  \vspace{-5mm}

  \begin{subfigure}[t]{0.55\textwidth}
    \centering
    \includegraphics[width=\textwidth]{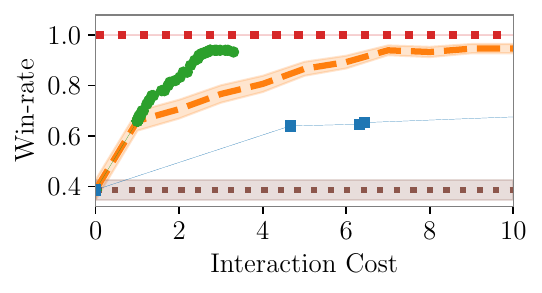}
    \vspace{-3mm}
    \caption{Win-rate vs. interaction cost}
    \label{app-fig.results_foods2d_gpt.cost}
  \end{subfigure}

  \vspace{2mm}

  \begin{subfigure}[t]{\textwidth}
    \centering
    \includegraphics[trim=0.5mm 2mm 0.5mm 2mm, clip, width=0.99\textwidth]{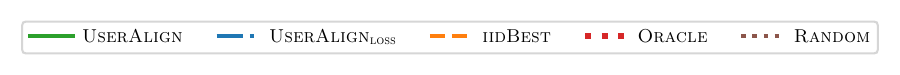}
  \end{subfigure}
  \vspace{-5mm}

  \begin{subfigure}[t]{0.55\textwidth}
    \centering
    \includegraphics[width=\textwidth]{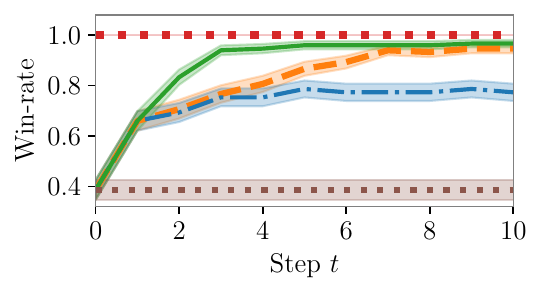}
    \vspace{-3mm}
    \caption{Win-rate w.r.t. increasing steps}
    \label{app-fig.results_foods2d_gpt.performance}
  \end{subfigure}

  \caption{Results on \texttt{food2d} for $u=\text{GPT}$. Top plot shows win-rate vs. interaction cost trade-off; here \algoOurs{} and \algoOursLoss{} results show scatter plot corresponding to varying values of $\epsilon$. Bottom plot shows win-rate per increasing interaction steps; here \algoOurs{} and \algoOursLoss{} are run for a given number of steps as mentioned in \iftoggle{MainSuppContent}{Footnote~\ref{footnote.algo_no_stopping}}{Footnote~1}. In these plots, \textsc{Oracle} and \textsc{Random} are flat lines, and \textsc{iidBest} results are reported at different number of given steps.}
  \label{app-fig.results_foods2d_gpt}
\end{figure*}

    \clearpage
\section{Results in Tabular Form}
\label{sec-app.tabular}

Tables~\ref{app-tab.tood_Travel}~and~\ref{app-tab.visual_dsp} offer additional details regarding the results, complementary to \iftoggle{MainSuppContent}{Figure~\ref{fig.four_domains_half}}{Figure~4}. The number of steps is capped at $199$.  Colored entries for \algoOurs{} and \algoOursLoss{} correspond to $\epsilon=0.0$, the default and most interpretable alignment setting. For \textsc{iidBest}, highlighted entries are chosen to match the interaction cost of the main \algoOurs{} configuration, enabling direct comparison. \textsc{Oracle} and \textsc{Random} are included as fixed reference baselines. \algoOurs{} consistently reaches high win-rates at low interaction cost, especially as the stopping threshold $\epsilon$ decreases, confirming its efficiency in identifying user-aligned responses with minimal queries. The reported standard errors are small, indicating reliable and stable performance.

Table~\ref{tab.head_to_head_all} complements these results with direct head-to-head win-rates of \algoOurs{} versus \textsc{iidBest}. These results confirm that the superior win-rates observed against a fixed baseline also extend to one-on-one comparisons across all domains.

%


\begin{table*}[h]
\caption{Win-rate vs. interaction cost trade-off on domains with arbitrary preference spaces (\texttt{food64d} and \texttt{travel64d} domains). Results are presented as mean (sem), complementing the results shown in \iftoggle{MainSuppContent}{Figure~\ref{fig.four_domains_half}}{Figure~4}. For readability, we report win-rates as percentages.}
\vspace{1mm}
\centering
\scalebox{0.93}{
\setlength\tabcolsep{6pt}
\renewcommand{\arraystretch}{1.2}
\begin{tabular}{l cc cc}
\toprule
\multirow{2}{*}{\textbf{Method}} & \multicolumn{2}{c}{\texttt{food64d}} & \multicolumn{2}{c}{\texttt{travel64d}} \\
\cmidrule(lr){2-3} \cmidrule(lr){4-5}
& Win-rate (\%) & Cost & Win-rate (\%) & Cost \\
\midrule
\algoOurs{} ($\epsilon=3.0$) & $ \phantom{0}54.00~(\phantom{0}4.07) $ & $ \phantom{0}\phantom{0}0.00~(\phantom{0}0.00) $ & $ \phantom{0}58.67~(\phantom{0}4.02) $ & $ \phantom{0}\phantom{0}0.00~(\phantom{0}0.00) $ \\
\algoOurs{} ($\epsilon=2.0$) & $ \phantom{0}74.67~(\phantom{0}3.55) $ & $ \phantom{0}\phantom{0}1.00~(\phantom{0}0.00) $ & $ \phantom{0}74.00~(\phantom{0}3.58) $ & $ \phantom{0}\phantom{0}1.00~(\phantom{0}0.00) $ \\
\algoOurs{} ($\epsilon=1.0$) & $ \phantom{0}84.67~(\phantom{0}2.94) $ & $ \phantom{0}\phantom{0}2.19~(\phantom{0}0.07) $ & $ \phantom{0}76.00~(\phantom{0}3.49) $ & $ \phantom{0}\phantom{0}1.95~(\phantom{0}0.07) $ \\
\algoOurs{} ($\epsilon=0.5$) & $ 100.00~(\phantom{0}0.00) $ & $ \phantom{0}11.39~(\phantom{0}0.23) $ & $ \phantom{0}94.67~(\phantom{0}1.83) $ & $ \phantom{0}11.57~(\phantom{0}0.23) $ \\
\algoOurs{} ($\epsilon=0.4$) & $ 100.00~(\phantom{0}0.00) $ & $ \phantom{0}13.47~(\phantom{0}0.22) $ & $ \phantom{0}97.33~(\phantom{0}1.32) $ & $ \phantom{0}13.82~(\phantom{0}0.22) $ \\
\algoOurs{} ($\epsilon=0.3$) & $ \phantom{0}98.67~(\phantom{0}0.94) $ & $ \phantom{0}15.49~(\phantom{0}0.25) $ & $ 100.00~(\phantom{0}0.00) $ & $ \phantom{0}16.00~(\phantom{0}0.23) $ \\
\algoOurs{} ($\epsilon=0.2$) & $ \phantom{0}97.33~(\phantom{0}1.32) $ & $ \phantom{0}16.85~(\phantom{0}0.29) $ & $ 100.00~(\phantom{0}0.00) $ & $ \phantom{0}18.31~(\phantom{0}0.29) $ \\
\algoOurs{} ($\epsilon=0.1$) & $ \phantom{0}98.67~(\phantom{0}0.94) $ & $ \phantom{0}18.28~(\phantom{0}0.30) $ & $ 100.00~(\phantom{0}0.00) $ & $ \phantom{0}19.67~(\phantom{0}0.23) $ \\
\rowcolor{green!20}
\algoOurs{} ($\epsilon=0.0$) & $ \phantom{0}98.67~(\phantom{0}0.94) $ & $ \phantom{0}19.13~(\phantom{0}0.24) $ & $ 100.00~(\phantom{0}0.00) $ & $ \phantom{0}20.69~(\phantom{0}0.21) $ \\
\midrule
\algoOursLoss{} ($\epsilon=3.0$) & $ \phantom{0}54.00~(\phantom{0}4.07) $ & $ \phantom{0}\phantom{0}0.00~(\phantom{0}0.00) $ & $ \phantom{0}58.67~(\phantom{0}4.02) $ & $ \phantom{0}\phantom{0}0.00~(\phantom{0}0.00) $ \\
\algoOursLoss{} ($\epsilon=2.0$) & $ \phantom{0}74.67~(\phantom{0}3.55) $ & $ \phantom{0}\phantom{0}1.00~(\phantom{0}0.00) $ & $ \phantom{0}74.00~(\phantom{0}3.58) $ & $ \phantom{0}\phantom{0}1.00~(\phantom{0}0.00) $ \\
\algoOursLoss{} ($\epsilon=1.0$) & $ \phantom{0}82.67~(\phantom{0}3.09) $ & $ 180.52~(\phantom{0}4.70) $ & $ \phantom{0}76.00~(\phantom{0}3.49) $ & $ 195.04~(\phantom{0}2.26) $ \\
\algoOursLoss{} ($\epsilon=0.5$) & $ \phantom{0}84.67~(\phantom{0}2.94) $ & $ 199.00~(\phantom{0}0.00) $ & $ \phantom{0}76.67~(\phantom{0}3.45) $ & $ 199.00~(\phantom{0}0.00) $ \\
\algoOursLoss{} ($\epsilon=0.4$) & $ \phantom{0}84.67~(\phantom{0}2.94) $ & $ 199.00~(\phantom{0}0.00) $ & $ \phantom{0}76.67~(\phantom{0}3.45) $ & $ 199.00~(\phantom{0}0.00) $ \\
\algoOursLoss{} ($\epsilon=0.3$) & $ \phantom{0}84.67~(\phantom{0}2.94) $ & $ 199.00~(\phantom{0}0.00) $ & $ \phantom{0}76.67~(\phantom{0}3.45) $ & $ 199.00~(\phantom{0}0.00) $ \\
\algoOursLoss{} ($\epsilon=0.2$) & $ \phantom{0}84.67~(\phantom{0}2.94) $ & $ 199.00~(\phantom{0}0.00) $ & $ \phantom{0}76.67~(\phantom{0}3.45) $ & $ 199.00~(\phantom{0}0.00) $ \\
\algoOursLoss{} ($\epsilon=0.1$) & $ \phantom{0}84.67~(\phantom{0}2.94) $ & $ 199.00~(\phantom{0}0.00) $ & $ \phantom{0}76.67~(\phantom{0}3.45) $ & $ 199.00~(\phantom{0}0.00) $ \\
\rowcolor{blue!20}
\algoOursLoss{} ($\epsilon=0.0$) & $ \phantom{0}84.67~(\phantom{0}2.94) $ & $ 199.00~(\phantom{0}0.00) $ & $ \phantom{0}76.67~(\phantom{0}3.45) $ & $ 199.00~(\phantom{0}0.00) $ \\
\midrule
\textsc{iidBest} ($t=0$) & $ \phantom{0}54.00~(\phantom{0}4.07) $ & $ \phantom{0}\phantom{0}0.00~(\phantom{0}0.00) $ & $ \phantom{0}58.67~(\phantom{0}4.02) $ & $ \phantom{0}\phantom{0}0.00~(\phantom{0}0.00) $ \\
\textsc{iidBest} ($t=5$) & $ \phantom{0}87.33~(\phantom{0}2.72) $ & $ \phantom{0}\phantom{0}5.00~(\phantom{0}0.00) $ & $ \phantom{0}78.00~(\phantom{0}3.38) $ & $ \phantom{0}\phantom{0}5.00~(\phantom{0}0.00) $ \\
\textsc{iidBest} ($t=10$) & $ \phantom{0}88.67~(\phantom{0}2.59) $ & $ \phantom{0}10.00~(\phantom{0}0.00) $ & $ \phantom{0}86.67~(\phantom{0}2.78) $ & $ \phantom{0}10.00~(\phantom{0}0.00) $ \\
\textsc{iidBest} ($t=15$) & $ \phantom{0}90.00~(\phantom{0}2.45) $ & $ \phantom{0}15.00~(\phantom{0}0.00) $ & $ \phantom{0}85.33~(\phantom{0}2.89) $ & $ \phantom{0}15.00~(\phantom{0}0.00) $ \\
\rowcolor{orange!20}
\textsc{iidBest} ($t=20$) & $ \phantom{0}90.00~(\phantom{0}2.45) $ & $ \phantom{0}20.00~(\phantom{0}0.00) $ & $ \phantom{0}85.33~(\phantom{0}2.89) $ & $ \phantom{0}20.00~(\phantom{0}0.00) $ \\
\rowcolor{orange!20}
\textsc{iidBest} ($t=25$) & $ \phantom{0}90.00~(\phantom{0}2.45) $ & $ \phantom{0}25.00~(\phantom{0}0.00) $ & $ \phantom{0}87.33~(\phantom{0}2.72) $ & $ \phantom{0}25.00~(\phantom{0}0.00) $ \\
\rowcolor{orange!20}
\textsc{iidBest} ($t=50$) & $ \phantom{0}92.00~(\phantom{0}2.22) $ & $ \phantom{0}50.00~(\phantom{0}0.00) $ & $ \phantom{0}88.00~(\phantom{0}2.65) $ & $ \phantom{0}50.00~(\phantom{0}0.00) $ \\
\textsc{iidBest} ($t=75$) & $ \phantom{0}92.67~(\phantom{0}2.13) $ & $ \phantom{0}75.00~(\phantom{0}0.00) $ & $ \phantom{0}90.00~(\phantom{0}2.45) $ & $ \phantom{0}75.00~(\phantom{0}0.00) $ \\
\textsc{iidBest} ($t=100$) & $ \phantom{0}92.67~(\phantom{0}2.13) $ & $ 100.00~(\phantom{0}0.00) $ & $ \phantom{0}90.00~(\phantom{0}2.45) $ & $ 100.00~(\phantom{0}0.00) $ \\
\textsc{iidBest} ($t=150$) & $ \phantom{0}93.33~(\phantom{0}2.04) $ & $ 150.00~(\phantom{0}0.00) $ & $ \phantom{0}91.33~(\phantom{0}2.30) $ & $ 150.00~(\phantom{0}0.00) $ \\
\textsc{iidBest} ($t=199$) & $ \phantom{0}93.33~(\phantom{0}2.04) $ & $ 199.00~(\phantom{0}0.00) $ & $ \phantom{0}90.67~(\phantom{0}2.38) $ & $ 199.00~(\phantom{0}0.00) $ \\
\midrule
\rowcolor{red!20}
\textsc{Oracle} & $ 100.00~(\phantom{0}0.00) $ & $ \phantom{0}\phantom{0}0.00~(\phantom{0}0.00) $ & $ 100.00~(\phantom{0}0.00) $ & $ \phantom{0}\phantom{0}0.00~(\phantom{0}0.00) $ \\
\midrule
\rowcolor{brown!20}
\textsc{Random} & $ \phantom{0}54.00~(\phantom{0}4.07) $ & $ \phantom{0}\phantom{0}0.00~(\phantom{0}0.00) $ & $ \phantom{0}58.67~(\phantom{0}4.02) $ & $ \phantom{0}\phantom{0}0.00~(\phantom{0}0.00) $ \\
\bottomrule
\end{tabular}
}
\label{app-tab.tood_Travel}
\end{table*}

\begin{table*}[h]
\caption{Win-rate vs. interaction cost trade-off on domains with arbitrary preference spaces (\texttt{visual512d} and \texttt{dsp64d} domains). Results are presented as mean (sem), complementing the results shown in \iftoggle{MainSuppContent}{Figure~\ref{fig.four_domains_half}}{Figure~4}. For readability, we report win-rates as percentages.}
\vspace{1mm}
\centering
\scalebox{0.93}{
\setlength\tabcolsep{6pt}
\renewcommand{\arraystretch}{1.2}
\begin{tabular}{l cc cc}
\toprule
\multirow{2}{*}{\textbf{Method}} & \multicolumn{2}{c}{\texttt{visual512d}} & \multicolumn{2}{c}{\texttt{dsp64d}} \\
\cmidrule(lr){2-3} \cmidrule(lr){4-5}
& Win-rate (\%) & Cost & Win-rate (\%) & Cost \\
\midrule
\algoOurs{} ($\epsilon=3.0$) & $ \phantom{0}40.67~(\phantom{0}4.01) $ & $ \phantom{0}\phantom{0}0.00~(\phantom{0}0.00) $ & $ \phantom{0}51.73~(\phantom{0}1.29) $ & $ \phantom{0}\phantom{0}0.00~(\phantom{0}0.00) $ \\
\algoOurs{} ($\epsilon=2.0$) & $ \phantom{0}58.00~(\phantom{0}4.03) $ & $ \phantom{0}\phantom{0}1.00~(\phantom{0}0.00) $ & $ \phantom{0}65.33~(\phantom{0}1.23) $ & $ \phantom{0}\phantom{0}1.00~(\phantom{0}0.00) $ \\
\algoOurs{} ($\epsilon=1.0$) & $ \phantom{0}65.33~(\phantom{0}3.89) $ & $ \phantom{0}\phantom{0}7.58~(\phantom{0}0.25) $ & $ \phantom{0}70.73~(\phantom{0}1.17) $ & $ \phantom{0}\phantom{0}1.49~(\phantom{0}0.02) $ \\
\algoOurs{} ($\epsilon=0.5$) & $ \phantom{0}86.00~(\phantom{0}2.83) $ & $ \phantom{0}34.09~(\phantom{0}0.68) $ & $ \phantom{0}84.80~(\phantom{0}0.93) $ & $ \phantom{0}\phantom{0}5.03~(\phantom{0}0.07) $ \\
\algoOurs{} ($\epsilon=0.4$) & $ \phantom{0}92.00~(\phantom{0}2.22) $ & $ \phantom{0}41.12~(\phantom{0}0.74) $ & $ \phantom{0}88.53~(\phantom{0}0.82) $ & $ \phantom{0}\phantom{0}7.53~(\phantom{0}0.09) $ \\
\algoOurs{} ($\epsilon=0.3$) & $ \phantom{0}93.33~(\phantom{0}2.04) $ & $ \phantom{0}47.57~(\phantom{0}0.71) $ & $ \phantom{0}91.33~(\phantom{0}0.73) $ & $ \phantom{0}11.53~(\phantom{0}0.11) $ \\
\algoOurs{} ($\epsilon=0.2$) & $ \phantom{0}94.67~(\phantom{0}1.83) $ & $ \phantom{0}49.31~(\phantom{0}0.72) $ & $ \phantom{0}94.67~(\phantom{0}0.58) $ & $ \phantom{0}17.07~(\phantom{0}0.13) $ \\
\algoOurs{} ($\epsilon=0.1$) & $ \phantom{0}94.67~(\phantom{0}1.83) $ & $ \phantom{0}49.31~(\phantom{0}0.72) $ & $ \phantom{0}96.00~(\phantom{0}0.51) $ & $ \phantom{0}20.58~(\phantom{0}0.14) $ \\
\rowcolor{green!20}
\algoOurs{} ($\epsilon=0.0$) & $ \phantom{0}94.67~(\phantom{0}1.83) $ & $ \phantom{0}49.31~(\phantom{0}0.72) $ & $ \phantom{0}96.27~(\phantom{0}0.49) $ & $ \phantom{0}21.12~(\phantom{0}0.14) $ \\
\midrule
\algoOursLoss{} ($\epsilon=3.0$) & $ \phantom{0}40.67~(\phantom{0}4.01) $ & $ \phantom{0}\phantom{0}0.00~(\phantom{0}0.00) $ & $ \phantom{0}51.73~(\phantom{0}1.29) $ & $ \phantom{0}\phantom{0}0.00~(\phantom{0}0.00) $ \\
\algoOursLoss{} ($\epsilon=2.0$) & $ \phantom{0}58.00~(\phantom{0}4.03) $ & $ \phantom{0}\phantom{0}1.00~(\phantom{0}0.00) $ & $ \phantom{0}65.33~(\phantom{0}1.23) $ & $ \phantom{0}\phantom{0}1.00~(\phantom{0}0.00) $ \\
\algoOursLoss{} ($\epsilon=1.0$) & $ \phantom{0}59.33~(\phantom{0}4.01) $ & $ 199.00~(\phantom{0}0.00) $ & $ \phantom{0}71.20~(\phantom{0}1.17) $ & $ 164.82~(\phantom{0}1.93) $ \\
\algoOursLoss{} ($\epsilon=0.5$) & $ \phantom{0}59.33~(\phantom{0}4.01) $ & $ 199.00~(\phantom{0}0.00) $ & $ \phantom{0}72.47~(\phantom{0}1.15) $ & $ 199.00~(\phantom{0}0.00) $ \\
\algoOursLoss{} ($\epsilon=0.4$) & $ \phantom{0}59.33~(\phantom{0}4.01) $ & $ 199.00~(\phantom{0}0.00) $ & $ \phantom{0}72.47~(\phantom{0}1.15) $ & $ 199.00~(\phantom{0}0.00) $ \\
\algoOursLoss{} ($\epsilon=0.3$) & $ \phantom{0}59.33~(\phantom{0}4.01) $ & $ 199.00~(\phantom{0}0.00) $ & $ \phantom{0}72.47~(\phantom{0}1.15) $ & $ 199.00~(\phantom{0}0.00) $ \\
\algoOursLoss{} ($\epsilon=0.2$) & $ \phantom{0}59.33~(\phantom{0}4.01) $ & $ 199.00~(\phantom{0}0.00) $ & $ \phantom{0}72.47~(\phantom{0}1.15) $ & $ 199.00~(\phantom{0}0.00) $ \\
\algoOursLoss{} ($\epsilon=0.1$) & $ \phantom{0}59.33~(\phantom{0}4.01) $ & $ 199.00~(\phantom{0}0.00) $ & $ \phantom{0}72.47~(\phantom{0}1.15) $ & $ 199.00~(\phantom{0}0.00) $ \\
\rowcolor{blue!20}
\algoOursLoss{} ($\epsilon=0.0$) & $ \phantom{0}59.33~(\phantom{0}4.01) $ & $ 199.00~(\phantom{0}0.00) $ & $ \phantom{0}72.47~(\phantom{0}1.15) $ & $ 199.00~(\phantom{0}0.00) $ \\
\midrule
\textsc{iidBest} ($t=0$) & $ \phantom{0}40.67~(\phantom{0}4.01) $ & $ \phantom{0}\phantom{0}0.00~(\phantom{0}0.00) $ & $ \phantom{0}51.73~(\phantom{0}1.29) $ & $ \phantom{0}\phantom{0}0.00~(\phantom{0}0.00) $ \\
\textsc{iidBest} ($t=5$) & $ \phantom{0}65.33~(\phantom{0}3.89) $ & $ \phantom{0}\phantom{0}5.00~(\phantom{0}0.00) $ & $ \phantom{0}75.33~(\phantom{0}1.11) $ & $ \phantom{0}\phantom{0}5.00~(\phantom{0}0.00) $ \\
\textsc{iidBest} ($t=10$) & $ \phantom{0}68.00~(\phantom{0}3.81) $ & $ \phantom{0}10.00~(\phantom{0}0.00) $ & $ \phantom{0}77.87~(\phantom{0}1.07) $ & $ \phantom{0}10.00~(\phantom{0}0.00) $ \\
\textsc{iidBest} ($t=15$) & $ \phantom{0}69.33~(\phantom{0}3.76) $ & $ \phantom{0}15.00~(\phantom{0}0.00) $ & $ \phantom{0}79.73~(\phantom{0}1.04) $ & $ \phantom{0}15.00~(\phantom{0}0.00) $ \\
\rowcolor{orange!20}
\textsc{iidBest} ($t=20$) & $ \phantom{0}77.33~(\phantom{0}3.42) $ & $ \phantom{0}20.00~(\phantom{0}0.00) $ & $ \phantom{0}81.60~(\phantom{0}1.00) $ & $ \phantom{0}20.00~(\phantom{0}0.00) $ \\
\rowcolor{orange!20}
\textsc{iidBest} ($t=25$) & $ \phantom{0}74.67~(\phantom{0}3.55) $ & $ \phantom{0}25.00~(\phantom{0}0.00) $ & $ \phantom{0}81.67~(\phantom{0}1.00) $ & $ \phantom{0}25.00~(\phantom{0}0.00) $ \\
\rowcolor{orange!20}
\textsc{iidBest} ($t=50$) & $ \phantom{0}84.00~(\phantom{0}2.99) $ & $ \phantom{0}50.00~(\phantom{0}0.00) $ & $ \phantom{0}82.47~(\phantom{0}0.98) $ & $ \phantom{0}50.00~(\phantom{0}0.00) $ \\
\textsc{iidBest} ($t=75$) & $ \phantom{0}83.33~(\phantom{0}3.04) $ & $ \phantom{0}75.00~(\phantom{0}0.00) $ & $ \phantom{0}83.60~(\phantom{0}0.96) $ & $ \phantom{0}75.00~(\phantom{0}0.00) $ \\
\textsc{iidBest} ($t=100$) & $ \phantom{0}86.67~(\phantom{0}2.78) $ & $ 100.00~(\phantom{0}0.00) $ & $ \phantom{0}84.27~(\phantom{0}0.94) $ & $ 100.00~(\phantom{0}0.00) $ \\
\textsc{iidBest} ($t=150$) & $ \phantom{0}86.00~(\phantom{0}2.83) $ & $ 150.00~(\phantom{0}0.00) $ & $ \phantom{0}84.87~(\phantom{0}0.93) $ & $ 150.00~(\phantom{0}0.00) $ \\
\textsc{iidBest} ($t=199$) & $ \phantom{0}84.67~(\phantom{0}2.94) $ & $ 199.00~(\phantom{0}0.00) $ & $ \phantom{0}84.47~(\phantom{0}0.94) $ & $ 199.00~(\phantom{0}0.00) $ \\
\midrule
\rowcolor{red!20}
\textsc{Oracle} & $ 100.00~(\phantom{0}0.00) $ & $ \phantom{0}\phantom{0}0.00~(\phantom{0}0.00) $ & $ \phantom{0}99.33~(\phantom{0}0.21) $ & $ \phantom{0}\phantom{0}0.00~(\phantom{0}0.00) $ \\
\midrule
\rowcolor{brown!20}
\textsc{Random} & $ \phantom{0}40.67~(\phantom{0}4.01) $ & $ \phantom{0}\phantom{0}0.00~(\phantom{0}0.00) $ & $ \phantom{0}51.73~(\phantom{0}1.29) $ & $ \phantom{0}\phantom{0}0.00~(\phantom{0}0.00) $ \\
\bottomrule
\end{tabular}
}
\label{app-tab.visual_dsp}
\end{table*}


%

\begin{table*}[h]
\caption{Head-to-head win-rate (\%) comparison between \algoOurs{} ($\epsilon=0.0$) and \textsc{iidBest} across four domains. Results are presented as mean (sem).}
\vspace{1mm}
\centering
\scalebox{0.93}{
\setlength\tabcolsep{8pt}
\renewcommand{\arraystretch}{1.2}
\begin{tabular}{lcccc}
\toprule
\multirow{2}{*}{\textbf{Comparison}} & \multicolumn{4}{c}{Win-rate (\%)} \\
\cmidrule(lr){2-5}
& \texttt{food64d} & \texttt{travel64d} & \texttt{visual512d} & \texttt{dsp64d} \\
\midrule
\algoOurs{} vs. \textsc{iidBest} ($t=20$) & $96.67~(1.47)$ & $93.33~(2.04)$ & $85.33~(2.89)$ & $91.00~(0.74)$ \\
\algoOurs{} vs. \textsc{iidBest} ($t=25$) & $97.33~(1.32)$ & $95.33~(1.72)$ & $85.33~(2.89)$ & $90.33~(0.76)$ \\
\algoOurs{} vs. \textsc{iidBest} ($t=50$) & $96.67~(1.47)$ & $95.33~(1.72)$ & $87.33~(2.72)$ & $90.07~(0.77)$ \\
\bottomrule
\end{tabular}
}
\label{tab.head_to_head_all}
\end{table*}

    \clearpage
\section{Pool Generation Details and Additional Experiments}
\label{sec-app.experiments}

\subsection{Details about Pool Generation Used in \iftoggle{MainSuppContent}{Section~\ref{sec.experiments}}{Section~5}}

Below we provide details about the pool generation that was used for evaluation in \iftoggle{MainSuppContent}{Section~\ref{sec.experiments}}{Section~5}.

\textbf{Pool generation procedure for \texttt{food64d}, \texttt{travel64d}, and \texttt{visual512d}.} As introduced in \iftoggle{MainSuppContent}{Section~\ref{sec.experiments.setup}}{Section~5.1}, for each question, we generate a diverse pool of candidate responses by following two sampling approaches.
\begin{itemize}
    \item We obtain half of the candidate responses by sampling from GPT-4o or GPT-Image-1 temperature $0.5$, conditioned only on the question, resulting in unbiased responses. We generate $10$ responses per question in \texttt{food64d} and \texttt{travel64d}, and $20$ in \texttt{visual512d}.
    \item We obtain the remaining half by prompting the generative model to first reason about possible diverse interests relevant to the question, at temperature $0.8$. For each generated interest, we then prompt GPT-4o (for text) or GPT-Image-1 (for vision) at temperature $0.5$ to produce a candidate response. We generate $10$ interests per question in \texttt{food64d} and \texttt{travel64d}, and $20$ in \texttt{visual512d}, with one response generated per interest.
\end{itemize}

\textbf{Pool generation procedure for \texttt{dsp64d}.}
Similar to the procedure discussed above, here we also generate a diverse pool of candidate responses by following slightly different approaches.
\begin{itemize}
    \item We obtain half of the candidate responses by sampling from GPT-4o at temperature $0.5$, conditioned only on the question, resulting in unbiased responses. We generate $10$ responses per question.
    \item We obtain the remaining half by using the interests provided in the DSP dataset. We generate $5$ responses for each interest using GPT-4o at temperature $0.5$, and from this collection, we sample $10$ responses to obtain the remaining half.
\end{itemize}

\subsection{Additional Experiments with Different Candidate Pools}

Below we provide evaluation results for experiments using different candidate pool to further assess the utility and robustness of our method.

\textbf{Unbiased response pool construction.}
For each question, we generate a pool of candidate responses by following one sampling approach. More concretely, responses are sampled from GPT-4o or GPT-Image-1 (at temperature $0.5$), conditioned only on the question. The pool size remains the same as in the main experiments, i.e., $20$ for text domains and $40$ for the image-based domain.

\textbf{Results.}
Figure~\ref{app-fig.results_base_pool} summarizes the results with these pools of unbiased responses. \algoOurs{} continues to achieve high win-rate at low interaction cost. These results demonstrate that \algoOurs{} remains effective and outperform baselines also when applied to this variant of candidate pools.


\begin{figure*}[h!]
  \centering

    \begin{subfigure}[t]{\textwidth}
      \centering
      \includegraphics[trim=0.5mm 2mm 0.5mm 2mm, clip, width=0.82\textwidth]{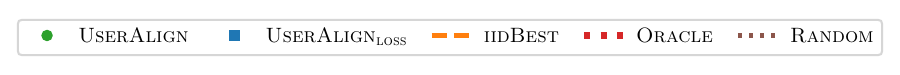}
    \end{subfigure}

    \begin{subfigure}[t]{0.48\textwidth}
      \includegraphics[height=3cm]{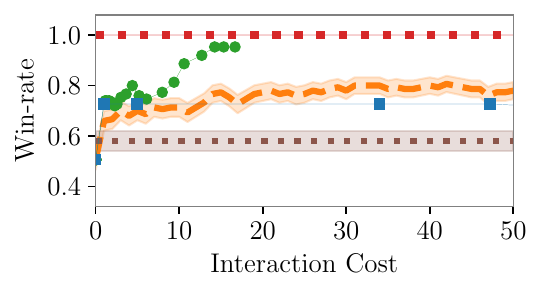}
      \vspace{-2.5mm}
      \caption{Domain \texttt{food64d}}
    \end{subfigure}
    \ \ \ \  
    \begin{subfigure}[t]{0.48\textwidth}
      \includegraphics[height=3cm]{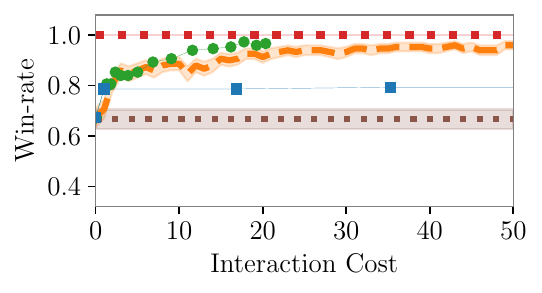}
      \vspace{-2.5mm}
      \subcaption{Domain \texttt{travel64d}}
    \end{subfigure}
    
    \begin{subfigure}[t]{0.48\textwidth}
      \includegraphics[height=3cm]{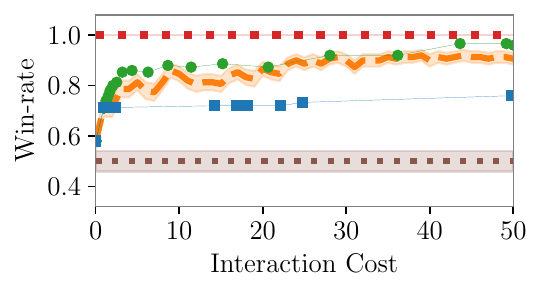}
      \vspace{-2.5mm}
      \subcaption{Domain \texttt{visual512d}}
    \end{subfigure}
    \ \ \ \  
    \begin{subfigure}[t]{0.48\textwidth}
      \includegraphics[height=3cm]{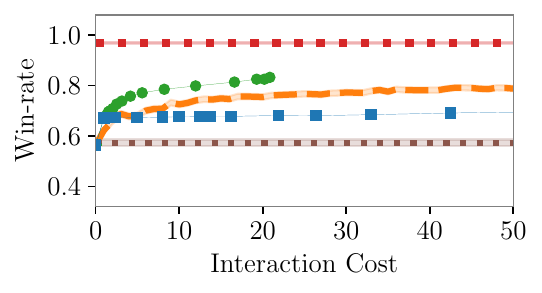}
      \vspace{-2.5mm}
      \caption{Domain \texttt{dsp64d}}
  \end{subfigure}    

  \caption{Win-rate vs. interaction cost trade-off on domains with arbitrary preference spaces with fully unbiased pools. Here \algoOurs{} and \algoOursLoss{} results show scatter plot corresponding to varying values of $\epsilon$. \textsc{Oracle} and \textsc{Random} are flat lines, and \textsc{iidBest} results are reported at different number of given steps.
  }
  \label{app-fig.results_base_pool}
\end{figure*}


\subsection{Practicality of Pairwise Comparisons vs. Best-of-N Selection}

As Best-of-$N$ requires users to scan a potentially large pool and pick a single winner, this approach becomes impractical as $N$ grows. Pairwise comparisons reduce this burden by focusing on one local decision at a time. Our approach relies solely on pairwise feedback, avoids repeating pairs, and maintains an incumbent that is continually challenged, encouraging early exploration and later convergence.

To demonstrate this, we experimented with increasing $N$. Table~\ref{app-tab.bon-vs-pairwise} shows the performance of \algoOurs{} at $N{=}1000$, where a usable Best-of-$N$ interface is no longer realistic. \algoOurs{} maintains high win-rates, demonstrating scalability without increasing user effort.

\begin{table}[h]
    \caption{\looseness-1Win-rate (\%, sem) on \texttt{food64d} across interaction budgets $t$ for a large pool size $N=1000$.}
    \vspace{5mm}
    \centering
    \scalebox{0.93}{
    \setlength\tabcolsep{12pt}
    \renewcommand{\arraystretch}{1.2}
    \begin{tabular}{lcccc
                    }
        \toprule
        \textbf{Method} & $t=0$ & $t=5$ & $t=10$ & $t=20$ \\
        \midrule
        \textsc{Random}     & $ \phantom{0}52.00~(4.09) $ & $ \phantom{0}52.00~(4.09) $ & $ \phantom{0}52.00~(4.09) $ & $ \phantom{0}52.00~(4.09) $ \\
        \textsc{iidBest}    & $ \phantom{0}52.00~(4.09) $ & $ \phantom{0}72.00~(3.68) $ & $ \phantom{0}76.00~(3.50) $ & $ \phantom{0}82.67~(3.10) $ \\
        \algoOurs{}          & $ \phantom{0}52.00~(4.09) $ & $ \phantom{0}86.67~(2.78) $ & $ \phantom{0}90.67~(2.38) $ & $ \phantom{0}96.67~(1.47) $ \\
        \bottomrule
    \end{tabular}}
    \label{app-tab.bon-vs-pairwise}
\end{table}

    \clearpage
    \section{Additional Details for Evaluation with Human Users}
\label{sec-app:userstudy}

\looseness-1Figures~\ref{app_fig.userstudy_text} and \ref{app_fig.userstudy_img} expand on the web application interface described in Section~\ref{sec.userstudy}. Figure~\ref{app_fig.userstudy_text} illustrates the \texttt{food64d} text-based workflow, showing the Stage~1 onboarding (for both personalization conditions), the Stage~2 comparisons, and the Stage~3 evaluation against baseline. Figure~\ref{app_fig.userstudy_img} presents the corresponding \texttt{visual512d} image-based workflow.

\begin{figure*}[ht!]
    \centering
    \fboxsep=0.5pt
    \fboxrule=0.5pt

    \begin{subfigure}{\textwidth}
        \centering
        \fcolorbox{gray}{white}{\includegraphics[width=\textwidth]{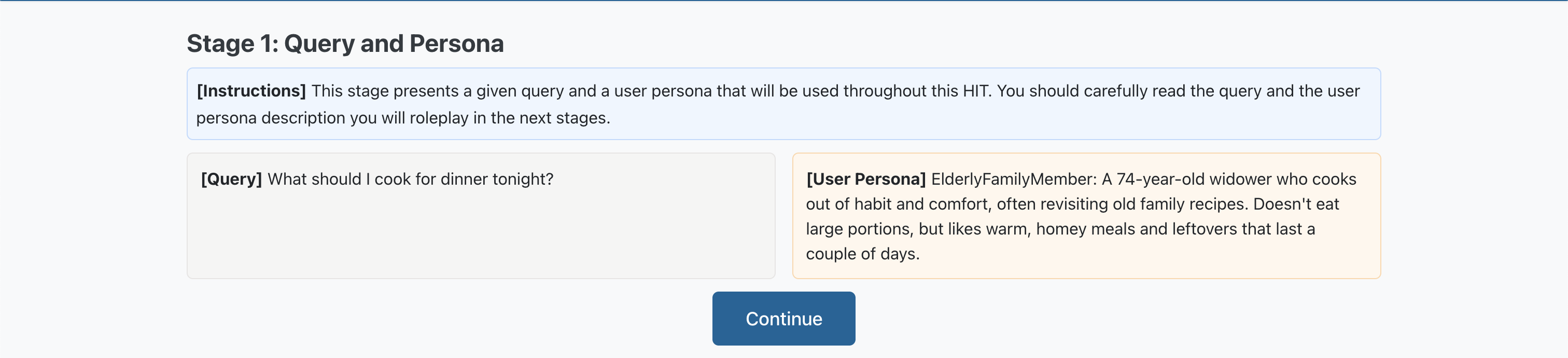}}
        \caption{\looseness-1Stage~1 for the \texttt{food64d} domain and with-persona condition.}
        \label{app_fig.userstudy_txt:a}
    \end{subfigure}

    \vspace{2mm}

    \begin{subfigure}{\textwidth}
        \centering
        \fcolorbox{gray}{white}{\includegraphics[width=\textwidth]{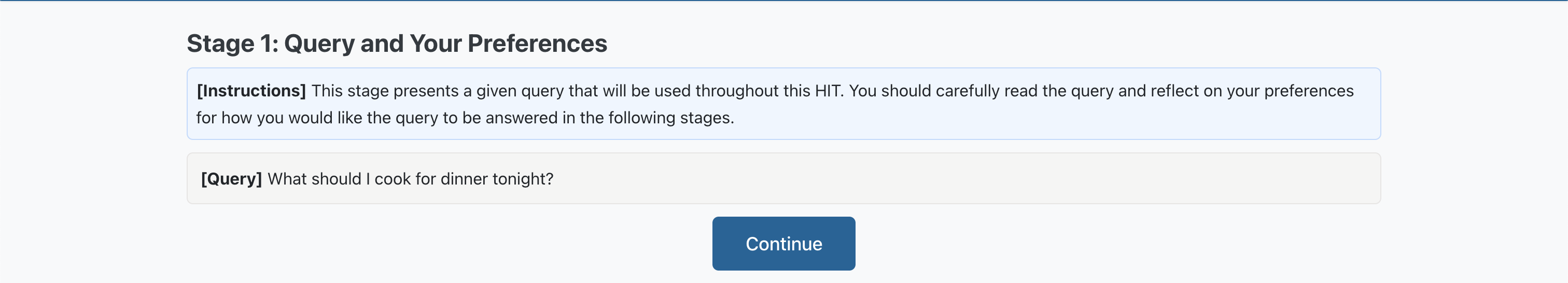}}
        \caption{\looseness-1Stage~1 for the \texttt{food64d} domain and  without-persona condition.}
        \label{app_fig.userstudy_txt:b}
    \end{subfigure}

    \vspace{2mm}

    \begin{subfigure}{\textwidth}
        \centering
        \fcolorbox{gray}{white}{\includegraphics[width=\textwidth]{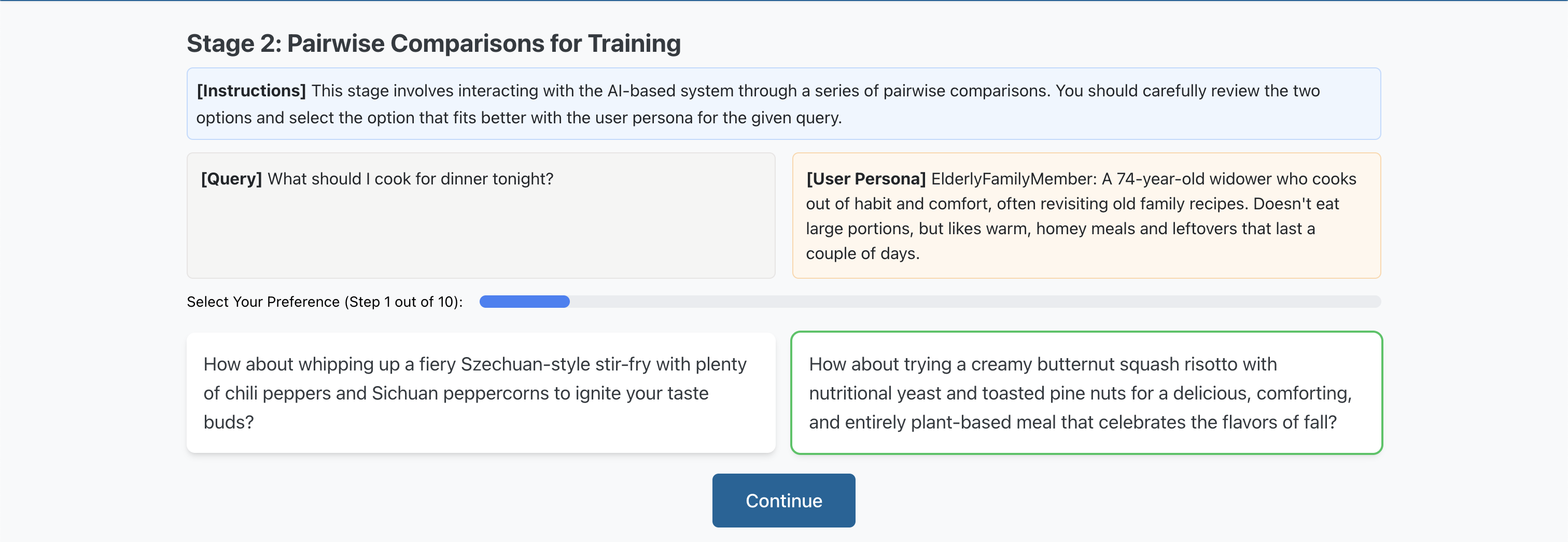}}
        \caption{Stage~2 for the \texttt{food64d} domain and with-persona condition.}
        \label{app_fig.userstudy_txt:c}
    \end{subfigure}

    \vspace{2mm}

    \begin{subfigure}{\textwidth}
        \centering
        \fcolorbox{gray}{white}{\includegraphics[width=\textwidth]{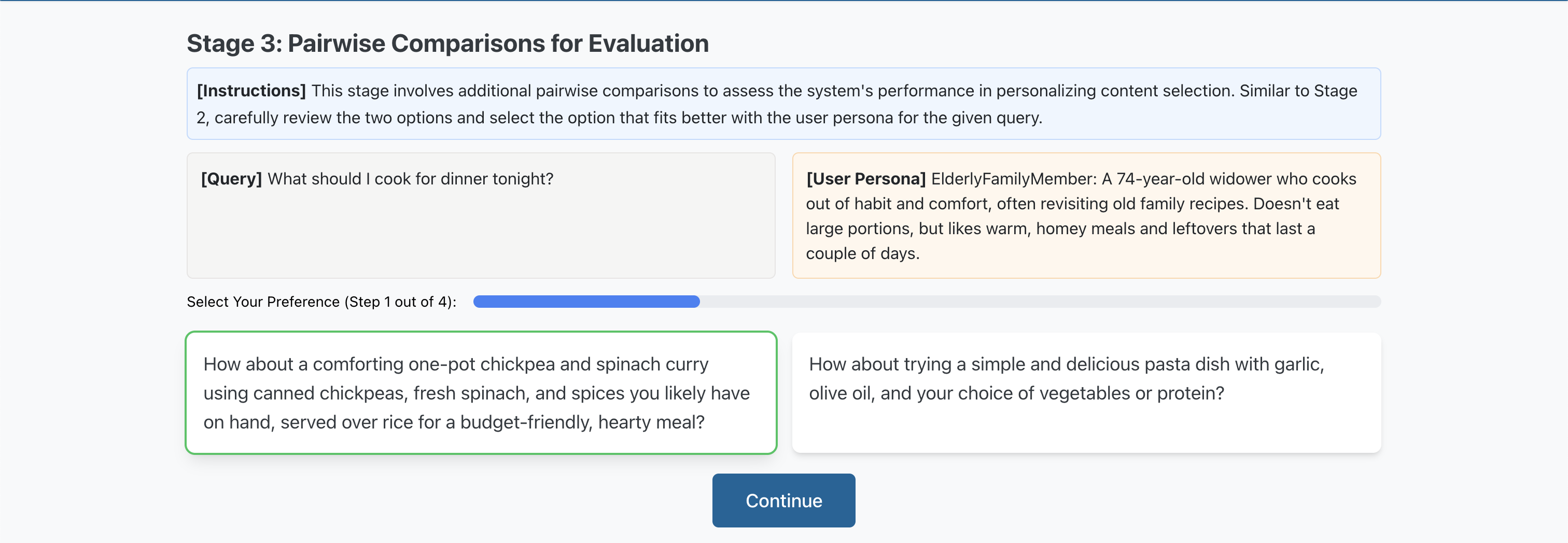}}
        \caption{Stage~3 for the \texttt{food64d} domain and with-persona condition.}
        \label{app_fig.userstudy_txt:d}
    \end{subfigure}

    \caption{Screenshots from the web application for the \texttt{food64d} text-based workflow.}
    \label{app_fig.userstudy_text}
\end{figure*}

\begin{figure*}[ht!]
    \centering
    \fboxsep=0.5pt
    \fboxrule=0.5pt

    \begin{subfigure}{\textwidth}
        \centering
        \fcolorbox{gray}{white}{\includegraphics[width=\textwidth]{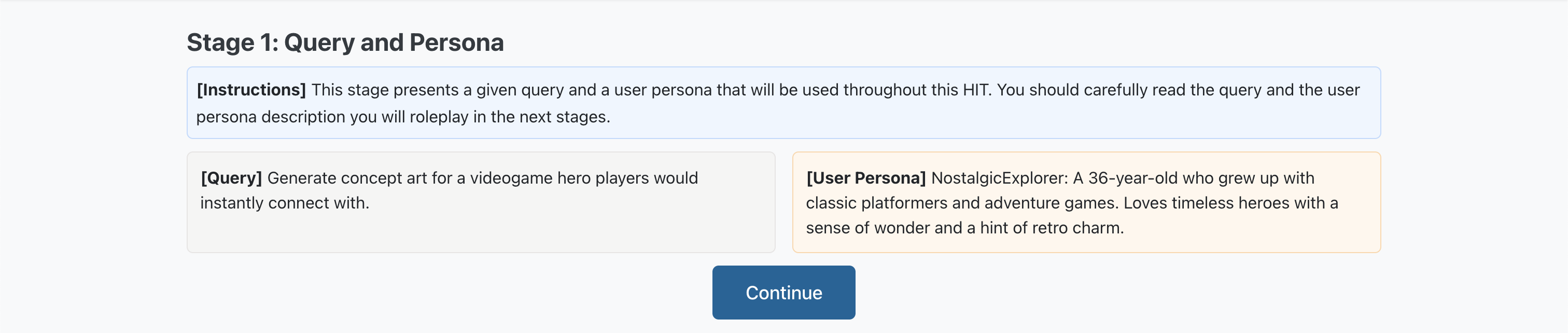}}
        \caption{\looseness-1Stage~1 for the \texttt{visual512d} domain and with-persona condition.}
        \label{app_fig.userstudy_img:a}
    \end{subfigure}

    \vspace{2mm}

    \begin{subfigure}{\textwidth}
        \centering
        \fcolorbox{gray}{white}{\includegraphics[width=\textwidth]{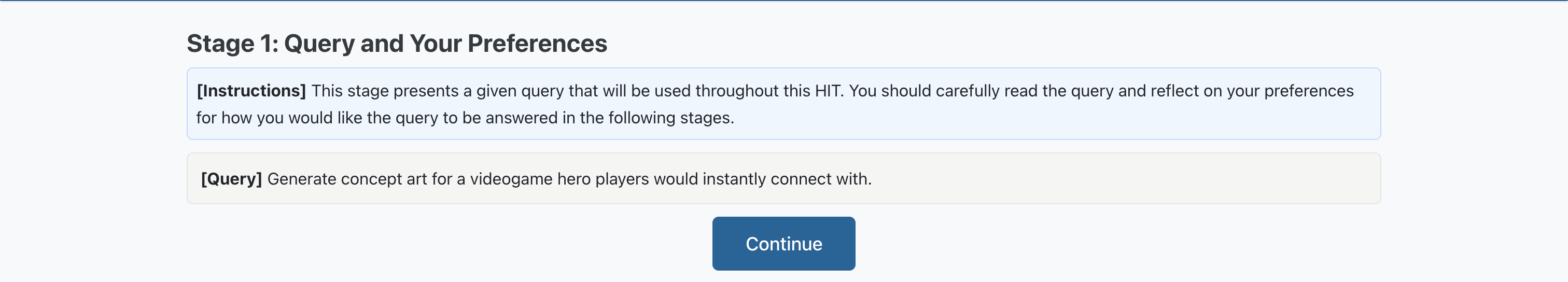}}
        \caption{\looseness-1Stage~1 for the \texttt{visual512d} domain and without-persona condition.}
        \label{app_fig.userstudy_img:b}
    \end{subfigure}

    \vspace{2mm}

    \begin{subfigure}{\textwidth}
        \centering
        \fcolorbox{gray}{white}{\includegraphics[width=\textwidth]{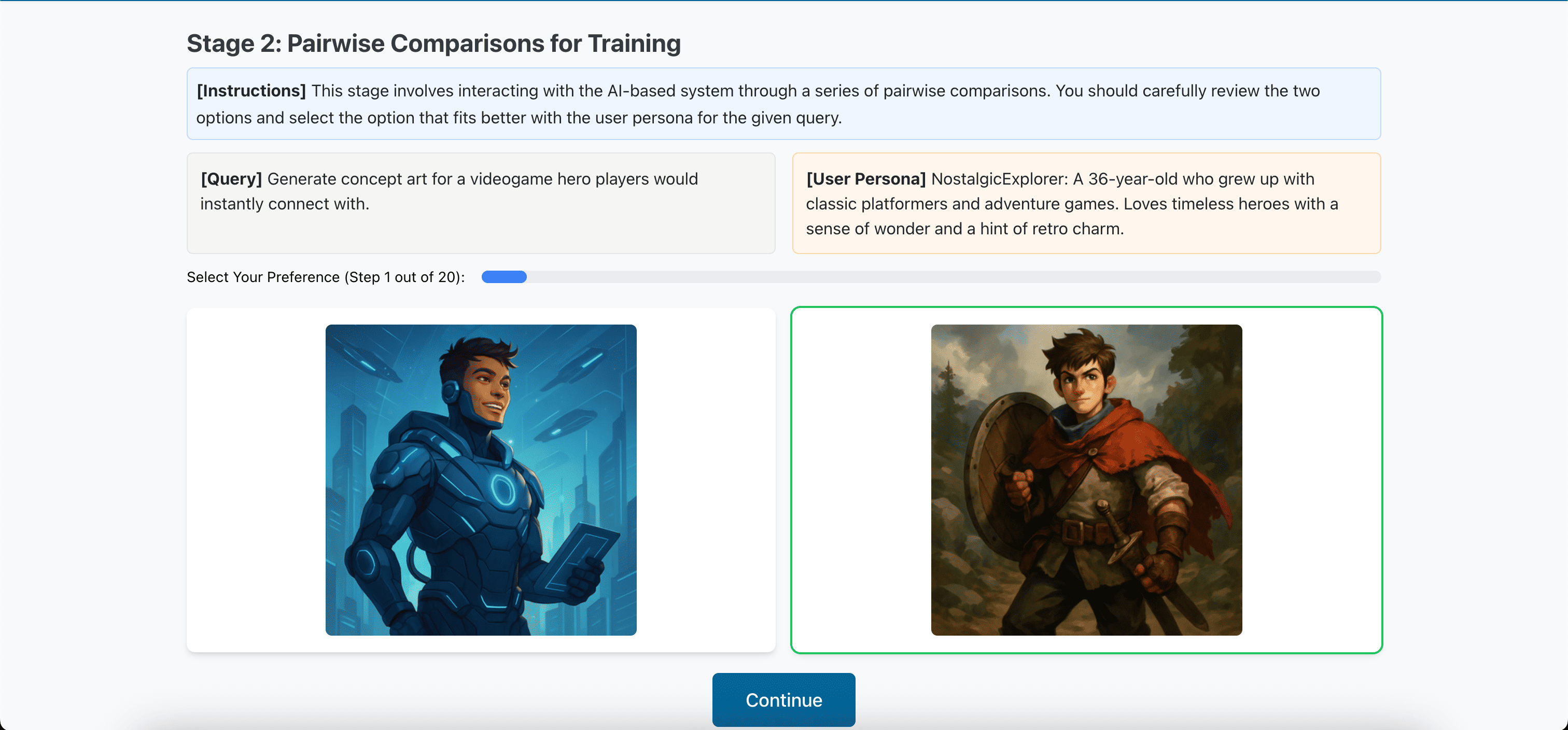}}
        \caption{Stage~2 for the \texttt{visual512d} domain and with-persona condition.}
        \label{app_fig.userstudy_img:c}
    \end{subfigure}

    \vspace{2mm}

    \begin{subfigure}{\textwidth}
        \centering
        \fcolorbox{gray}{white}{\includegraphics[width=\textwidth]{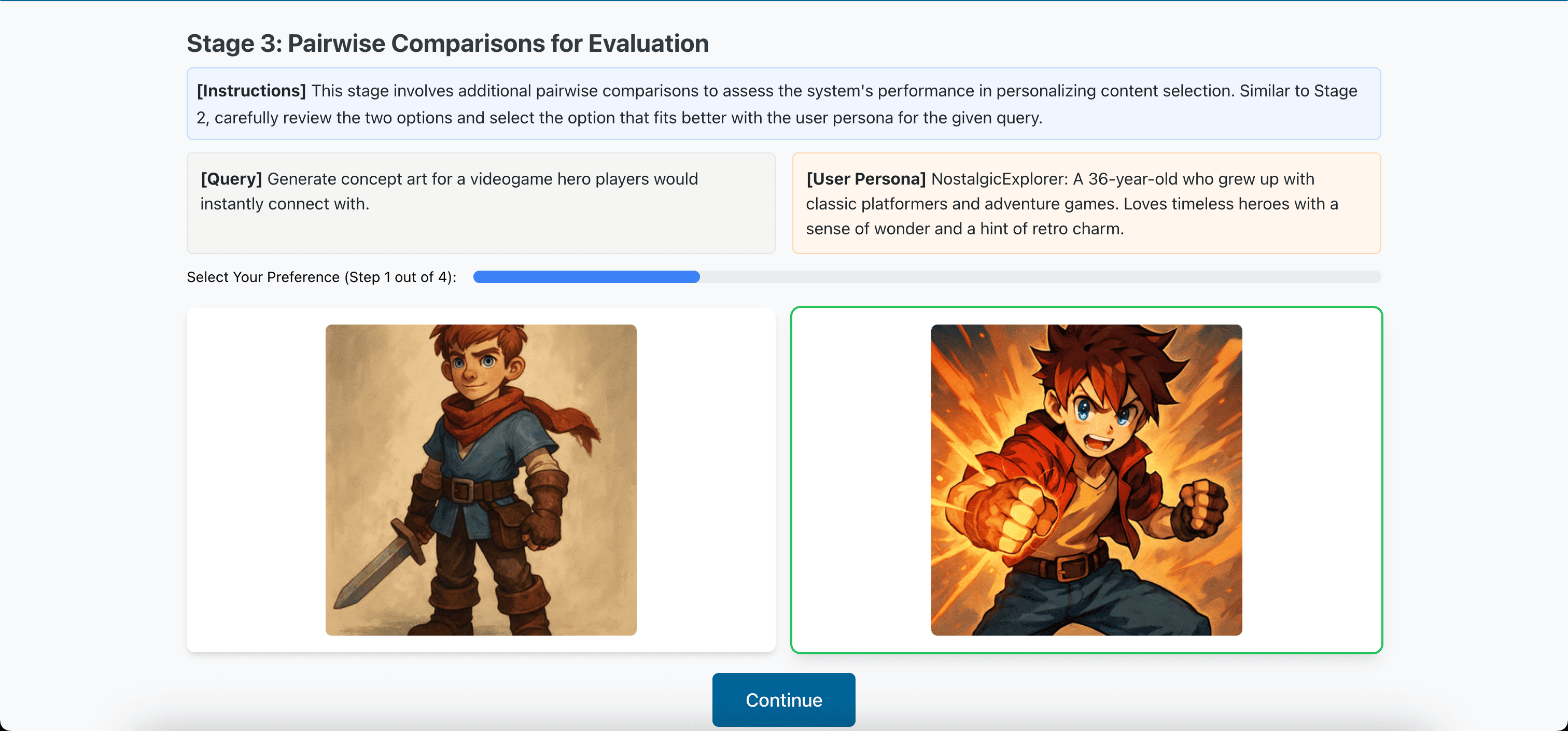}}
        \caption{Stage~3 for the \texttt{visual512d} domain and with-persona condition.}
        \label{app_fig.userstudy_img:d}
    \end{subfigure}

    \caption{\looseness-1Screenshots from the web application for the \texttt{visual512d} image-based workflow.}
    \label{app_fig.userstudy_img}
\end{figure*}

\clearpage

\subsection{Flow and On-Screen Instructions (Plain Text)}

\textbf{Landing.}
Participants first see an overview of the study and the system. The screen shows: ``Here, you will interact with an AI-based system. The system aims to provide personalized content for a given query and a user persona. The system will infer user preferences through pairwise comparisons while interacting with you. We are conducting this study as part of a research project''.

\textbf{Stage 1.}
First, participants see the query and the personalization condition. The screen shows the instructions, the query and, in the with-persona condition, also the user persona card; in the without-persona condition, it shows only the query. In the with-persona condition, the header shows ``Query and Persona'', and the instructions read: ``This stage presents a given query and a user persona that will be used throughout this HIT. Please read the query and the persona you will roleplay in the next stages''. In the without-persona condition, the header shows ``Query and Your Preferences'', and the instructions read: ``This stage presents a given query that will be used throughout this HIT. Please read the query and reflect on your preferences for how you would like the query to be answered in the following stages''.

\textbf{Stage 2.}
Next, participants perform pairwise comparisons. On top, the screen shows the instructions, the query and, in the with-persona condition, also the user persona card; in the without-persona condition, it shows only the query. The header shows ``Pairwise Comparisons for Training''. In the with-persona condition, the instructions read: ``Review the two options and select the one that better fits the user persona for the given query''. In the without-persona condition, the instructions read: ``Review the two options and select the one that best fits your own preferences for the given query''.

\textbf{Stage 3.}
Finally, participants complete additional pairwise comparisons for evaluation. On top, the screen shows the instructions, the query and, in the with-persona condition, also the user persona card; in the without-persona condition, it shows only the query. The header shows ``Pairwise Comparisons for Evaluation''. In this stage, participants compare, one matchup at a time, the system’s selected candidates at the two chosen interaction steps and a \textsc{Random} candidate against the shared zero-temperature baseline. In the with-persona condition, the instructions read: ``Review the two options and select the one that better fits the user persona for the given query''. In the without-persona condition, the instructions read: ``Review the two options and select the one that best aligns with your own preferences for the given query''.

    \clearpage
\section{Wall-clock, Compute, and API Costs}
\label{sec-app.costs}

All experiments ran on a compute node with dual AMD EPYC 7702 64-core processors (128 cores total) and 2TB DDR4 ECC memory (2933MHz).

\subsection{Computational Efficiency of \algoOurs{}}
\label{sec-app.comp-eff}

\looseness-1The optimization problem of computing $\widehat{\theta}_t$ in \iftoggle{MainSuppContent}{Eq.~\eqref{eq:mle-estimate}}{Eq.~2} is convex, with a convex objective $\mathcal{L}_t(\cdot)$ and a convex constraint set $\{\theta \in \mathbb{R}^d: \norm{\theta}_2 \leq S\}$. Moreover, the confidence set $\Theta_t$ defined in \iftoggle{MainSuppContent}{Eq.~\eqref{eq:confidence-set}}{Eq.~3} is also convex. The first response $y_t^{(1)} \gets \argmax_{y \in \mathcal{Y}_\text{cand}} \langle \widehat{\theta}_t, \phi(x, y) \rangle$ can be computed via $\abs{\mathcal{Y}_\text{cand}}$ inner product evaluations. The second response $(y_t^{(2)}, \widetilde{\theta}_t) \gets \argmax_{(y', \theta) \in \mathcal{Y}_\text{cand} \times \Theta_t} \langle \theta, \phi(x, y') - \phi(x, y_t^{(1)}) \rangle$ requires solving $\abs{\mathcal{Y}_\text{cand}}$ convex optimization problems: for each $y' \in \mathcal{Y}_\text{cand}$, solve $\widetilde{\theta}_t(y') \gets \argmax_{\theta \in \Theta_t} \langle \theta, \phi(x, y') - \phi(x, y_t^{(1)}) \rangle$, which has a linear objective and convex constraints. Then, select $y_t^{(2)} \gets \argmax_{y' \in \mathcal{Y}_\text{cand}} \langle \widetilde{\theta}_t(y'), \phi(x, y') - \phi(x, y_t^{(1)}) \rangle$ via another $\abs{\mathcal{Y}_\text{cand}}$ inner product evaluations.

Table~\ref{app-tab.walltime} reports average wall-time for a single step $t$ and its breakdown. As can be seen from these results, the wall-time in a single step is under a second, making it usable for real-world settings.

\begin{table}[h]
    \caption{Average wall-time for a step $t$ for \algoOurs{} across representative domains, broken down by the different computations done by the algorithm. Results are presented as mean values in seconds.}
    \vspace{1mm}
    \centering
    \scalebox{0.9}{
    \setlength\tabcolsep{4pt}
    \renewcommand{\arraystretch}{1.15}
    \begin{tabular}{lcccc}
        \toprule
        \textbf{Domain} & Wall-time Total (s) & Wall-time $\widehat{\theta}_t$ (s) & Wall-time $y_t^{(1)}$ (s) & Wall-time $y_t^{(2)}$ (s) \\
        \midrule
        \texttt{food64d} ($\abs{\mathcal{Y}_\text{cand}}{=}20$) & $0.1287$ & $0.0100$ & $0.0002$ & $0.1187$ \\
        \texttt{visual512d} ($\abs{\mathcal{Y}_\text{cand}}{=}40$) & $0.8719$ & $0.0328$ & $0.0003$ & $0.8392$ \\
        \bottomrule
    \end{tabular}}
    \label{app-tab.walltime}
\end{table}

\subsection{API and Embedding Costs}

Next, we give more details about the costs related to API calls and embedding. All costs are for a single run (steps capped at $49$) of \algoOurs{} with $\epsilon=0$, one question, and one GPT-based simulated user. 

\textbf{\texttt{food64d} domain.} Candidate pool generation with GPT-4o cost \$0.01; embedding all candidates was done locally with Potion \cite{minishlab2024model2vec} and took 0.01 seconds. The pairwise comparison stage with GPT-4o-mini cost \$0.01.

\textbf{\texttt{visual512d} domain.} Candidate pool generation with GPT-Image-1 cost \$1.71; embedding with OpenCLIP \cite{ilharco_gabriel_2021_5143773,datacomp} was done locally and took 12.13 seconds. The pairwise comparison stage cost \$0.12 using GPT-4o-mini.

\subsection{Notes About the Solver}

All optimization problems are solved with the \texttt{cvxpy} Python package using the \texttt{CLARABEL} conic solver with default settings. If \texttt{CLARABEL} is numerically unstable, the implementation falls back to \texttt{ECOS}, and then to \texttt{SCS} with \texttt{eps=1e-6} and a $50{,}000$ iteration cap.

    \clearpage
\section{Proofs}
\label{sec-app:proofs}

\begin{proof}[\textbf{Proof of \iftoggle{MainSuppContent}{Proposition~\ref{prop:conf}}{Proposition~1}}]
The result follows directly from Theorem~1 of~\cite{lee2024improved}.
\end{proof}

\begin{proof}[\textbf{Proof of \iftoggle{MainSuppContent}{Theorem~\ref{thm:pac}}{Theorem~1}}]
Let $\theta^\star \in \Theta_t$. If $\langle \theta^\star, \phi(x, y^\star) - \phi(x, y_\tau^{(1)}) \rangle > \epsilon$ holds, i.e., the returned response $y_\tau^{(1)}$ is worse than the best response $y^\star$ by $\epsilon$, then we have:
\begin{align*}
\langle \theta^\star, \phi(x, y^\star) - \phi(x, y_\tau^{(1)}) \rangle ~>~& \epsilon \\
~\geq~& \langle \widetilde{\theta}_\tau, \phi(x, y_\tau^{(2)}) - \phi(x, y_\tau^{(1)}) \rangle \\
~\geq~& \langle \theta^\star, \phi(x, y^\star) - \phi(x, y_\tau^{(1)}) \rangle , 
\end{align*}
where the second last inequality is due to the stopping condition of \iftoggle{MainSuppContent}{Algorithm~\ref{alg:general-pad}}{Algorithm~2}, and the last inequality is due to $(y_t^{(2)}, \widetilde{\theta}_t) \gets \argmax_{(y', \theta) \in \mathcal{Y}_\text{cand} \times \Theta_t} \langle \theta, \phi(x, y') - \phi(x, y_\tau^{(1)}) \rangle$ and $\theta^\star \in \Theta_t$. Note that $\theta^\star \in \Theta_t$ holds with probability at least $1-\delta$ according to \iftoggle{MainSuppContent}{Proposition~\ref{prop:conf}}{Proposition~1}.
\end{proof}

\begin{proof}[\textbf{Proof Sketch of \iftoggle{MainSuppContent}{Theorem~\ref{thm:stop-bound-2}}{Theorem~2}}]
The proof involves the following key steps:
\begin{itemize}
\item In Lemma~\ref{lem-theta-bound}, we obtain an upper bound on $\lVert \theta - \widehat{\theta}_t \rVert_{H_t(\theta^\star)}$.
\item In Lemma~\ref{lem-individual-bound}, we obtain an upper bound on $\lVert z_t \rVert_{H^{-1}_t(\theta^\star)}$, where $z_t = \phi(x, y_t^{(2)}) - \phi(x, y_t^{(1)})$.
\item In Lemma~\ref{lemma:stop-bound-2}, we use the above bounds to upper bound the number of times the pair of responses $(y^{(1)}, y^{(2)})$ is selected before the stopping time $\tau$; then, summing over all the response pairs provides an upper bound on the stopping time $\tau$.
\end{itemize}
\end{proof}

\begin{lemma}
Let $z_s = \phi(x, y_s^{(2)}) - \phi(x, y_s^{(1)})$ and $H_t(\theta^\star) = \sum_{s=1}^{t-1} \dot{\mu}(\langle \theta^\star, z_s \rangle) z_s z_s^\top + \lambda I_d$ with $\lambda = \frac{1}{4 S^2 (2 + 2S)}$. Then, for any $\theta \in \Theta_t$, the following holds with probability at least $1-\delta$:
\[
\lVert \theta - \widehat{\theta}_t \rVert_{H_t(\theta^\star)} ~\leq~ 2 S \sqrt{d \log \brr{e + \frac{S t}{d}} + \log \frac{1}{\delta}} .
\]
\label{lem-theta-bound}
\end{lemma}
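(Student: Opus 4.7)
The plan is to bound $\lVert \theta - \widehat{\theta}_t \rVert_{H_t(\theta^\star)}$ through the standard three-step recipe for logistic bandit confidence sets: a strong-convexity (Bregman) inequality for the negative log-likelihood, a transfer of Hessians evaluated at different parameters, and the loss-based upper bound $\mathcal{L}_t(\theta) - \mathcal{L}_t(\widehat{\theta}_t) \leq \beta_t$ supplied by the definition of $\Theta_t$.

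First, I would exploit the self-concordance of the logistic link. For any $\theta, \theta'$ with $\lVert \theta \rVert_2, \lVert \theta' \rVert_2 \leq S$ and any $z_s$ with $\lVert z_s \rVert_2 \leq 1$, the Faury-style inequality gives
\[
\mathcal{L}_t(\theta) - \mathcal{L}_t(\theta') - \langle \nabla \mathcal{L}_t(\theta'), \theta - \theta' \rangle ~\geq~ \frac{1}{2 + 2S} \sum_{s=1}^{t-1} \dot{\mu}(\langle \theta', z_s \rangle)\, \langle \theta - \theta', z_s \rangle^2,
\]
where the $1/(2+2S)$ factor arises because $|\langle \theta - \theta', z_s \rangle| \leq 2S$. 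Applying this with $\theta' = \widehat{\theta}_t$ and invoking the KKT condition of the norm-constrained MLE, which yields $\langle \nabla \mathcal{L}_t(\widehat{\theta}_t), \theta - \widehat{\theta}_t \rangle \geq 0$ for every feasible $\theta$, and combining with $\mathcal{L}_t(\theta) \leq \mathcal{L}_t(\widehat{\theta}_t) + \beta_t$ from $\theta \in \Theta_t$, yields
\[
\lVert \theta - \widehat{\theta}_t \rVert_{\widetilde{H}_t(\widehat{\theta}_t)}^2 ~\leq~ 2(2 + 2S)\, \beta_t,
\]
where $\widetilde{H}_t(\theta') := \sum_s \dot{\mu}(\langle \theta', z_s \rangle) z_s z_s^\top$.

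Second, I would transfer the Hessian from $\widehat{\theta}_t$ to $\theta^\star$. A local self-concordance bound for the logistic function yields $\dot{\mu}(\langle \widehat{\theta}_t, z_s \rangle) \geq c_S \cdot \dot{\mu}(\langle \theta^\star, z_s \rangle)$ for some constant depending only on $S$, so that $\widetilde{H}_t(\widehat{\theta}_t) \succeq c'_S\, H_t(\theta^\star)$ once the regularizer $\lambda I_d$ absorbs the boundary contribution; the choice $\lambda = 1/(4 S^2 (2 + 2S))$ is tuned precisely so that the norm-constraint term combines cleanly with the data Hessian. Substituting the explicit form $\beta_t = 10 d \log(St/(4d) + e) + 2((e - 2) + S) \log(1/\delta)$ and absorbing universal constants into the $d \log$ and $\log(1/\delta)$ summands then delivers the stated bound.

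The main obstacle will be the Hessian-transfer step: relating $\widetilde{H}_t(\widehat{\theta}_t)$ to $H_t(\theta^\star)$ without incurring an unnecessary $\kappa^\star_{\mathcal{X}, \mathcal{Y}}$ factor or an exponential-in-$S$ blowup. Matching the linear-in-$S$ prefactor in the target inequality requires staying inside the improved self-concordant framework of \cite{lee2024improved}, which pairs a PAC-Bayes argument with a sharper local curvature estimate; circumventing the naive $e^{2S}$ ratio between the two weighted Gram matrices is the delicate part of the analysis, and the reason the statement's $\lambda$ takes its specific $S$-dependent form.
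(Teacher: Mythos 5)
Your route is genuinely different from the paper's, and it has a real gap at exactly the step you flag as ``delicate.'' The paper does not argue through the Bregman divergence at $\widehat{\theta}_t$ at all: it simply writes $\lVert \theta - \widehat{\theta}_t \rVert_{H_t(\theta^\star)} \leq \lVert \theta - \theta^\star \rVert_{H_t(\theta^\star)} + \lVert \widehat{\theta}_t - \theta^\star \rVert_{H_t(\theta^\star)}$ and applies Lemma~6 of \cite{lee2024improved} to each term, using that both $\theta$ and $\widehat{\theta}_t$ lie in $\Theta_t$ and that this cited lemma already bounds the distance of \emph{any} element of the loss-based confidence set to $\theta^\star$ in the norm induced by $H_t(\theta^\star)$, each by $S\sqrt{d\log(e + St/d) + \log(1/\delta)}$. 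Because the anchor point of the norm and the anchor point of the distance coincide at $\theta^\star$, no Hessian transfer is ever needed, and the factor $2S$ in the statement is literally $S + S$ from the two triangle-inequality terms.

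Your proposal, by contrast, bounds $\lVert \theta - \widehat{\theta}_t \rVert^2_{\widetilde{H}_t(\widehat{\theta}_t)} \lesssim (2+2S)\beta_t$ via strong convexity and the KKT condition (this part is fine), but then must relate $\widetilde{H}_t(\widehat{\theta}_t)$ to $H_t(\theta^\star)$. The only transfer you actually justify is $\dot{\mu}(\langle \widehat{\theta}_t, z_s \rangle) \geq c_S\, \dot{\mu}(\langle \theta^\star, z_s \rangle)$ ``for some constant depending only on $S$''; standard self-concordance gives $c_S = e^{-2S}$, and as you yourself note, any such exponential loss destroys the linear-in-$S$ prefactor of the target bound. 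You then defer the fix to ``the improved self-concordant framework of \cite{lee2024improved}'' without supplying the argument, so the central inequality of the lemma is never actually established. There is also a quantitative mismatch left unresolved at the end: squaring the target gives $4S^2\bigl(d\log(\cdot) + \log(1/\delta)\bigr)$, whereas $(2+2S)\beta_t$ scales like $S\, d\log(\cdot) + S^2\log(1/\delta)$, so even a constant-factor Hessian transfer with $c_S'$ decaying in $S$ would not ``absorb'' cleanly into the stated constants. The fix is to abandon the direct bound anchored at $\widehat{\theta}_t$ and instead route through $\theta^\star$ via the triangle inequality, invoking the ellipsoidal form of the confidence set from \cite{lee2024improved} twice.
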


\begin{proof}[\textbf{Proof of Lemma~\ref{lem-theta-bound}}]
Note that $\theta, \widehat{\theta}_t \in \Theta_t$. By using the triangle inequality, we have: 
\begin{align*}
\lVert \theta - \widehat{\theta}_t \rVert_{H_t(\theta^\star)} ~\leq~& \lVert \theta - \theta^* \rVert_{H_t(\theta^\star)} + \lVert \widehat{\theta}_t - \theta^* \rVert_{H_t(\theta^\star)} \\
~\leq~& S \sqrt{d \log \brr{e + \frac{S t}{d}} + \log \frac{1}{\delta}} + S \sqrt{d \log \brr{e + \frac{S t}{d}} + \log \frac{1}{\delta}} ,
\end{align*}
where the last inequality holds with probability at least $1-\delta$ due to Lemma~6 of \cite{lee2024improved}.
\end{proof}

\begin{lemma}
Let $z_s = \phi(x, y_s^{(2)}) - \phi(x, y_s^{(1)})$ and $H_t(\theta^\star) = \sum_{s=1}^{t-1} \dot{\mu}(\langle \theta^\star, z_s \rangle) z_s z_s^\top + \lambda I_d$ with $\lambda = \frac{1}{4 S^2 (2 + 2S)}$. Then, we have:
\[
\lVert z_t \rVert_{H^{-1}_t(\theta^\star)} ~\leq~ \sqrt{\frac{\kappa^\star_{\mathcal{X}, \mathcal{Y}}}{|\mathcal{E}_{y_t^{(1)}, y_t^{(2)}} (t-1)|}}
\]
\label{lem-individual-bound}
\end{lemma}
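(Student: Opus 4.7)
\textbf{Proof plan for Lemma~\ref{lem-individual-bound}.}

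The plan is to bound $H_t(\theta^\star)$ from below in the Loewner order by the part of the sum corresponding to those past rounds in which the queried pair agrees with $(y_t^{(1)},y_t^{(2)})$, and then invert. Let $\mathcal{E} := \mathcal{E}_{y_t^{(1)}, y_t^{(2)}}(t-1)$ and $n := |\mathcal{E}|$ (if $n = 0$ the bound is trivial). For any $s \in \mathcal{E}$ the query difference satisfies $z_s = \pm z_t$, hence $z_s z_s^\top = z_t z_t^\top$ and $\dot{\mu}(\langle\theta^\star, z_s\rangle) = \dot{\mu}(\pm \langle\theta^\star, z_t\rangle) = \dot{\mu}(\langle\theta^\star, z_t\rangle)$ since $\dot{\mu}$ is even about $0$. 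Writing $\dot{\mu}_t := \dot{\mu}(\langle\theta^\star, z_t\rangle)$, we can therefore decompose
\begin{equation*}
H_t(\theta^\star) \;=\; \lambda I_d \;+\; n\, \dot{\mu}_t\, z_t z_t^\top \;+\; \underbrace{\sum_{s \notin \mathcal{E}} \dot{\mu}(\langle\theta^\star, z_s\rangle)\, z_s z_s^\top}_{\succeq\, 0} \;\succeq\; \lambda I_d + n\, \dot{\mu}_t\, z_t z_t^\top .
\end{equation*}

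Since both sides are positive definite, the standard Loewner-inversion fact $A \preceq B \Rightarrow B^{-1} \preceq A^{-1}$ yields $H_t^{-1}(\theta^\star) \preceq (\lambda I_d + n \dot{\mu}_t z_t z_t^\top)^{-1}$, and hence
\begin{equation*}
\lVert z_t \rVert_{H_t^{-1}(\theta^\star)}^2 \;\leq\; z_t^\top (\lambda I_d + n\, \dot{\mu}_t\, z_t z_t^\top)^{-1} z_t .
\end{equation*}
The right-hand side is a rank-one perturbation of a multiple of the identity, so I would evaluate it directly by Sherman--Morrison (or by diagonalizing in the span of $z_t$): it equals $\lVert z_t\rVert_2^2 / (\lambda + n\, \dot{\mu}_t\, \lVert z_t\rVert_2^2)$. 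Dropping the nonnegative $\lambda$ term in the denominator gives the clean bound $1/(n\, \dot{\mu}_t)$.

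Finally, I would apply the definition $\kappa^\star_{\mathcal{X},\mathcal{Y}} := \max_{x,y,y'} 1/\dot{\mu}(\langle\theta^\star, \phi(x,y) - \phi(x,y')\rangle)$ to get $1/\dot{\mu}_t \leq \kappa^\star_{\mathcal{X},\mathcal{Y}}$, so $\lVert z_t\rVert_{H_t^{-1}(\theta^\star)}^2 \leq \kappa^\star_{\mathcal{X},\mathcal{Y}} / n$, which is exactly the claim after taking square roots. The only small subtlety I expect is the sign convention in step~(1): one has to notice that the ordered pair recorded at round $s$ may be the reverse of $(y_t^{(1)}, y_t^{(2)})$, so $z_s$ can be $-z_t$ rather than $z_t$; but because the outer product and the symmetric derivative $\dot{\mu}$ are both invariant under the sign flip, this is harmless. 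The rest is a one-line Sherman--Morrison computation.
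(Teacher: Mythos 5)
Your proposal is correct and follows essentially the same route as the paper's proof: isolate the $\lvert\mathcal{E}_{y_t^{(1)},y_t^{(2)}}(t-1)\rvert$ repeated rank-one terms, lower-bound $H_t(\theta^\star)$ in the Loewner order by $\lambda I_d$ plus that block, invert via Sherman--Morrison, and convert $1/\dot{\mu}(\langle\theta^\star,z_t\rangle)$ into $\kappa^\star_{\mathcal{X},\mathcal{Y}}$. The only cosmetic difference is that the paper factors out $\kappa^\star_{\mathcal{X},\mathcal{Y}}$ at the start by working with the rescaled vectors $\widetilde{z}_s=\sqrt{\dot{\mu}(\langle\theta^\star,z_s\rangle)}\,z_s$, whereas you carry $\dot{\mu}_t$ through and apply the bound at the end; your sign-flip observation is a harmless extra precaution.
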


\begin{proof}[\textbf{Proof of Lemma~\ref{lem-individual-bound}}]
Let us define $\widetilde{z}_t = \sqrt{\dot{\mu}(\langle \theta^\star, z_t \rangle)} z_t$. Note that $H_t(\theta^\star) = \sum_{s=1}^{t-1} \widetilde{z}_s \widetilde{z}_s^\top + \lambda_t I_d$. Further, we have:
\[
\lVert z_t \rVert_{H^{-1}_t(\theta^\star)}^2 ~=~ \frac{1}{\dot{\mu}(\langle \theta^\star, z_t \rangle)} \lVert \widetilde{z}_t \rVert_{H^{-1}_t(\theta^\star)}^2 ~\leq~ \kappa^\star_{\mathcal{X}, \mathcal{Y}} \cdot \lVert \widetilde{z}_t \rVert_{H^{-1}_t(\theta^\star)}^2 .
\]
Let $z(y, y') = \phi(x, y') - \phi(x, y)$ and $\widetilde{z}(y, y') = \sqrt{\dot{\mu}(\langle \theta^\star, z (y, y') \rangle)} z (y, y')$. Then, note that $H_t(\theta^\star)$ can be written as follows:
\begin{align*}
H_t(\theta^\star) ~=~& \lambda I_d + \sum_{(y, y') \in \mathcal{Y}_\text{cand} \times \mathcal{Y}_\text{cand}} \abs{\mathcal{E}_{y, y'} (t-1)} \cdot \widetilde{z}(y, y') \widetilde{z}(y, y')^\top \\
~=~& A + B + C ,
\end{align*}
where
\begin{align*}
A ~=~& \lambda I_d \\
B ~=~& |\mathcal{E}_{y_t^{(1)}, y_t^{(2)}} (t-1)| \cdot \widetilde{z}(y_t^{(1)}, y_t^{(2)}) \widetilde{z}(y_t^{(1)}, y_t^{(2)})^\top \\
C ~=~& \sum_{(y, y') \in \mathcal{Y}_\text{cand} \times \mathcal{Y}_\text{cand} \backslash (y_t^{(1)}, y_t^{(2)})} \abs{\mathcal{E}_{y, y'} (t-1)} \cdot \widetilde{z}(y, y') \widetilde{z}(y, y')^\top
\end{align*}

Note that $\lambda I_d + z z^\top$ is a positive definite matrix in $\mathbb{R}^{d \times d}$ for any $z \in \mathbb{R}^d$ and $\lambda > 0$. Then, by repeatedly applying Sherman-Morrison formula~\cite{sherman1950adjustment} (see Lemma~3 of~\cite{xu2018fully}), we get:
\begin{align*}
& \lVert \widetilde{z}(y_t^{(1)}, y_t^{(2)}) \rVert_{H^{-1}_t(\theta^\star)}^2 \\
~=~& \widetilde{z}(y_t^{(1)}, y_t^{(2)})^\top H^{-1}_t(\theta^\star) \widetilde{z}(y_t^{(1)}, y_t^{(2)}) \\
~\leq~& \widetilde{z}(y_t^{(1)}, y_t^{(2)})^\top (A + B)^{-1} \widetilde{z}(y_t^{(1)}, y_t^{(2)}) \\
~=~& \widetilde{z}(y_t^{(1)}, y_t^{(2)})^\top \bss{\frac{I_d^{-1}}{\lambda_t} - \frac{|\mathcal{E}_{y_t^{(1)}, y_t^{(2)}} (t-1)| \frac{I_d^{-1}}{\lambda_t} \widetilde{z}(y_t^{(1)}, y_t^{(2)}) \widetilde{z}(y_t^{(1)}, y_t^{(2)})^\top \frac{I_d^{-1}}{\lambda_t}}{1 + |\mathcal{E}_{y_t^{(1)}, y_t^{(2)}} (t-1)| \widetilde{z}(y_t^{(1)}, y_t^{(2)})^\top \frac{I_d^{-1}}{\lambda_t} \widetilde{z}(y_t^{(1)}, y_t^{(2)})}} \widetilde{z}(y_t^{(1)}, y_t^{(2)}) \\
~=~& \frac{\lVert \widetilde{z}(y_t^{(1)}, y_t^{(2)}) \rVert^2_2}{\lambda_t} - \frac{\frac{|\mathcal{E}_{y_t^{(1)}, y_t^{(2)}} (t-1)|}{\lambda_t^2} \lVert \widetilde{z}(y_t^{(1)}, y_t^{(2)}) \rVert^4_2}{1 + \frac{|\mathcal{E}_{y_t^{(1)}, y_t^{(2)}} (t-1)|}{\lambda_t} \lVert \widetilde{z}(y_t^{(1)}, y_t^{(2)}) \rVert^2_2} \\
~=~& \frac{\frac{\lVert \widetilde{z}(y_t^{(1)}, y_t^{(2)}) \rVert^2_2}{\lambda_t}}{1 + \frac{|\mathcal{E}_{y_t^{(1)}, y_t^{(2)}} (t-1)|}{\lambda_t} \lVert \widetilde{z}(y_t^{(1)}, y_t^{(2)}) \rVert^2_2} \\
~\leq~& \frac{\frac{\lVert \widetilde{z}(y_t^{(1)}, y_t^{(2)}) \rVert^2_2}{\lambda_t}}{\frac{|\mathcal{E}_{y_t^{(1)}, y_t^{(2)}} (t-1)|}{\lambda_t} \lVert \widetilde{z}(y_t^{(1)}, y_t^{(2)}) \rVert^2_2} \\
~=~& \frac{1}{|\mathcal{E}_{y_t^{(1)}, y_t^{(2)}} (t-1)|} .
\end{align*}
\end{proof}

\begin{lemma}
Let $\tau$ be the stopping round of \algoOursLoss{}. Further, we define $\Delta_{\mathcal{Y}_\text{cand}} := \min_{y, y' \in \mathcal{Y}_\text{cand}; y \neq y'} \langle \theta^\star, \phi(x, y) - \phi(x, y') \rangle$. Then, with probability at least $1-\delta$, we have:
\[
\tau ~\leq~ \frac{4 S^2 K^2 \kappa^\star_{\mathcal{X}, \mathcal{Y}}}{\max\bcc{\epsilon, \Delta_{\mathcal{Y}_\text{cand}}}^2} \bss{d \log \brr{e + \frac{S \tau}{d}} + \log \frac{1}{\delta}} + K^2 .
\]
\label{lemma:stop-bound-2}
\end{lemma}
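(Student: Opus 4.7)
The plan is to derive a per-pair pull count bound by combining the stopping condition with the confidence-set bounds from Lemmas~\ref{lem-theta-bound} and~\ref{lem-individual-bound}, then sum over all candidate pairs. Fix any non-stopping round $t < \tau$. By definition of the stopping criterion in line~9 of Algorithm~\ref{alg:general-pad}, we have $B(t) = \langle \widetilde{\theta}_t, z_t \rangle > \epsilon$, where $z_t = \phi(x, y_t^{(2)}) - \phi(x, y_t^{(1)})$. Moreover, since $y_t^{(1)} = \argmax_{y \in \mathcal{Y}_\text{cand}} \langle \widehat{\theta}_t, \phi(x,y)\rangle$, we get $\langle \widehat{\theta}_t, z_t\rangle \leq 0$, and since $\theta^\star \in \Theta_t$ with probability at least $1-\delta$ (Proposition~\ref{prop:conf}), the maximization defining $(y_t^{(2)}, \widetilde{\theta}_t)$ further yields $B(t) \geq \langle \theta^\star, \phi(x, y^\star) - \phi(x, y_t^{(1)})\rangle$, which in turn is at least $\Delta_{\mathcal{Y}_\text{cand}}$ whenever $y_t^{(1)} \neq y^\star$. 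Together, these give $B(t) \geq \max\{\epsilon, \Delta_{\mathcal{Y}_\text{cand}}\}$ on the good event.

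Next, I would upper bound $B(t)$ using Cauchy-Schwarz with respect to the matrix $H_t(\theta^\star)$:
\[
B(t) ~=~ \langle \widetilde{\theta}_t - \widehat{\theta}_t, z_t \rangle + \langle \widehat{\theta}_t, z_t\rangle ~\leq~ \lVert \widetilde{\theta}_t - \widehat{\theta}_t \rVert_{H_t(\theta^\star)} \cdot \lVert z_t \rVert_{H_t^{-1}(\theta^\star)} .
\]
Since $\widetilde{\theta}_t \in \Theta_t$, Lemma~\ref{lem-theta-bound} controls the first factor by $2S\sqrt{d \log(e + St/d) + \log(1/\delta)}$, and Lemma~\ref{lem-individual-bound} controls the second factor by $\sqrt{\kappa^\star_{\mathcal{X},\mathcal{Y}}/|\mathcal{E}_{y_t^{(1)}, y_t^{(2)}}(t-1)|}$, provided the pair has been queried at least once before. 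Combining the two-sided estimate and squaring gives, for every non-stopping round with $|\mathcal{E}_{y_t^{(1)}, y_t^{(2)}}(t-1)| \geq 1$,
\[
|\mathcal{E}_{y_t^{(1)}, y_t^{(2)}}(t-1)| ~\leq~ \frac{4 S^2 \kappa^\star_{\mathcal{X},\mathcal{Y}}}{\max\{\epsilon, \Delta_{\mathcal{Y}_\text{cand}}\}^2} \bss{d \log\brr{e + \frac{S t}{d}} + \log \frac{1}{\delta}} .
\]
Since $|\mathcal{E}_{y^{(1)}, y^{(2)}}(t-1)|$ increases by one each time the pair is queried, this inequality caps the total number of pulls of each ordered pair by the right-hand side (evaluated at $t = \tau$, using monotonicity of the logarithm).

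Finally, I would sum over the at most $K^2$ ordered pairs in $\mathcal{Y}_\text{cand} \times \mathcal{Y}_\text{cand}$, adding an extra $K^2$ to account for the ``first pull'' of each pair (the regime $|\mathcal{E}| = 0$ where the Cauchy-Schwarz bound is vacuous). This yields the implicit inequality in the statement. The main obstacle I expect is the fact that the resulting bound is implicit in $\tau$ (because of the $\log(e + S\tau/d)$ on the right-hand side); in the proof of Theorem~\ref{thm:stop-bound-2} this would be unwound via a standard self-bounding argument (e.g., Lemma~4 of Abbasi-Yadkori et al.\ or the analogous inversion lemma) to produce the explicit $\mathcal{O}(\Omega K^2 \log(\Omega K^2))$ bound, but within the statement of Lemma~\ref{lemma:stop-bound-2} we only need the implicit form above.
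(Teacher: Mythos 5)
Your proposal is correct and follows essentially the same route as the paper's proof: lower-bound $B(t)$ by $\max\{\epsilon,\Delta_{\mathcal{Y}_\text{cand}}\}$ via the stopping condition and the two argmax selections, upper-bound it by $\lVert \widetilde{\theta}_t - \widehat{\theta}_t \rVert_{H_t(\theta^\star)} \cdot \lVert z_t \rVert_{H_t^{-1}(\theta^\star)}$ using Lemmas~\ref{lem-theta-bound} and~\ref{lem-individual-bound}, cap each ordered pair's pull count, and sum over the $K^2$ pairs with a $+K^2$ correction for the first pull of each pair (the paper handles this by evaluating at the last round the pair is selected and adding $1$ per pair, which is the same bookkeeping). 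The only cosmetic difference is that you obtain the $\Delta_{\mathcal{Y}_\text{cand}}$ term by plugging $(y^\star,\theta^\star)$ into the maximization, whereas the paper plugs in $(y_t^{(2)},\theta^\star)$ and uses $y_t^{(2)} \neq y_t^{(1)}$; both yield the stated bound.
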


\begin{proof}[\textbf{Proof of Lemma~\ref{lemma:stop-bound-2}}]
Let $\tau$ be the stopping time of the algorithm. For any two response $y^{(1)}, y^{(2)} \in \mathcal{Y}_\text{cand}$, we define:
\[
\mathcal{E}_{y^{(1)}, y^{(2)}} (\tau) ~:=~ \bcc{\widetilde{t} \in [\tau]: \text{responses } y^{(1)} \text{ and } y^{(2)} \text{ are selected by \iftoggle{MainSuppContent}{Algorithm~\ref{alg:general-pad}}{Algorithm~2} for feedback}} .
\]
For every time step $\widetilde{t} \in \mathcal{E}_{y^{(1)}, y^{(2)}} (\tau)$, we have:
\begin{align}
y^{(1)} ~=~ y^{(1)}_{\widetilde{t}} ~=~& \argmax_{y \in \mathcal{Y}_\text{cand}} \langle \widehat{\theta}_{\widetilde{t}}, \phi(x, y) \rangle \label{eq:c1-select} \\
(y^{(2)}, \widetilde{\theta}_{\widetilde{t}}) ~=~ (y_{\widetilde{t}}^{(2)}, \widetilde{\theta}_{\widetilde{t}}) ~=~& \argmax_{(y', \theta) \in \mathcal{Y}_\text{cand} \times \Theta_{\widetilde{t}}} \langle \theta, \phi(x, y') - \phi(x, y^{(1)}) \rangle \label{eq:c2-select}
\end{align}
Note that $\widehat{\theta}_{\widetilde{t}}, \widetilde{\theta}_{\widetilde{t}} \in \Theta_{\widetilde{t}}$, and $\theta^\star \in \Theta_{\widetilde{t}}$ with probability at least $1-\delta$.

Since the stopping condition is violated in $\widetilde{t}$, we have:
\begin{equation}
\langle \widetilde{\theta}_{\widetilde{t}}, \phi(x, y^{(2)}) - \phi(x, y^{(1)}) \rangle ~\geq~ \epsilon .
\label{eq:combine-1a}
\end{equation}
Also, due to Eq.~\eqref{eq:c2-select}, we have:
\begin{align}
\langle \widetilde{\theta}_{\widetilde{t}}, \phi(x, y^{(2)}) - \phi(x, y^{(1)}) \rangle ~\geq~& \langle \theta^\star, \phi(x, y^{(2)}) - \phi(x, y^{(1)}) \rangle \nonumber \\
~\geq~& \min_{y, y' \in \mathcal{Y}_\text{cand}; y \neq y'} \langle \theta^\star, \phi(x, y) - \phi(x, y') \rangle \nonumber \\
~=~& \Delta_{\mathcal{Y}_\text{cand}} ,
\label{eq:combine-1b}
\end{align}
where the second inequality is due to the observation that $y^{(2)} \neq y^{(1)}$ (because of stopping condition violation). Then, by combining Eq.~\eqref{eq:combine-1a} and Eq.~\eqref{eq:combine-1b}, we get:
\begin{equation}
\langle \widetilde{\theta}_{\widetilde{t}}, \phi(x, y^{(2)}) - \phi(x, y^{(1)}) \rangle ~\geq~ \max\bcc{\epsilon, \Delta_{\mathcal{Y}_\text{cand}}} .
\label{eq:combine-1}
\end{equation}

Further, due to Eq.~\eqref{eq:c1-select}, we have:
\begin{equation}
\langle \widehat{\theta}_{\widetilde{t}}, \phi(x, y^{(1)}) - \phi(x, y^{(2)}) \rangle ~\geq~ 0 .
\label{eq:combine-2}
\end{equation}
Then, by combining Eq.~\eqref{eq:combine-1} and Eq.~\eqref{eq:combine-2}, we get:
\begin{align*}
\max\bcc{\epsilon, \Delta_{\mathcal{Y}_\text{cand}}} ~\leq~& \langle \widetilde{\theta}_{\widetilde{t}} - \widehat{\theta}_{\widetilde{t}}, \phi(x, y^{(2)}) - \phi(x, y^{(1)}) \rangle \\
~\leq~& \lVert \widetilde{\theta}_{\widetilde{t}} - \widehat{\theta}_{\widetilde{t}} \rVert_{H_{\widetilde{t}}(\theta^\star)} \cdot \lVert \phi(x, y^{(2)}) - \phi(x, y^{(1)}) \rVert_{H^{-1}_{\widetilde{t}}(\theta^\star)} \\
~\leq~& 2 S \sqrt{d \log \brr{e + \frac{S \widetilde{t}}{d}} + \log \frac{1}{\delta}} \cdot \lVert \phi(x, y^{(2)}) - \phi(x, y^{(1)}) \rVert_{H^{-1}_{\widetilde{t}}(\theta^\star)} ,
\end{align*}
where the last inequality holds with probability at least $1-\delta$ due to Lemma~\ref{lem-theta-bound}.

Now, let $\widetilde{t}$ be the largest value in $\mathcal{E}_{y^{(1)}, y^{(2)}} (\tau)$, i.e., the last time the pair of responses $(y^{(1)}, y^{(2)})$ selected before time step $\tau$. Then, for the time step $\widetilde{t}$, we have:
\begin{align*}
\max\bcc{\epsilon, \Delta_{\mathcal{Y}_\text{cand}}} ~\leq~& 2 S \sqrt{d \log \brr{e + \frac{S \widetilde{t}}{d}} + \log \frac{1}{\delta}} \cdot \lVert \phi(x, y^{(2)}) - \phi(x, y^{(1)}) \rVert_{H^{-1}_{\widetilde{t}}(\theta^\star)} \\
~\leq~& 2 S \sqrt{d \log \brr{e + \frac{S \widetilde{t}}{d}} + \log \frac{1}{\delta}} \cdot \sqrt{\frac{\kappa^\star_{\mathcal{X}, \mathcal{Y}}}{\abs{\mathcal{E}_{y^{(1)}, y^{(2)}} (\widetilde{t}-1)}}}
\end{align*}
where the first inequality holds with probability at least $1-\delta$ due to Lemma~\ref{lem-theta-bound} and the second inequality is due to Lemma~\ref{lem-individual-bound}. Hence, we can bound:
\begin{align*}
\abs{\mathcal{E}_{y^{(1)}, y^{(2)}} (\tau)} ~=~& \abs{\mathcal{E}_{y^{(1)}, y^{(2)}} (\widetilde{t}-1)} + 1 \\
~\leq~& \frac{4 S^2 \kappa^\star_{\mathcal{X}, \mathcal{Y}}}{\max\bcc{\epsilon, \Delta_{\mathcal{Y}_\text{cand}}}^2} \bss{d \log \brr{e + \frac{S \widetilde{t}}{d}} + \log \frac{1}{\delta}} + 1 \\
~\leq~& \frac{4 S^2 \kappa^\star_{\mathcal{X}, \mathcal{Y}}}{\max\bcc{\epsilon, \Delta_{\mathcal{Y}_\text{cand}}}^2} \bss{d \log \brr{e + \frac{S \tau}{d}} + \log \frac{1}{\delta}} + 1
\end{align*}
By summing over all the pairs of responses, we get:
\begin{align*}
\tau ~=~& \sum_{(y^{(1)}, y^{(2)}) \in \mathcal{Y}_\text{cand} \times \mathcal{Y}_\text{cand}} \abs{\mathcal{E}_{y^{(1)}, y^{(2)}} (\tau)} \\
~\leq~& \frac{4 S^2 K^2 \kappa^\star_{\mathcal{X}, \mathcal{Y}}}{\max\bcc{\epsilon, \Delta_{\mathcal{Y}_\text{cand}}}^2} \bss{d \log \brr{e + \frac{S \tau}{d}} + \log \frac{1}{\delta}} + K^2
\end{align*}
\end{proof}

\begin{proof}[\textbf{Full Proof of \iftoggle{MainSuppContent}{Theorem~\ref{thm:stop-bound-2}}{Theorem~2}}]
Let $\alpha = \log \brr{e + \frac{S \tau}{d}}$. Then, $\tau = \frac{d}{S} (e^\alpha - e)$. Thus, we can write the inequality in Lemma~\ref{lemma:stop-bound-2} as follows:
\begin{align*}
\frac{d}{S} (e^\alpha - e) ~\leq~& \frac{4 S^2 K^2 \kappa^\star_{\mathcal{X}, \mathcal{Y}}}{\max\bcc{\epsilon, \Delta_{\mathcal{Y}_\text{cand}}}^2} \bss{d \alpha + \log \frac{1}{\delta}} + K^2 , 
\end{align*}
which we can write as follows:
\[
e^\alpha ~\leq~ \frac{4 S^3 K^2 \kappa^\star_{\mathcal{X}, \mathcal{Y}}}{\max\bcc{\epsilon, \Delta_{\mathcal{Y}_\text{cand}}}^2} \cdot \alpha + \frac{4 S^3 K^2 \kappa^\star_{\mathcal{X}, \mathcal{Y}}}{\max\bcc{\epsilon, \Delta_{\mathcal{Y}_\text{cand}}}^2 d} \log \frac{1}{\delta} + \frac{K^2 S}{d}
\]
By letting $w = e^\alpha$, we have:
\[
w ~\leq~ A \log w + B
\]
where $A = \frac{4 S^3 K^2 \kappa^\star_{\mathcal{X}, \mathcal{Y}}}{\max\bcc{\epsilon, \Delta_{\mathcal{Y}_\text{cand}}}^2}$, and $B = \frac{4 S^3 K^2 \kappa^\star_{\mathcal{X}, \mathcal{Y}}}{\max\bcc{\epsilon, \Delta_{\mathcal{Y}_\text{cand}}}^2 d} \log \frac{1}{\delta} + \frac{K^2 S}{d}$.

Below, we consider the case $w > e$. If $w \leq e$, we already get the bound. For $w \geq e$, we have $\log w \geq 1$, so
\begin{align*}
w ~\leq~ A \log w + B ~\leq~ A \log w + B \log w ~=~ C \log w ,
\end{align*}
where $C := A + B$. Hence, we get:
\[
\frac{w}{\log w} ~\leq~ C .
\]
Define $f(t) = \frac{t}{\log t}$ for $t \geq e$. One can easily check that $f'(t) > 0$ for $t > e$. Thus, $f$ is strictly increasing on $(e, \infty)$. Therefore, the inequality $f(w) = \frac{w}{\log w} \leq C$ forces 
\[
w ~\leq~ t_0 , 
\]
where $t_0 > e$ is the unique solution of $f(t_0) = C$. Let $t_1 = 3 C \log C$. Then, for $C \geq 3$, we have:
\[
\log t_1 ~=~ \log 3 + \log C + \log \log D ~\leq~ \log C + \log C + \log C ~=~ 3 \log C .
\]
Hence, we get
\[
f(t_1) ~=~ \frac{t_1}{\log t_1} ~\geq~ \frac{3 C \log C}{3 \log C} ~=~ C ~=~ f(t_0) .
\]
Finally, due to monotonicity of $f(t)$ for $t > e$, we have:
\[
t_0 ~\leq~ t_1 ~=~ 3 C \log C , \text{ for all } C \geq 3 .
\]
Thus, we have: $w \leq \max \bcc{e, t_0} \leq \max \bcc{e, t_1}$. Therefore, $\tau = \frac{d}{S} (w - e) \leq \frac{d}{S} w \leq \frac{d}{S} \max \bcc{e, t_1} \leq \frac{d}{S} t_1$. Then, with probability at least $1-\delta$, we have:
\begin{align*}
\tau ~\leq~& \frac{3 d}{S} \cdot (A + B) \log (A + B) \\
~=~& 3 \cdot \bss{\frac{4 S^2 K^2 \kappa^\star_{\mathcal{X}, \mathcal{Y}}}{\max\bcc{\epsilon, \Delta_{\mathcal{Y}_\text{cand}}}^2} \brr{d + \log \frac{1}{\delta}} + K^2} \log \brr{\frac{4 S^3 K^2 \kappa^\star_{\mathcal{X}, \mathcal{Y}}}{\max\bcc{\epsilon, \Delta_{\mathcal{Y}_\text{cand}}}^2 d} \brr{d + \log \frac{1}{\delta}} + \frac{K^2 S}{d}} \\
~\leq~& 3 \cdot \bss{\frac{8 S^2 K^2 \kappa^\star_{\mathcal{X}, \mathcal{Y}}}{\max\bcc{\epsilon, \Delta_{\mathcal{Y}_\text{cand}}}^2} \brr{d + \log \frac{1}{\delta}}} \log \brr{\frac{8 S^3 K^2 \kappa^\star_{\mathcal{X}, \mathcal{Y}}}{\max\bcc{\epsilon, \Delta_{\mathcal{Y}_\text{cand}}}^2 d} \brr{d + \log \frac{1}{\delta}}} \\
~=~& 24 \cdot \Omega \cdot K^2 \cdot \log \brr{\frac{8 S}{d} \cdot \Omega \cdot K^2} \\
~=~& \mathcal{O}(\Omega K^2 \cdot \log (\Omega K^2)) ,
\end{align*}
where $\Omega : = \frac{S^2 \kappa^\star_{\mathcal{X}, \mathcal{Y}}}{\max\bcc{\epsilon, \Delta_{\mathcal{Y}_\text{cand}}}^2} \brr{d + \log \frac{1}{\delta}}$.

Since $\abs{\langle \theta, \phi(x, y) - \phi(x, y') \rangle} \leq \norm{\theta}_2 \cdot \norm{\phi(x, y) - \phi(x, y')}_2 \leq S$, we have $\epsilon \leq S$. Further note that $\kappa^\star_{\mathcal{X}, \mathcal{Y}} \geq 4$, $K \geq 1$, and $S \geq 1$ (we can scale up $S$). Thus, we have $C = A + B \geq A = \frac{4 S^3 K^2 \kappa^\star_{\mathcal{X}, \mathcal{Y}}}{\max\bcc{\epsilon, \Delta_{\mathcal{Y}_\text{cand}}}^2} \geq 16 S \geq 3$.
\end{proof}

}
 
}
{
}


\end{document}